\definecolor{gray90}{gray}{0.9}
\newtheorem{proposition}{Proposition}
\newtheorem{theorem}{Theorem}
\newtheorem{lemma}{Lemma}
\theoremstyle{definition}
\newtheorem{definition}{Definition}
\newcommand{\nystrom}{Nystr$\mathrm{\ddot{o}}$m}
\newcommand{\mframe}{EasyDGL}
\newcommand{\mname}{EasyDGL}
\newcommand{\mreg}{TPPLE}
\newcommand{\mssl}{CaM}
\newcommand{\reviewerone}[1]{{\color{black} #1}}
\newcommand{\reviewertwo}[1]{{\color{black} #1}}
\begin{document}
	
\title{EasyDGL: Encode, Train and Interpret for Continuous-time Dynamic Graph Learning}
\author{
Chao~Chen,~\IEEEmembership{Member,~IEEE,}
Haoyu~Geng,~\IEEEmembership{Student Member,~IEEE,}
Nianzu~Yang,~\IEEEmembership{Student Member,~IEEE,}
Xiaokang~Yang,~\IEEEmembership{Fellow,~IEEE},
and Junchi~Yan,~\IEEEmembership{Senior Member,~IEEE}
		
\IEEEcompsocitemizethanks{
\IEEEcompsocthanksitem
C. Chen, H. Geng, N. Yang, X, Yang and J. Yan are with  Department of Computer Science and Engineering and MoE Key Lab of Artificial Intelligence, Shanghai Jiao Tong University, Shanghai, 200240, China. J. Yan is also with School of Artificial Intelligence, Shanghai Jiao Tong University, Shanghai, 200030, China.
E-mail: \{chao.chen, genghaoyu98, yangnianzu, xkyang, yanjunchi\}@sjtu.edu.cn.
\IEEEcompsocthanksitem   Correspondence author: Junchi Yan.
}
}

\IEEEtitleabstractindextext{
\begin{abstract}
Dynamic graphs arise in various real-world applications, and it is often welcomed to model the dynamics in continuous time domain for its flexibility. This paper aims to design an easy-to-use pipeline (EasyDGL which is also due to its implementation by DGL toolkit) composed of three modules with both strong fitting ability and interpretability, namely encoding, training and interpreting: i) a temporal point process (TPP) modulated attention architecture to endow the continuous-time resolution with the coupled spatiotemporal dynamics of the graph with edge-addition events; ii) a principled loss composed of task-agnostic TPP posterior maximization based on observed events, and a task-aware loss with a masking strategy over dynamic graph, where the tasks include dynamic link prediction, dynamic node classification and node traffic forecasting; iii) interpretation of the  outputs (e.g., representations and predictions) with scalable perturbation-based quantitative analysis in the graph Fourier domain, which could comprehensively reflect the behavior of the learned model. Empirical results on public benchmarks show our superior performance for time-conditioned predictive tasks, and in particular EasyDGL can effectively quantify the predictive power of frequency content that a model learns from evolving graph data.
\end{abstract}
\begin{IEEEkeywords}
continuous-time dynamic graph, graph attention networks, temporal point process, graph signal processing
\end{IEEEkeywords}
}
	
\maketitle	
\IEEEdisplaynontitleabstractindextext
\IEEEpeerreviewmaketitle

\IEEEraisesectionheading{
\section{Introduction}\label{sec:intro}}
\subsection{Background and Challenges}
\IEEEPARstart{D}ynamic graph representation learning (DGRL) in general aims to obtain node vector embeddings over time $t$ (ideally in continuous time domain, i.e., $t\!\in\!\mathbb{R}$) on a graph of which the topological structures and the node attributes are evolving as shown in Fig.~\ref{fig:dyg}. Under such a general paradigm, tailored models are often devised and trained according to specific (time-conditioned) predictive tasks, ranging from (dynamic) node classification, link prediction, to traffic flow forecasting. These tasks have found widespread application in social networks, bioinformatics, transport, etc.

In practise, a general DGRL pipeline should consist of the encoding framework whose inputs include the graph information and the observed events, the learning scheme with the derived loss according to specific tasks at hand. In addition, the interpretation of model output is often of high interest in real-world applications. Our following discussion further elaborates on these three aspects.
\begin{figure}[tb!]
\centerline{\includegraphics[width=.42\textwidth]{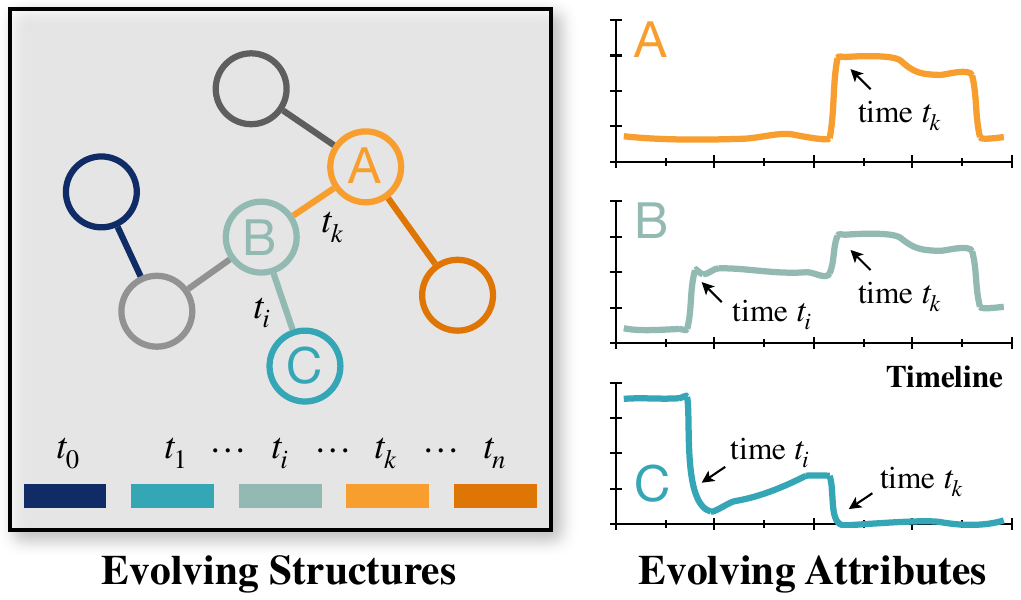}}
\vspace{-5pt}
\caption{\label{fig:dyg}Example of dynamic graph where time $t_i$ corresponds to an event of adding edge $(B,C)$. It shows this event changes the graph structures and the node attributes: the connection between node $B$ and node $C$ is changed and the attribute value of node $C$ (in blue) drops significantly. Also, the attribute of node $C$ shows stable trend before the addition of edge $(B,C)$ at time $t_i$, then exhibits upward trend until the event of edge $(A,B)$ emerging at time $t_k$. This suggests that the structural and temporal dynamics of the observed graph can be entangled.}
\end{figure}
	
\begin{figure*}[tb!]
\centerline{\includegraphics[width=.99\textwidth]{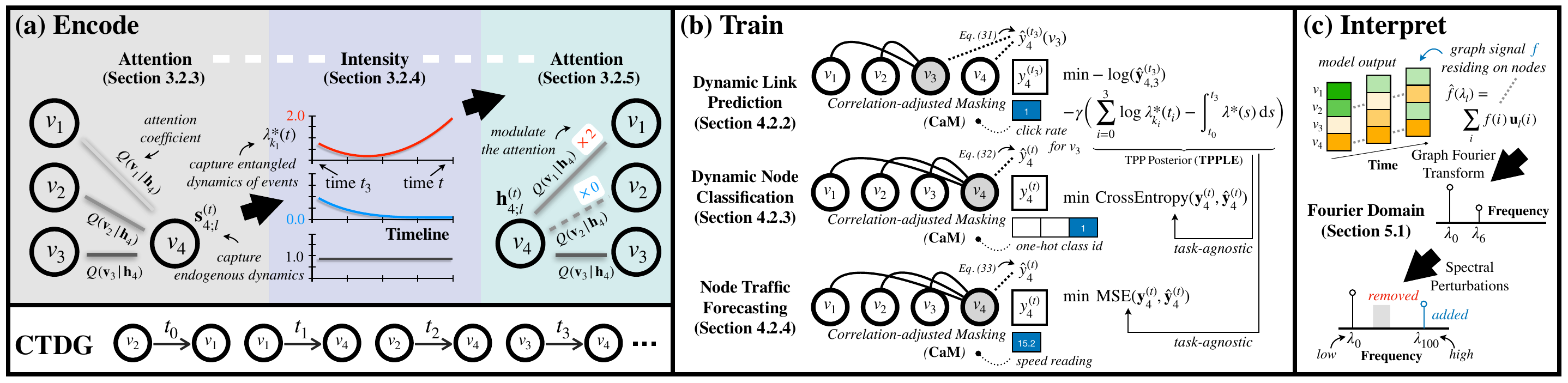}}
\vspace{-5pt}
\caption{\label{fig:overview}{\mname} (encode-train-interpret) pipeline for continuous-time graph. 
(\textbf{a}) We present the attention-intensity-attention architecture to encode the graph with its event history into node embeddings, for example $\mathbf{h}^{(t)}_{4;l}$ of node $v_4$ which is outputted from the $l^\mathrm{th}$ layer and will be used as input to predict what will happen at a given time $t$ in future. In this example, there are four events of edge addition occurred at time $t_0, t_1, t_2, t_3$ respectively. The TPP intensity function is adopted to model the dynamics of events, which is then used to modulate attention networks on graph (see Sec.~\ref{sec:encode}). Specifically, the attention (gray colored) encodes endogenous dynamics within neighborhood into $\mathbf{s}^{(t)}_{4;l}$, then it will be coupled with exogenous temporal dynamics by the intensities $\lambda^\ast_{k_1}(t),\lambda^\ast_{k_2}(t),\lambda^\ast_{k_3}(t)$ (blue colored) that specify the occurrence of events on edges $(v_1,v_4),(v_2,v_4),(v_3,v_4)$ at time $t$. This quantity that captures evolution of the graph can be used to modulate message passing of the attention (green colored);
(\textbf{b}) We consider three popular tasks: dynamic link prediction (see Sec.~\ref{sec:lpredc}), dynamic node classification (see Sec.~\ref{sec:nclassf}) and node traffic forecasting (see Sec.~\ref{sec:tforecast}) under a unified masking-based training scheme;
(\textbf{c}) Finally, perturbation-based analysis is performed to quantify the importance of frequency content that the model learns from the data (see Sec.~\ref{sec:interpret}) in a scalable way. In particular, it enables model-level interpretation instead of instance-level interpretation as is done in most existing explainable graph learning works~\cite{fan2021gcn,pope2019explainability,yuan2021explainability,yuan2022explainability}.}
\end{figure*}

$\blacksquare$ \textit{Q1: how to efficiently and effectively encode the entangled structural and temporal dynamics of the observed graph?} 
	
Earlier works, e.g., DCRNN\!\cite{li2018diffusion}, DySAT\!\cite{sankar2020dysat} and STGCN\!\cite{yu2018spatio}, often view the dynamic temporal graph as a sequence of snapshots with discrete time slots. To improve the model expressiveness and flexibility, continuous-time approaches are recently developed promptly whereby the evolution of graph structures and node attributes, as shown in Fig.~\ref{fig:dyg}, are directly modeled.
Beyond the works\cite{kumar2019predicting,nguyen2018continuous,xu2020inductive,li2020time} that encode time into embeddings independent of spatial information, seminal works\cite{zuo2018embedding,trivedi2019dyrep,zuo2020transformer} explore the use of temporal point process (TPP) techniques for DGRL and introduce the MessagePassing-Intensity framework to capture the entangled spatiotemporal dynamics. Even though these approaches have shown promising results, they encounter common limitations:
\reviewertwo{1) the message-passing module takes node features as input, whereas the intensity module offers edge intensities as output. This mismatch between the input and the output limits the MessagePassing-Intensity network to be used as a basic building block that could be stacked to learn high-level features in multi-layered architectures;
2) existing works mainly focus on link prediction tasks, as the intensity naturally specifies the occurrence of edge-addition events at any given time. When it comes to node regression tasks (e.g., node traffic forecasting), the edge-wise intensity cannot easily be used to design the loss function.}
	
With the immense success of attention networks, a recent work~\cite{cho2021learning} designs an Attention-Attention framework where the former attention models periodic patterns and the latter attention encodes the structural and temporal dependencies. However, we argue that the lack of the ability to model the evolving patterns in continuous time domain will hurt the performance. CADN~\cite{chien2022learning} develops an ODERNN-Attention encoding framework where ODERNN, an extension of RNN with ordinary differential equation (ODE) solver, is adopted to learn continuous-time dynamics. However, RNN requires sequential computation and thus suffers from slow training. In this paper, we address the aforementioned issues by formalizing an attention-intensity-attention encoding framework, suitable for link-level and node-level applications.

$\blacksquare$ \textit{Q2: for typical prediction tasks, how to design a principled family of loss functions and training scheme for dynamic graphs?} 

\reviewertwo{During the training, we did not observe the significant accuracy improvement (see Table~\ref{tab:motiv}) when using conventional training strategies for (dynamic) graphs: masked autoencoding\cite{devlin2018bert,he2022masked} and temporal smoothness\cite{zhu2016scalable,zhou2018dynamic}.}

\reviewertwo{Recent works~\cite{devlin2018bert,sun2019bert4rec,hou2022graphmae}, which randomly mask a portion of graph data using special tokens and then learn to predict the removed content, have shown promising results in link prediction, but many studies~\cite{liu2022graph} have found that masking would negatively affect model performance on benchmarks for node-level tasks.
We conjecture the reason is that different queries at different timestamps which are replaced by special tokens would have the same query embedding, making the ranking of attention weights of the keys shared across these queries.
This may induce a global ranking of influential nodes, which makes sense in domains such as recommender systems and social networks where popular products and top influencers dominate the market. However, a node predictive task might emphasize recently formed edges or certain historical timestamps.} 
In this paper, we propose a new correlation-adjusted masking (CaM) for task-aware learning, which can help attention identify and exploit the above patterns in predicting node labels, while enjoying the benefits of existing masking in link prediction.

\reviewertwo{Despite masked autoencoding, we found by experiments that node embeddings might not effectively learn the dynamics of the observed graph (see supplemental material for more discussions).
In the literature, temporal smoothness is often considered~\cite{zhu2016scalable,zhou2018dynamic} which suppresses sharp changes over time. However, we argue that this prior can sometimes be too restrictive in real-world scenarios, for example road networks where graph data evolves quickly.} In this paper, we address the issue by introducing the TPP likelihood of events (TPPLE) as a principled way of explaining the graph changes in continuous time domain.

\begin{table}[t!]
\centering
\caption{\label{tab:motiv}RMSE on META-LA for 15, 30 and 45 minutes ahead traffic forecasting, where {\mname} w/ masking randomly replace the nodes with special tokens~\cite{hou2022graphmae}, and {\mname} w/ temporal smoothness penalizes the $\ell_2$-norm distance of node embeddings at consecutive snapshots~\cite{zhou2018dynamic}.}
\vspace{-5pt}
\begin{tabular}{lc c c}\toprule\toprule
    & \textbf{15 min} & \textbf{30 min} 
    & \textbf{45 min} 
    \\\midrule

   \reviewertwo{\textbf{{\mname} w/o CaM and TPPLE}} &
    \reviewertwo{5.367} & 
    \reviewertwo{6.368} & 
    \reviewertwo{6.969}
    \\\midrule
    
    \reviewertwo{\textbf{{\mname} w/ masking}} &
    \reviewertwo{5.340} & 
    \reviewertwo{6.479} & 
    \reviewertwo{7.047} \\
    
    \reviewertwo{\textbf{{\mname} w/ CaM}} &
    \reviewertwo{5.291} & 
    \reviewertwo{6.230} & 
    \reviewertwo{6.849}
    \\\midrule
    
    \reviewertwo{\textbf{{\mname} w/ temporal smoothness}} &
    \reviewertwo{5.429}  & 
    \reviewertwo{6.427} & 
    \reviewertwo{7.161} \\
    
    \reviewertwo{\textbf{{\mname} w/ TPPLE}} &
    \reviewertwo{5.250}  & 
    \reviewertwo{6.237} & 
    \reviewertwo{6.853}
    \\\bottomrule\bottomrule
\end{tabular}
\end{table}
	
$\blacksquare$ \textit{Q3: how to analyze and interpret dynamic node representations or model predictions on large-scale graphs over time?}

\reviewertwo{When dealing with a complex model like GNN, it is of critical importance to understand and interpret its behaviors which can help the experts refine model architectures and training strategies.}
Existing work on interpretable GRL often analyzes the importance of neighboring nodes' features to a query node (mostly indicated by the learned attention weights~\cite{fan2021gcn} and the gradient-based weights~\cite{pope2019explainability}) as well as the importance of the (sub)graph topologies based on the Shapley value~\cite{yuan2021explainability}. These works are mainly example-specific and explain why a model predicts what it predicts in the graph vertex domain. It often remains unclear which frequency information the model is exploiting in making predictions. 
We note that the quantitative analysis in the graph Fourier domain has a distinct advantage since it can provide the model-level explanations with respect to the intrinsic structures of the underlying graph. In general, model-level explanations are high-level and can summarize the main characteristics of the model (see Table~\ref{tbl:perturb} for example). 

To achieve this, there exist two main challenges: 
1) the graph Fourier transform (GFT)~\cite{ortega2018graph} provides a means to map the data (e.g., model predictions and node embeddings) residing on the nodes into the spectrum of frequency components, but it cannot quantify the predictive power of the learned frequency content. For example in Fig.~\ref{fig:vf_analysis}, applying GFT shows that dynamic graph model BERT4REC~\cite{sun2019bert4rec} can learn high-frequency information that is not provided by static graph model GSIMC~\cite{chen2023graph}, but it is still not clear if BERT4REC outperforms GSIMC because of this high-frequency information.
\reviewertwo{2) GFT requires the performance of orthogonalized Laplacian decomposition, whose cost is up to $\mathcal{O}(N^3)$, cubic to the number of nodes $N$ in the graph.  Although {\nystrom} methods~\cite{fowlkes2004spectral} can obtain the linear cost, their results do not meet the orthogonality condition (see more discussion in Sec.~\ref{sec:interpret_decomp}). 
In this paper, we address the challenges by proposing appropriate perturbations in the graph Fourier domain, in company with a new scalable and orthogonalized graph Laplacian decomposition algorithm.}

\subsection{Approach Overview and Contributions}
In light of such above discussions, we propose \textbf{\mframe}, a new encoding, training and interpreting pipeline for dynamic graph representation learning, of which the goal is:
1) effective and efficient encoding of the input dynamic graph with its event history; 
2) principled loss design accounting for both prediction accuracy and history fitting; 
3) scalable (spectral domain) post-analyzing on large dynamic graphs. The holistic overview of {\mframe} is shown in Fig.~\ref{fig:overview}.

This paper is a significant extension (almost new)\footnote{Compared with the conference version, extensions can be summarized in three folds (see also Table~\ref{tab:algsurvey}: 
i) extend to a general representation learning framework on dynamic graph from the task-specific method for link prediction; 
ii) propose a principled training scheme that consists of task-agnostic TPP posterior maximization and task-aware loss with a masking strategy designed for dynamic graph; 
iii) devise a spectral quantitative post-analyzing technique based on perturbations, where a scalable and orthogonalized graph Laplacian decomposition is proposed to support the analysis on very large graph data.} of the preliminary works~\cite{chen2021learning}. The main contributions of this work are as follows:

\textbf{1) Attention-Intensity-Attention Encoding (Sec.~\ref{sec:encode}):} 
the former attention parameterizes the TPP intensity function to capture the dynamics of (edge-addition) events occurred at irregular continuous timestamps, while the latter attention is modulated by the TPP intensity function to compute the (continuous) time-conditioned node embedding that can be easily used for both link-level and node-level tasks. 
We note that our proposed network can be stacked to achieve highly nonlinear spatiotemporal dynamics of the observed graph, different from existing TPP based works~\cite{zuo2018embedding,trivedi2019dyrep,zuo2020transformer,chien2022learning}. 
We also address \textbf{a new challenge}: existing attention models interpret the importance of neighboring node's features to query node by the learned attention weights~\cite{kang2018self}.
However, they do not provide any insights into the importance of the observed timestamps.
By contrast, the TPP intensity has the ability to explain the excitation and inhibition effects of past timestamps and events in the future (see Fig.~\ref{fig:tf_trends}).

\textbf{2) Principled Learning (Sec.~\ref{sec:learning}):} 
we introduce a novel correlation-adjusted masking ({\mssl}) strategy which can derive a family of loss functions for different predictive tasks (node classification, link prediction and traffic forecasting). Specifically, we mask a portion of graph nodes at random: different from prior works that replace masked node with special tokens\cite{sun2019bert4rec,yuan2020future,you2020does,hou2022graphmae}, in {\mname} the masked key nodes are removed and not used for training, which provides diversified topologies of the observed graph in multiple epochs and allows us to process a portion of graph nodes. This can reduce the overall computation and memory cost; masked query nodes are replaced with their labels, which encourages the attention to focus on the keys whose embeddings are highly correlated with the label information of the query. We also add time encodings to every node such that the masked nodes could have information about their ``locations'' in graph.
The task-aware loss with masking on dynamic graph is further combined with a task-agnostic loss based on the TPP posterior maximization that accounts for the evolution of a dynamic graph. 
This term (called {\mreg}) replaces the temporal smoothing prior as widely used in existing works~\cite{zhu2016scalable,zhou2018dynamic} that assume no sharp changes.

\textbf{3) Scalable and Perturbation-based Spectral Interpretation (Sec.~\ref{sec:interpret}):} to our knowledge, for the first time we achieve $\mathcal{O}(Nsr\!+\!r^3)$ ($N,s,r$ is the number of graph nodes, sampled columns and Laplacian eigenvectors respectively) complexity to transform model outputs into frequency domain by introducing a scalable and orthogonalized graph Laplacian decomposition algorithm, while vanilla techniques require $\mathcal{O}(N^3)$ cost.
On top of it, we design intra-perturbations and inter-perturbations to perturb model predictions (or node embeddings) in the graph Fourier domain and examine the resulting change in model accuracy to precisely quantify the predictive power of the learned frequency content. Note that the frequency in fact describes the smoothness of outputted predictions with respect to the underlying graph structure, which is more informative than plain features.

\textbf{4) Empirical Effectiveness (Sec.~\ref{sec:expr}).} We evaluate {\mname} on real-world datasets for traffic forecasting, dynamic link prediction and node classification. Experimental results verify the effectiveness of our devised components, and in particular {\mname} achieves $6.7\!-\!14.5\%$ relative Hit-Rate gains and $11.4\!-\!14.7\%$ relative NDCG gains over state-of-the-art baselines on the challenging Netflix prize data (with $497,959$ nodes and $100,444,166$ edges). The source code is available at \textcolor{blue}{\url{https://github.com/cchao0116/EasyDGL}}.

\section{Background and Problem Statement}\label{sec:notat}
We first briefly introduce the basics of attention neural networks and temporal point process techniques for dynamic graphs, then formally describe our focused problem setting and notations, and in the end provide further discussion on related works.

\subsection{Preliminaries}\label{sec:gatintro}
\subsubsection{Graph Attention Networks}
Suppose that $v_u$ is the query node and a set $\mathcal{N}_u$ represents its neighbors of size $|\mathcal{N}_u|=n$, then in the general formulation, attention network computes the embedding $\mathbf{h}_u$ for query $v_u$ as a weighted average of the features $\{\mathbf{v}_i\}$ over the neighbors $v_i\in\mathcal{N}_u$, using the learned attention weights $e_{u,i}$:
\begin{equation}\label{eq:att}
\begin{split}
\mathrm{Att}(\mathbf{h}_u|\mathcal{N}_u, \mathbf{v})
    &=\sum_{i\in\mathcal{N}_u} 
    \left( 
        \frac{\exp(e_{u,i})}
        {\sum_{i'\in\mathcal{N}_u}\exp(e_{u,i'})}
        \right)\mathbf{v}_i \\
    &=\sum_{i\in\mathcal{N}_u} 
        Q(\mathbf{v}_i|\mathbf{h}_u)\mathbf{v}_i
    =\mathbf{E}_{\mathbf{v}_i\sim{Q}}\mathbf{v}_i, 
\end{split}
\end{equation}where $Q(\mathbf{v}_i|\mathbf{h}_u)$ is the normalized attention weight (score) that indicates the importance of node $v_i$'s features to node $v_u$. It is worth pointing out that the scoring function $e_{u,i}$ can be defined in various ways, and in the literature three popular implementations for representation learning on graphs are self-attention networks (SA) \cite{vaswani2017attention}, graph attention networks (GAT) \cite{velivckovic2017graph} and its recent variant GATv2 \cite{brody2021attentive}:
\begin{flalign}
\mathrm{SA}&:
    \quad  e_{u,i} = \mathbf{q}_u^\top\mathbf{k}_i/\sqrt{d} \label{eq:sa}\\
    \mathrm{GAT}&:
    \quad e_{u,i} = \mathrm{LeakyReLU}(
    \mathbf{a}^\top
    \big[\,\mathbf{k}_u\parallel\mathbf{k}_i\,\big]) \label{eq:gat}\\
    \mathrm{GATv2}&:
    \quad e_{u,i} = \mathbf{a}^\top
    \mathrm{LeakyReLU}(
\big[\,\mathbf{k}_u\parallel\mathbf{k}_i\,\big]) \label{eq:gatv2}
\end{flalign}where $\mathbf{q}_u\in\mathbb{R}^d$ and $\mathbf{k}_i\in\mathbb{R}^d$ are the transformed features of query node $v_u$ and key node $v_i\in\mathcal{N}_u$ respectively, $\mathbf{a}\in\mathbb{R}^{2d}$ is the model parameter and $\parallel$ is the concatenation operation.

\subsubsection{Temporal Point Process and Intensity Function}\label{sec:bk_tpp}
Temporal pint process (TPP) is a stochastic model for the time of the next event given all the times of previous events $\mathcal{H}_t\!\coloneqq\!\{(k_i,t_i)|t_i<t\}$, where the tuple $(k_i,t_i)$ corresponds to an event of type $k_{i}\in\{1,\dots,K\}$ occurred at time $t_i$ and $K$ is the total number of event types. The dynamics of the arrival times can equivalently be represented as a counting process $\mathrm{N}(t)$ that counts the number of events occurred at time $t$. In TPPs, we specify the above dynamics using the conditional intensity function $\lambda_k^\ast(t)$ that encapsulates the expected number of type-$k$ events occurred in the infinitesimal time interval $[t, t+\mathrm{d}t]$\cite{daley2003introduction} given the history $\mathcal{H}_t$:
\begin{gather}\label{eq:intensity}
\lambda_k^\ast(t)\,\mathrm{d}t 
    = \mathbf{E} \Big[\mathrm{N}_k(\mathrm{d}t\big| \mathcal{H}_{t})\Big],
\end{gather}where $*$ reminds that the intensity $\lambda^\ast$ is history dependent, $\mathrm{N}_k(\mathrm{d}t|\mathcal{H}_{t})$ denotes the number of type-$k$ events falling in a time interval $[t, t\!+\!\mathrm{d}t)$. Note that the conditional intensity function has a relation with the conditional density function of the interevent times, according to survival analysis~\cite{aalen2008survival}:
\begin{flalign}
f_k^\ast(t)\!=\!\lambda_k^\ast(t)\exp
    \Bigg(
	-\int_{t_n}^t \lambda^\ast(s) \,\mathrm{d}s
	\Bigg),\quad
\lambda^\ast(s)\!=\!\sum_{k=1}^K \lambda_k^\ast(s) \nonumber
\end{flalign}where the exponential term is called the survival function that characterizes the conditional probability that no event happens during $[t_n, t)$, and $t_n$ is the last event occurred before time $t$. Using the chain rule, the likelihood for a point pattern $\{(t_i,k_i)|t_i<T)\}$ is given by
\begin{flalign}\label{eq:tpp_mle}
L=\Bigg(
    \prod_{i=1}^n \lambda_{k_i}^\ast(t_i)
    \Bigg) \exp
    \Bigg(
        -\int_{0}^T \lambda^\ast(s) \,\mathrm{d}s
\Bigg),
\end{flalign}The integral for all possible non-events is intractable to compute. We show two approximate algorithms in Sec.~\ref{ssec:treg}.

\begin{table*}[tb!]
\caption{Summary of existing literature on representation learning for dynamic graphs based on different types of dynamic graph definitions (discrete and continuous-time), encoding architectures, masked training schemes, model interpreting techniques and embedding evolving strategies.}
\label{tab:algsurvey}
\vspace{-10pt}
\begin{center}
\begin{tabular}{@{}r cccccc@{}}\toprule
    \textbf{Method} & \textbf{Dynamic Graph} &
    \textbf{Encoding Architecture} &
    \textbf{Masked Training} & 
    \textbf{Model Interpreting} & 
    \textbf{Embedding Evolving} \\\midrule

    DCRNN~\cite{li2018diffusion} & Discrete &
    GCN-RNN & 
    N/A & N/A & 
    Snapshot-based \\
    STGCN~\cite{yu2018spatio} & Discrete &
    GCN-CNN & 
    N/A & N/A & 
    Snapshot-based \\
    DSTAGNN~\cite{lan2022dstagnn} & Discrete &
    Attention-GCN-CNN &
    N/A & N/A &
    Snapshot-based \\
    EvolveGCN~\cite{pareja2020evolvegcn} & Discrete &
    GCN-RNN &
    N/A & N/A &
    Snapshot-based  \\
    GREC~\cite{yuan2020future} & Discrete &
    Convolution & 
    Special tokens & N/A & 
    Snapshot-based  \\
    DySAT~\cite{sankar2020dysat} & Discrete &
    Attention & 
    N/A & Attention coefficient & 
    Snapshot-based \\
    BERT4REC~\cite{sun2019bert4rec} & Discrete &
    Attention & 
    Special tokens & Attention coefficient & 
    Snapshot-based \\\midrule
			
    JODIE~\cite{kumar2019predicting} & Continuous &
    GraphSAGE-RNN &  
    N/A & N/A & 
    Event-triggered  \\
    TGN~\cite{rossi2020temporal} & Continuous &
    MessagePassing-RNN & 
    N/A & N/A & 
    Event-triggered  \\
    DyREP~\cite{trivedi2019dyrep} & Continuous &
    MessagePassing-Intensity & 
    N/A & TPP intensity & 
    Spontaneous  \\
    TGAT\cite{xu2020inductive} & Continuous &
    TimeEncoding-Attention & 
    N/A & Attention coefficient & 
    Spontaneous  \\
    TiSASREC\cite{li2020time} & Continuous &
    TimeEncoding-Attention & 
    N/A & Attention coefficient & 
    Spontaneous  \\
    CADN~\cite{chien2022learning} & Continuous & 
    ODERNN-Attention & 
    N/A & Attention coefficient & 
    Spontaneous \\
    TimelyREC~\cite{cho2021learning} & Continuous & 
    Attention-Attention & 
    Special tokens & Attention coefficient & 
    Spontaneous  \\\midrule
			
    CTSMA$^*$\cite{chen2021learning} & Continuous &
    Attention-Intensity-Attention & 
    N/A & Both of above & 
    Spontaneous  \\
    {\mname} (ours) & Continuous &
    Attention-Intensity-Attention & 
    CaM \& TPPLE & + Spectral perturbation & 
    Spontaneous 
    \\\bottomrule
\end{tabular}
\end{center}
$^*$Conference version of this work.
\end{table*}
	
\subsection{Problem Formulation and Notations}
	
\begin{definition}[\textbf{Graph}]
A directed graph $\mathcal{G}=\{\mathbb{V}, \mathbb{E}\}$ contains nodes $\mathbb{V}\!=\!\{v_1,\dots,v_N\}$ and edges $\mathbb{E}\subseteq\mathbb{V}\times\mathbb{V}$, where $(u,v)\in\mathbb{E}$ denotes an edge from a node $u$ to node $v$. An undirected graph can be represented with bidirectional edges.
\end{definition}
	
\begin{definition}[\textbf{Dynamic Graph}]
Suppose that $\mathcal{G}$ is a graph representing an initial state of a dynamic graph at time $t_0$, let $\mathcal{O}$ denote the set of observations where tuple $(u,v,\mathrm{op}_t)\!\in\!\mathcal{O}$ denotes an edge operation $\mathrm{op}_t$ of either addition or deletion conducting on edge $(u,v)\in\mathbb{E}$ at time $t$, then we represent a \textit{continuous-time (dynamic) graph} as a pair $(\mathcal{G},\mathcal{O})$, while a \textit{discrete-time (dynamic) graph} is a set of graph snapshots sampled from the dynamic graph at regularly-spaced times, represented as $\{\mathcal{G}_0,\mathcal{G}_{1},\dots\}$.
\end{definition}
	
 \begin{definition}[\textbf{Continuous-time Representation}]
Given a continuous-time graph $(\mathcal{G}, \mathcal{O})$, if node $v_u$ involves $n$ events at time $\{t_1,\dots,t_n\}$ and $\mathcal{X}\!\!=\!\!\{\mathbf{X}^{(1)},\dots,\mathbf{X}^{(n)}\}$ denotes dynamic node attributes, then we define its continuous-time representation at any future timestamp $t>t_n$ by $\mathbf{h}^{(t)}_{u}$:
\begin{flalign}
\mathbf{h}^{(t)}_{u} = 
    \text{GRL}(u,t;
    \mathcal{G},\mathcal{O},\mathcal{X})
\end{flalign}where GRL represents graph representation learning model.
\end{definition}
	
\begin{definition}[\textbf{Dynamic Link Prediction}]
\label{def:dyLink}
Given $(\mathcal{G},\mathcal{O})$ and let $v_u$ denote the query node, then the goal of \textit{dynamic link prediction} is to predict which is the most likely vertex that will connect with node $v_u$ at a given future time $t$:
\begin{flalign}
    (\mathcal{G}, \mathcal{O}, \mathcal{X};u,t)
	\stackrel{\mathbf{h}_u^{(t)}}{\longrightarrow}
    \textit{prob. of $v_i$ connecting with $v_u$}.
\end{flalign}
\end{definition}
	
\begin{definition}[\textbf{Dynamic Node Classification}]
\label{def:dyNode}
Given $(\mathcal{G},\mathcal{O})$ and $v_u$, then \textit{dynamic node classification} aims to predict the label $\mathbf{y}^{(t)}_u$ of node $v_u$ at a given future time $t$:
\begin{flalign}
(\mathcal{G}, \mathcal{O}, \mathcal{X};u,t)
    \stackrel{\mathbf{h}_u^{(t)}}{\longrightarrow}
	\mathbf{y}_u^{(t)}.
\end{flalign}
\end{definition}

\begin{definition}[\textbf{Traffic Forecasting}]
\label{def:traff}
Given $\mathcal{X}\!=\!\{\mathbf{X}^{(t)}\}$ and $\mathbf{X}^{(t)}\!\in\!\mathbb{R}^{N\times{P}}$ denotes the speed readings at time $t$ where $P$ is the number of features, then existing approaches for traffic forecasting maintain a sliding window of $T'$ historical data to predict future $T$ data, on a given road network (graph) $\mathcal{G}$:
    \begin{flalign}
        [\mathbf{X}^{(t-T'+1)},\dots,\mathbf{X}^{(t)};\mathcal{G},\mathcal{O}] 
        \longrightarrow
        [\mathbf{X}^{(t+1)},\dots,\mathbf{X}^{(t+T)}].
    \end{flalign}
 \end{definition}
	
To capture the dynamics on graph, in this paper we focus on the events of edge addition and define a temporal point process on dynamic graph:
	
\begin{definition}[\textbf{TPP on Dynamic Graph and Events of Edge Addition}]
Given any query node $v_u$ and its neighborhood $\mathcal{N}_u$, we introduce the modeling of a TPP on $v_u$ where the event is defined as the addition of an edge connecting node $v_u$. In such case, the intensity $\lambda_i^\ast(t)$	essentially models the occurrence of edges between node $v_i$ and node $v_u$ at time $t$. 
However, it is expensive to compute $\lambda_i^\ast(t)$ for all $v_i\in\mathcal{N}_u$. 
To address the issue, we propose to first cluster nodes into $K$ groups and make the nodes in the same (for example, $k$-th) group share the intensity $\lambda_{k}^\ast(t)$ for reduced computation cost, i.e., from $\mathcal{O}(|\mathcal{N}_u|)$ to $\mathcal{O}(K)$.
\end{definition}
	
In Sec.~\ref{sec:encode}, we will show how to adapt the conditional intensity function\footnote{We term it as `intensity' for brevity in this paper.} as defined by Eq.~\eqref{eq:intensity} to the fine-grained modeling of a continuous-time graph $(\mathcal{G},\mathcal{O})$. While in Sec.~\ref{sec:learning} we present our training scheme that uses Eq.~\eqref{eq:tpp_mle} to account for the observed events $\mathcal{O}$ of edge addition on graph $\mathcal{G}$.
	
We note that the dynamic experimental protocol requires the model to \textit{predict what will happen at a given future time}, and it has already been shown more reasonable for evaluating continuous-time/discrete-time approaches in\cite{kazemi2020representation}.

\subsection{Related Works}\label{sec:related}
In literature, discrete-time dynamic graph approaches have been extensively studied due to ease of implementation. It can be mainly categorized into tensor decomposition~\cite{mao2018spatio,mcneil2021temporal,fang2022bayesian} and sequential modeling~\cite{li2018diffusion,zhang2018gaan,pareja2020evolvegcn,lan2022dstagnn,yu2018spatio,sankar2020dysat} two families. By contrast, continuous-time graph approaches are relatively less studied. In the following, we review the related works in two aspects: 
1) spatio-temporal decoupled modeling, where continuous-valued timestamps are directly encoded into vectorized representations; 
and 2) spatio-temporal entangled modeling, in which the structural information is considered in temporal modeling.
	
\subsubsection{Spatio-temporal Decoupled Modeling Methods}
Early works assume that temporal dynamics are independent of structural information in the graph. A line of works~\cite{kumar2019predicting,qu2020continuous,nguyen2018continuous,garcia2018learning} adopt temporally decayed function to make recently formed nodes/edges more important than previous ones. Differently, in \cite{li2020time} the elapsed time data is divided into bins, such that continuous-valued timestamps can be embedded into vector representations, analogous to positional encoding. This idea of time encoding is extended by~\cite{xu2020inductive,goel2020diachronic} using nonlinear neural networks and sinusoidal functions, respectively.  While the assumption of independence is desired for many cases, in some applications the structural and temporal dynamics are entangled. 

\subsubsection{Spatio-temporal Entangled Modeling Methods}
Recent effort has been made to capture the entangled structural and temporal dynamics over graph. In~\cite{mei2017neural,trivedi2019dyrep,zuo2020transformer}, the intensity function of TPP is parameterized with deep (mostly recurrent) neural networks to model spatiotemporal dynamics. These works focus on link prediction tasks and are orthogonal to us as they build task specific methods and do not focus on representation learning. 
In contrast,~\cite{chien2022learning,jin2022neural} include the ordinary differential equations (ODE) to capture continuous evolution of structural properties coupled with temporal dynamics. Typically, the ODE function is parameterized with an RNN cell and updates hidden states in a sequential manner, not optimal on industrial scale data.
\cite{cho2021learning} proposes a novel Attention-Attention architecture where the former attention captures periodic patterns for each node while the latter attention models the entangled structural and temporal information. Note that this method does not consider continuous-time evolving patterns.
	
As summarized in Table \ref{tab:algsurvey},	most of existing continuous-time approaches either perform time encoding or deep RNN to model the fine-grained temporal dynamics. By contrast, we propose a TPP-modulated attention network which can efficiently model the entangled spatiotemporal dynamics on industry-scale graph. 
As mentioned above, one shortcoming	of existing masking strategies is that they might not provide satisfactory results for node-level tasks. 
Our method tackles this problem by designing {\mreg} and {\mssl}.
Additionally, it is unclear for existing models which frequency content they are exploiting in making their predictions. We propose a perturbation-based spectral graph analyzing technique to provide insight in the graph Fourier domain.
Another observation is that in {\mframe}, the embedding for each node is essentially a function of time (spontaneous) which characterizes continuous evolution of node embedding, when no observations have been made on the node and its relations. 
This addresses the problem of memory staleness~\cite{kazemi2020representation} that discrete-time and event-triggered approaches suffer.

\section{Encoding: Attention-Intensity-Attention\\Scheme\,for\,Temporal-structural\,Embedding}\label{sec:encode}
In Sec.~\ref{sec:gatt}, we generalize the attention to continuous time space and discuss the motivation behind our idea. We then present in Sec.~\ref{sec:catnn} the detailed architecture that encodes the entangled structural and temporal dynamics on graph.

\subsection{Parameterizing Attention in Continuous Time}\label{sec:gatt}
Suppose that $v_u$ is the query node and $\mathcal{H}_t$ represents a set of historical events up to time $t$, in order to compute the node representation $\mathbf{h}_u^{(t)}$, it is necessary to know if node $v_u$ is still connected with node $v_i\in\mathcal{N}_u$ at a future time $t$. To account for this, we use $\mathrm{N}_i(\mathrm{d}t\big| \mathcal{H}_{t})\!\in\!\mathbb{N}_+$ to denote the number of edges $(v_i,v_u)$ appearing within a short time window $[t,t\!+\!\mathrm{d}t)$. For example in social networks, $\mathrm{N}_i(\mathrm{d}t\big| \mathcal{H}_{t})\!=\!2$ indicates that user $v_u$ revisits user $v_i$ twice during $[t,t\!+\!\mathrm{d}t)$. With this, we can reformulate the attention in Eq.~\eqref{eq:att} as:
\begin{flalign}
\mathrm{Att}(\mathbf{h}^{(\mathrm{d}t)}_u|\mathcal{N}_u, \mathbf{v})
    = \mathbf{E}_{\mathbf{v}_i\sim{Q}}
    \mathrm{N}_i(\mathrm{d}t\big| \mathcal{H}_{t})
    \mathbf{v}_i, \label{eq:tcats0}
\end{flalign}where $\mathbf{h}^{(\mathrm{d}t)}_u$ denotes node representation during the interval $[t,t\!+\!\mathrm{d}t)$ and the distribution $Q$ corresponds to the normalized attention coefficient in Eq. \eqref{eq:att}. However, it is intractable to train above attention network in Eq. \eqref{eq:tcats0} on large graphs, because the model parameters $\mathrm{N}_i(\mathrm{d}t\big|\mathcal{H}_{t})$ are restricted to be integers. To relax the integer constraint, one of feasible solutions is to replace the discrete counting number by its continuous relaxation $\mathbf{E}[\mathrm{N}_i(\mathrm{d}t\big|\mathcal{H}_{t})]$. To that end, we derive the generalized attention as follows:
\begin{flalign}
\mathrm{Att}(\mathbf{h}^{(\mathrm{d}t)}_u|\mathcal{N}_u, \mathbf{v})
&= \mathbf{E}_{\mathbf{v}_i\sim{Q}}
    \mathbf{E} \Big[\mathrm{N}_i(\mathrm{d}t\big|\mathcal{H}_{t})\Big]
    \mathbf{v}_i \nonumber\\
&= \mathbf{E}_{\mathbf{v}_i\sim{Q}}
    \lambda_i^\ast(t)
    \mathbf{v}_i\,\mathrm{d}t, \label{eq:gcats1}
\end{flalign}where Eq.~\eqref{eq:gcats1} holds due to Eq.~\eqref{eq:intensity} where $\lambda_i^\ast(t)\mathrm{d}t$ specifies the number of type-$i$ events, i.e., new edges on pair $(v_i,v_u)$ during $[t,t\!+\!\mathrm{d}t)$.
Since we are interested in the embedding $\mathbf{h}_u^{(t)}$ at a single point $t$ in the time dimension, we have:
\begin{equation}
\mathrm{Att}(\mathbf{h}^{(t)}_u|\mathcal{N}_u, \mathbf{v})
    = \mathbf{E}_{\mathbf{v}_i\sim{Q}}
    \lambda_i^\ast(t)
    \mathbf{v}_i. \label{eq:catf_point}
\end{equation}

\subsubsection{Further Discussion}
\textbf{Connections to Existing Attentions.}
We note that given $\mathcal{N}_u$, existing attention~\cite{brody2021attentive,vaswani2017attention,velivckovic2017graph,liu2020kalman} will produce identical node embedding $\mathbf{h}^{(t)}_u$ for different past timestamps $\{t_i\}$ and future timepoint $t$, thereby making attention \textit{a static quantity}. Essentially, these approaches can be viewed as our special cases with constant $\lambda_i^\ast(t)\!\equiv\!{1}$ for $v_i\in\mathcal{N}_u$. In other words, {\mframe} modulates the attention with the TPP intensity so as to energize attention itself \textit{a temporally evolving quantity}.
	
\begin{figure}[tb!]
\centerline{\includegraphics[width=.50\textwidth]{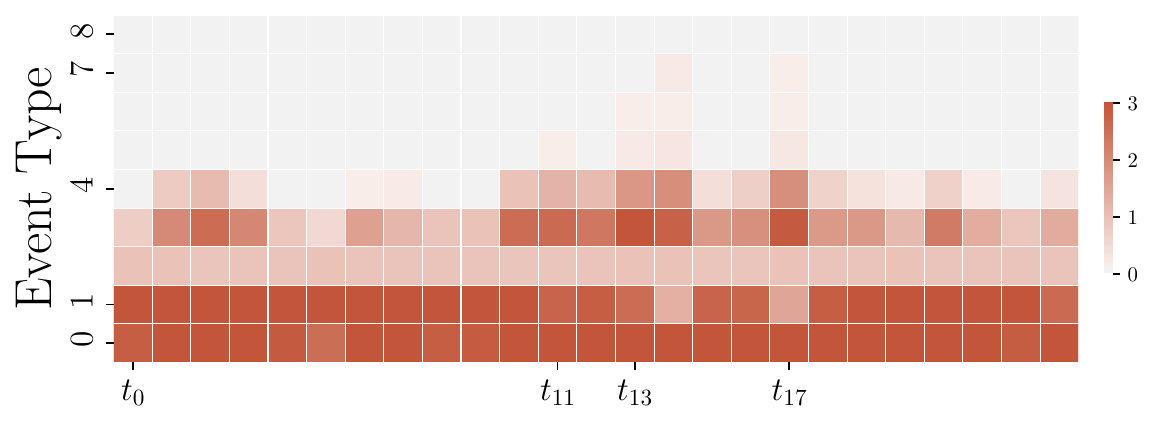}}
\vskip -0.1in
\caption{\label{fig:tf_trends}TPP-based interpreting of {\mname} for dynamic link prediction on Netflix, which shows a user's intensities for different event types (or item clusters defined in Eq.~\eqref{eq:scatf_point}) over time. One can see that this user has consistent interests in movies of type-0 and-1, while the interests in movies from type-5 to type-7 are transient, i.e., $[t_{11},t_{17}]$ in one day.}
\vskip -0.1in
\end{figure}

To clarify our advantages over static attention networks, we conclude three properties with examples from user-item behavior modelling (see Fig.~\ref{fig:tf_trends} for case studies on Netflix):
\begin{enumerate}[label=\textbf{Property \arabic*:},topsep=0in,leftmargin=0em,wide=0em]
		\item When $\lambda_i^\ast(t)\!=\!0$, the impact of neighbor $v_i$ is rejected. $\Rightarrow$ The items (e.g., baby gear) purchased far from now may not reflect user's current interests.
		\item When $\lambda_i^\ast(t)\!<\!1$, the impact of neighbor $v_i$ is attenuated. $\Rightarrow$ Buying cookies may temporarily depress the purchases of all desserts.
		\item When $\lambda_i^\ast(t)\!>\!1$, the impact of neighbor $v_i$ is amplified. $\Rightarrow$ Buying cookies may temporarily increase the probability of buying a drink.
\end{enumerate}

However, it is challenging to perform our attention on very large graphs, since the conditional intensity $\lambda_i^\ast(t)$ need to be computed $|\mathcal{N}_u|$ times for all $v_i\in\mathcal{N}_u$. One feasible solution is to cluster the nodes into groups, then we model the dynamics of the observed events at the group level:
\begin{flalign}
		\mathrm{Att}(\mathbf{h}^{(t)}_u|\mathcal{N}_u, \mathbf{v})
		&= \mathbf{E}_{\mathbf{v}_i\sim{Q}}
		\lambda_{k_i}^\ast(t)
		\mathbf{v}_i \nonumber\\
		&= \sum_{i\in\mathcal{N}_u}
		Q(\mathbf{v}_i|\mathbf{h}^{(t)}_u)
		\lambda_{k_i}^\ast(t)
		\mathbf{v}_i, \label{eq:scatf_point}
\end{flalign}where $k_i$ is the cluster identity for node $v_i$. The rationale behind this is that the nodes in one group share the dynamic patterns. This makes sense in many domains such as recommender systems where the products of the same genre have similar patterns. Also, this design shares the parameters in each cluster that mitigates data sparsity issue.

\begin{figure*}
		\centerline{\includegraphics[width=.98\textwidth]{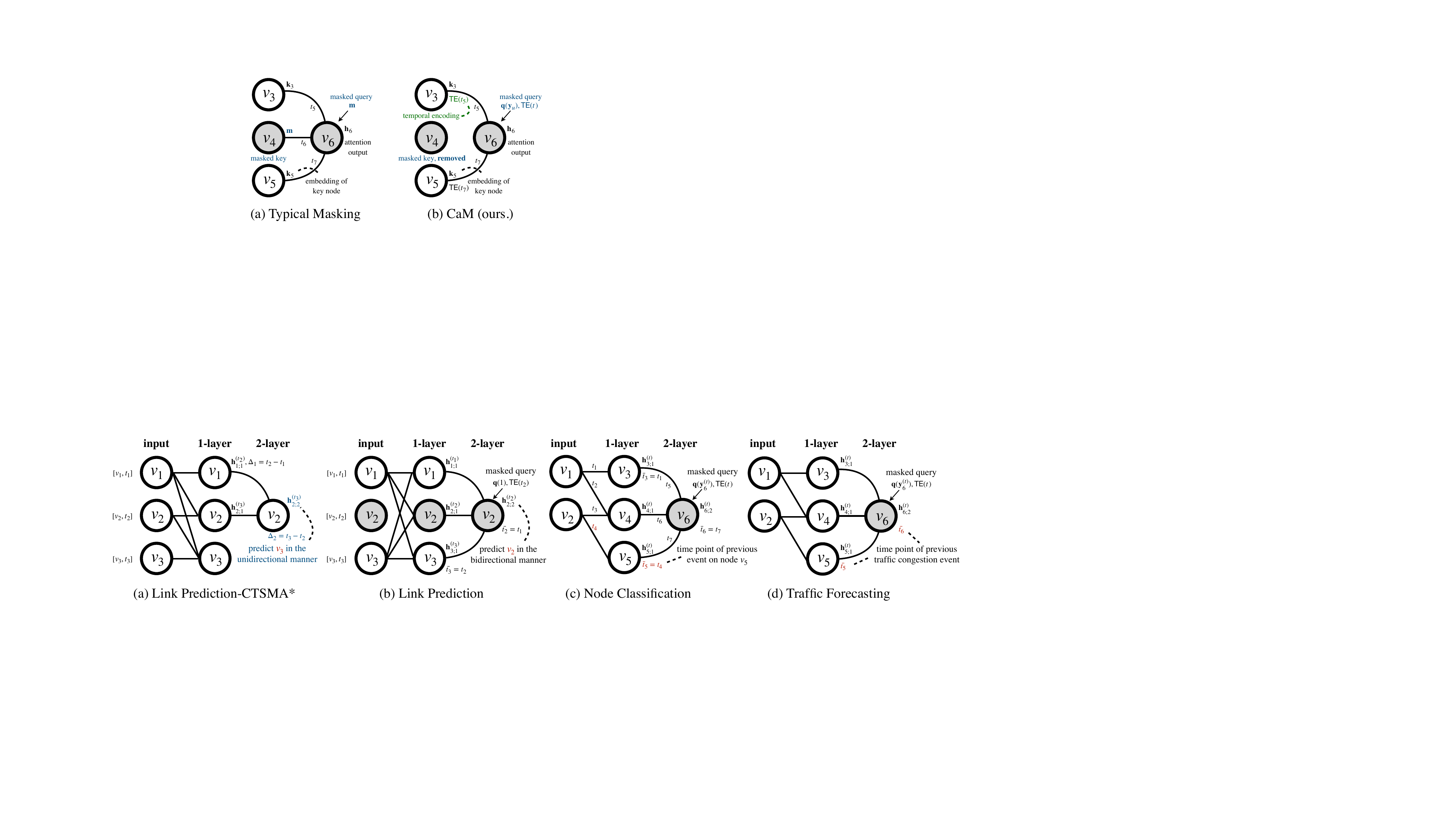}}
		\vspace{-10pt}
		\caption{\label{fig:cam}Encoding and training of {\mname} with three typical dynamic prediction tasks. The left (a) is performed in an auto-regressive manner; while the right three devised in this paper are based on auto-encoding. 
			{(a) \textbf{Our conference work~\cite{chen2021learning} (i.e. CTSMA)} tailored for dynamic link prediction;}
			(b) \textbf{Link prediction} (Sec.~\ref{sec:lpredc}): given a sequence of user behaviors $\{v_1,v_2,v_3\}$ on a user-item bipartite graph and $v_2$ is randomly selected for being masked, at the $1^\mathrm{st}$ layer the time delta as defined in Eq. \eqref{eqn:att_g} is set to $t_1-t_0$, $t_2-t_1$ and $t_3-t_2$ respectively, when computing $\mathbf{h}^{(t_1)}_{1;1},\mathbf{h}^{(t_2)}_{2;1}$ and $\mathbf{h}^{(t_3)}_{3;1}$. Note that the output $\mathbf{h}^{(t_2)}_{2;2}$ at the $2^\mathrm{nd}$ layer has direct access to information of all three time intervals with constant path length $\mathcal{O}(1)$, superior to recursive architectures~\cite{mei2017neural,trivedi2019dyrep}; 
			(c) \textbf{Node classification} (Sec.~\ref{sec:nclassf}): Let $v_6$ denote the masked query node whose input features are replaced by its true label $\mathbf{y}_6^{(t)}$, and the one-hop neighbors $v_3,v_4,v_5$ are most recently updated at time $t_1,t_3,t_4$ respectively, then the time intervals are set to $t-t_1,t-t_3,t-t_4$ when computing $\mathbf{h}_{3;1}^{(t)}, \mathbf{h}_{4:1}^{(t)},\mathbf{h}_{5;1}^{(t)}$; 
			(d) \textbf{Traffic forecasting} (Sec.~\ref{sec:tforecast}): $v_6$ denotes the masked query node whose input features are replaced by its true speed reading $\mathbf{y}_6^{(t)}$. Note that $\bar{t}_5$ ($\bar{t}_6$) represents the time point of previous traffic congestion event occurred on $v_5$ ($v_6$). }
\end{figure*}
\subsection{Instantiating Continuous Time Attention Encoder}\label{sec:catnn}
We next instantiate the encoding network in {\mname} as defined in Eq.~\eqref{eq:scatf_point}. Let $\mathbf{X}^f\!\in\!\mathbb{R}^{N\times{F}}$ and $\mathbf{X}^c\!\in\!\mathbb{R}^{N\times{K}}$ denote node feature matrix and cluster assignment matrix, where $N, F, K$ is the number of nodes, features and clusters respectively. $\mathbf{X}^c$ is obtained using weighted kernel K-means~\cite{dhillon2007weighted} based on the data $\mathbf{X}^f$, and assignments for new nodes can be decided inductively by using cluster centroids.
	
\subsubsection{Overview of the Encoding Network}
The encoding network consists of three key steps and follows an attention-intensity-attention scheme: 
\begin{enumerate}[label=\textbf{Step \arabic*:}, topsep=0in,leftmargin=0em,wide=0em]
		
    \item \textbf{Endogenous dynamic encoding (Attention).} Since the temporal and structural dynamics are entangled in domains like road networks, we propose a specific attention-intensity-attention design. 
    \reviewertwo{We first improve graph attention mechanism by taking into account non-constant interactions between event types in a point process, so as to encode the dynamic localized changes in the neighborhood (i.e., the endogenous dynamics). When an event changes spatial graph structures or node attributes, the representation outputted from the attention will be changed accordingly.} 

    \item \textbf{Conditional intensity modeling (Intensity).} Then we couple the structure-aware endogenous dynamics with the time-dependent exogenous dynamics by devising a new intensity function that specifies the occurrence of events on edges. Note that different input events and different event timestamps will result in different intensities.

    \item \textbf{Intensity-attention modulating (Attention)}. As the learned intensity naturally offers us the ability to specify the occurrence of edge-addition events at a specific future time, a new attention mechanism is proposed by using the intensity to modulate the attention in Step 1. The resulting attention can yield continuous-time node embeddings. 
\end{enumerate}

\subsubsection{Endogenous Dynamics Encoding}\label{sec:esta_att}
\reviewertwo{
To capture the multi-level interactions, we combine the node feature matrix $\mathbf{X}^f$ and the clustering assignment matrix $\mathbf{X}^c$ as an initial step, followed by linear transformation:
\begin{flalign}
    \mathbf{H}^{(t)}_{0} &=
    \big[\,\mathbf{X}^f \parallel \mathbf{X}^c\,\big] \label{eq:gcoding}\\
    \mathbf{V}^{(t)}_{l} &= \mathbf{W}^V\mathbf{H}^{(t)}_{l-1} \label{eq:vtransf}
\end{flalign}where $\parallel$ is the concatenation operation and $\mathbf{W}^V$ is a weight matrix that might be different at different layer $l$; $\mathbf{H}_{l-1}^{(t)}$ and $\mathbf{V}_l^{(t)}$ signifies the node representations outputted from layer $l\!-\!1$ and the value representations at layer $l$, respectively.

We next consider time-dependent interactions between event types, where we denote by $v_u$ query node, vector $\mathbf{d}^{(t)}_u$ the information of query time (e.g., month, day of the week) and $\mathbf{z}_u$ ($\mathbf{z}_i$) the embedding of event type on node $v_u$ ($v_i$):
\begin{flalign}\label{eq:tt_alpha}
    \mathbf{W}^{(t)}_u = \mathrm{MLP}(\mathbf{d}^{(t)}_u),\quad
    \alpha^{(t)}_{u,i}=\tanh(\mathbf{z}_u^\top\mathbf{W}^{(t)}_u\mathbf{z}_i).
\end{flalign}It can clearly be seen that $\alpha^{(t)}_{u,i}$ is different at different time $t$.

To encode fine-grained dynamics in graph structures and node attributes with neighborhood $\mathcal{N}_u$ (namely, the endogenous dynamics), we improve graph attention mechanism as:
\begin{flalign}
\mathbf{s}_{u;l}^{(t)} =
    \sum_{i\in\mathcal{N}_u} 
    \left( 
        \frac{\exp(e^{(t)}_{u,i} + \beta\alpha^{(t)}_{u,i})}
		{\sum_{i'\in\mathcal{N}_u}\exp(e^{(t)}_{u,i'} + \beta\alpha^{(t)}_{u,i'})}
    \right)\mathbf{v}^{(t)}_{i;l}, \label{eq:pp_att}
\end{flalign}where $\beta$ is a hyperparamter, $\mathbf{v}^{(t)}_{i;l}$ is the $i^\mathrm{th}$ column of $\mathbf{V}^{(t)}_{l}$, and $e^{(t)}_{u,i}$ is attention coefficient which can be implemented in various ways (e.g., SA~\cite{vaswani2017attention}, GAT~\cite{velivckovic2017graph} and GATv2~\cite{vaswani2017attention}).
}

\subsubsection{Conditional Intensity Modeling}\label{sec:esta_ints}
Recall that $\mathcal{H}_t\!=\!\{(t_i,k_i)|t_i<t\}$ denotes the history, where pair $(t_i,k_i)$ signifies the occurrence of type-$k_i$ event at time $t_i$ --- a new edge connecting $v_u$ and one node in group $k_i$ appears in graph at time $t_i$. To model the dynamics of events,	we combine the structure-aware endogenous dynamics $\mathbf{s}^{(l)}_{u;l}$ with the time-dependent exogenous dynamics:
\begin{equation}\label{eqn:att_g}
\begin{aligned}
\mathbf{g}^{(t)}_{k;l}
= \sigma \Big(
	\underbrace{ %
		\mathbf{W}^G_{k}\mathbf{s}^{(t)}_{u;l}
	}_\text{Endogenous}
+ \underbrace{ %
    \vphantom{\mathbf{W}^G_{k}\mathbf{s}^{(l)}_{u;l}}
	\mathbf{b}^G_{k}(t-\bar{t}_u)
	}_\text{Exogenous}
    \Big), 
\end{aligned}\end{equation}where $\mathbf{W}_k^G\!,\mathbf{b}^G_k$ are the model parameters for cluster $k$, and $\bar{t}_u$ signifies the time point of previous event for $v_u$. Fig.~\ref{fig:cam} provides examples of $\bar{t}_u$ in versatile applications.
	
The intensity $\lambda^\ast_k(t)$ for type-$k$ event takes the form:
\begin{flalign}\label{eqn:att_l}
\lambda^\ast_k(t)
= f_k \left(\mathbf{w}^\top_{k}\mathbf{g}^{(t)}_{k;l} + \mu_{k} \right),
\end{flalign}where $\mathbf{w}_{k}$ is the parameter for $k^\mathrm{th}$ cluster, and $\mu_{k}$ signifies the base occurrence rate. It has been well studied in~\cite{trivedi2019dyrep,zuo2020transformer} that the choice of activation function $f_k$ should consider two critical criteria: i) the intensity should be non-negative and ii) the dynamics for different event types evolve at different scales. To account for this, we adopt the softplus function:
\begin{flalign}\label{eq:splus}
    f_{k}(x)=\phi_{k}\log({1} + \exp(x/\phi_{k})),
\end{flalign}where the parameter $\phi_k$ captures the timescale difference.
	
One may raise concerns regarding Eq.~\eqref{eqn:att_g} that it might lose information by looking only at time point of previous event $\bar{t}_u$. We argue that as shown in Fig.~\ref{fig:cam}, the maximum path length between any two recent time points is $\mathcal{O}(1)$ in multi-layered architectures, superior to $\mathcal{O}(|\mathcal{N}_u|)$ for typical neural TPP models~\cite{mei2017neural,trivedi2019dyrep}.

\subsubsection{Intensity-Attention Modulating}\label{sec:esta_repr}
We use the TPP intensity to modulate the attention defined in Eq.~\eqref{eq:pp_att} by following Eq.~\eqref{eq:scatf_point}:
\begin{gather}\label{eq:iamolding}
\mathbf{h}_{u;l}^\mathrm{(t)} = 
    \sum_{i\in\mathcal{N}_u}
    \left( 
	\frac{\exp(e^{(t)}_{u,i} + \beta\alpha_{u,i}^{(t)})}
	{\sum_{i'\in\mathcal{N}_u}\exp(e^{(t)}_{u,i'}+ \beta\alpha_{u,i'}^{(t)})}
    \right)
\lambda^\ast_{k_i}(t)
    \mathbf{v}_{i;l}^{(t)}, 
\end{gather}where $\mathbf{h}_{u;l}^\mathrm{(t)}$ is in fact the $u^\mathrm{th}$ column of $\mathbf{H}^{(t)}_l$ that will be used as input of next layer to derive $\mathbf{H}^{(t)}_{l+1}$, and $k_i$ is the cluster index of neighbor $v_i$. In practise, using multi-head attention is beneficial to increase model flexibility. To facilitate this, we compute $\mathbf{h}_{u;l}^\mathrm{(t)}$ with $H$-independent heads:
\begin{gather}\label{eq:cat_mhcomb}
\mathbf{h}_{u;l}^\mathrm{(t)} = 
    \bigg\Arrowvert_{h=1}^{H}
    \mathbf{h}_{u;l;h}^\mathrm{(t)}, 
\end{gather}where $\mathbf{h}_{u;l;h}^\mathrm{(t)}$ is computed using different set of parameters.

\section{Training: Task-agnostic Posterior Maximization and Task-aware Masked Learning}\label{sec:learning}
We consider a principled learning scheme which consists of both task-agnostic loss based on posterior maximization that accounts for the whole event history on dynamic graph, as well as task-aware loss under the masking-based supervised learning. We note that the task-agnostic loss can be viewed as a regularizer term in addition to the discriminative supervised term for applications, ranging from link prediction, node classification and traffic forecasting.

\subsection{TPP Posterior Maximization of Events on Dynamic Graphs for Task-aware Learning}\label{ssec:treg}
Among existing solutions, temporal smoothness~\cite{zhu2016scalable,zhou2018dynamic} penalizes the distance between consecutive embeddings for every node to prevent sharp changes. However, this might not be desired in some applications such as recommender systems where a user's preference will change substantially from one time to another. To address the issue, we propose to maximize the TPP likelihood of events $\mathcal{H}_t$ ({\mreg}) that encourages the embeddings to learn evolving dynamics.
	
Let $\mathbf{y}$ denote the ground-truth label, $\hat{\mathbf{y}}_\Theta$ as the predicted result where $\Theta$ denotes model parameters, and function $\ell(\mathbf{y}, \hat{\mathbf{y}}_\Theta)$ measures the difference between label and prediction. Then, we optimize $\Theta$ by minimizing the risk:
\begin{equation}\label{eqn:ctr_loss}
\min_\Theta 
    \mathbf{E}_\mathbf{y} \ell(\mathbf{y}, \hat{\mathbf{y}}_\Theta) 
	- \gamma 
    \mathbf{E}_{u}\,R(\Theta; u),
\end{equation}where $\gamma$ is a hyper-parameter which controls the strength of the regularization; \reviewertwo{$R(\Theta;u)$ signifies our {\mreg} term which maximizes the log-likelihood of the whole observed events $\{(k_1,t_1),\dots,(k_n,t_n)\}$ for query node $v_u$ by using Eq.\eqref{eq:tpp_mle}:}
\begin{gather}\label{eq:ctreg}
{R}(\Theta;u) 
=\sum_{i=1}^{n} \log \lambda^\ast_{k_{i}}(t_i)
    - \int_{t_0}^{t_{n}}\lambda^\ast(s) \,\mathrm{d}s 
\end{gather}where recall that $\lambda^\ast(s)=\sum_k \lambda_k^\ast(s)$ is the conditional intensity for the entire history sequence. As stated in Sec.~\ref{sec:bk_tpp}, we recall that the sum term in the right hand side represents the log-likelihood of all the observations, while the integral term represents the log-survival probability that no event happens during the time window $[t_0,t_{n})$.
	
However, it is rather challenging to compute the integral $\Lambda\!\!=\!\!\int_{t_0}^{t_n}\!\lambda^\ast(s)\mathrm{d}s$. Due to the softplus used in Eq.~(\ref{eqn:att_l}), there is no closed-form solution for this integral. Subsequently, we consider the following two techniques to approximate $\Lambda$:
\begin{enumerate}[label=\textbf{\arabic*)},wide=0em,parsep=0em,itemindent=2em]

    \item Monte Carlo integration~\cite{metropolis1949monte}. 
    It randomly draws one set of samples $\{s_i\}_{i=1}^L$ in each interval $(t_{i-1}, t_i)$ to offer an unbiased estimation of $\Lambda$ (i.e., $\mathbf{E}\,[\widetilde{\Lambda}_\mathrm{MC}] = \Lambda$),
    \begin{gather}\label{eqn:mc}
        \widetilde{\Lambda}_\mathrm{MC} 
	= \sum_{i=1}^n
	   (t_i - t_{i-1})\left(
		\frac{1}{L} \sum_{j=1}^{L} \lambda^\ast(s_j)
		\right);
    \end{gather}
    
    \item Numerical integration~\cite{stoer2013introduction}.
    It is usually biased but fast due to the elimination of sampling. For example, trapezoidal rule approximates $\Lambda$ using the following functions,
    \begin{gather}\label{eqn:nu}
    \widetilde{\Lambda}_\mathrm{NU}
	=\sum_{i=1}^n
	   \frac{t_i-t_{i-1}}{2}
		\left(
		\lambda^\ast(t_i)
		+ \lambda^\ast(t_{i-1})
		\right).
    \end{gather}
\end{enumerate}
	
In experiments, we have found that the approximation defined in Eq.~(\ref{eqn:nu}) performs comparable to that in Eq.~(\ref{eqn:mc}) while consuming less computational time. Hence we choose numerical integration in our main approach. 
	
\begin{figure}[t!]
\centerline{\includegraphics[width=.45\textwidth]{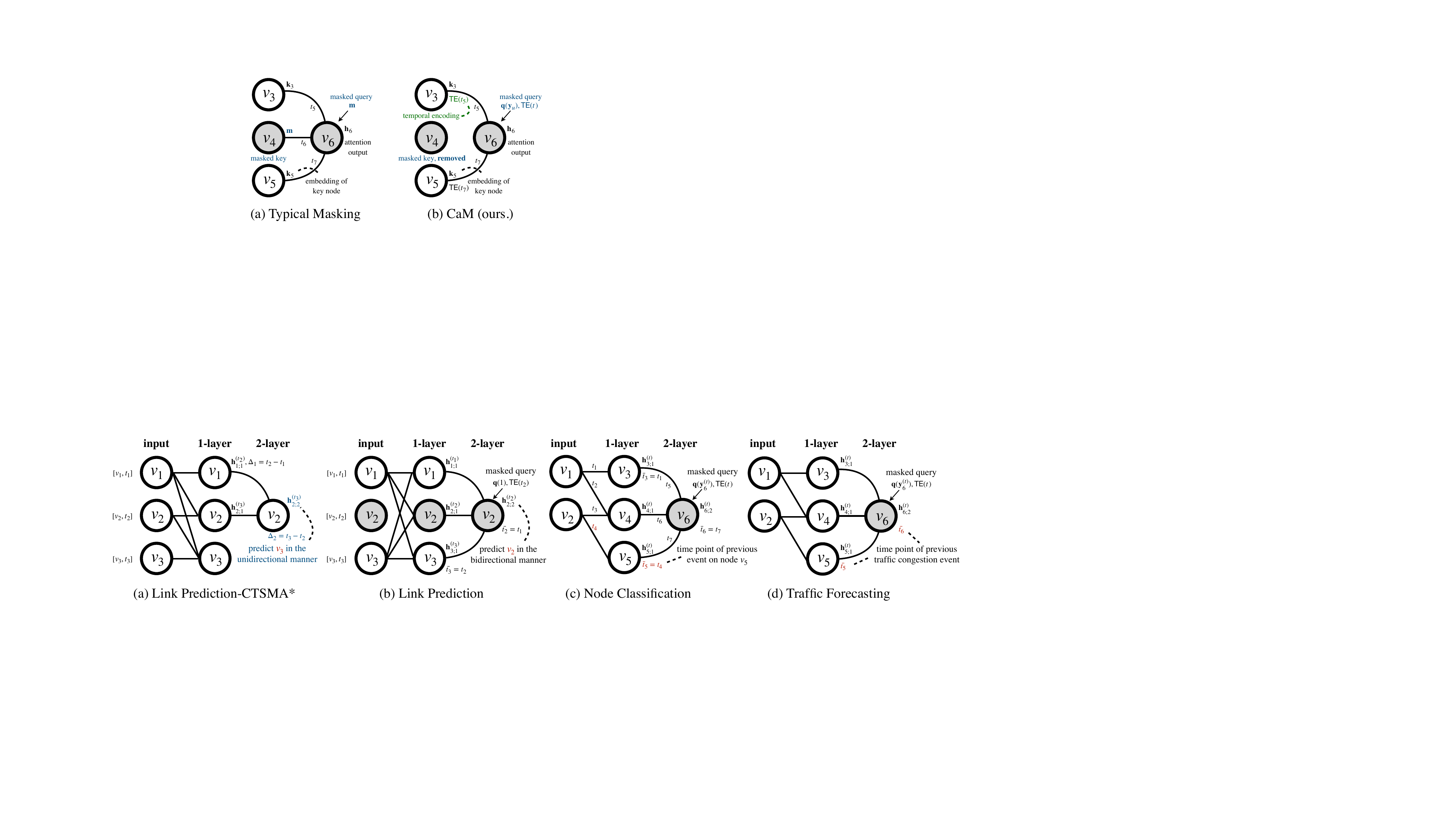}}
\vspace{-5pt}
\caption{\label{fig:cmpCaM}Comparison between typical masking used in~\cite{tailor2021degree,hou2022graphmae,thakoor2022largescale} and our proposed correlation-adjusted masking (\mssl) on graphs, where $v_3,v_4,v_5$ connected with $v_6$ at time $t_5,t_6,t_7$ respectively and $t$ signifies the future time of interest. We add time embedding TE($t_3$), TE($t_5$) to key nodes $v_3,v_5$ and TE($t$) to query node $v_6$.}
\vskip -0.1in
\end{figure}
\subsection{Correlation-adjusted Masking on Dynamic Graphs for Task-agnostic Learning}
\subsubsection{The Proposed Masking-based Learning Paradigm}
Before delving into our paradigm, we review recent masked graph models~\cite{tailor2021degree,hou2022graphmae,thakoor2022largescale} that replace a random part of nodes with special tokens. Let $\mathbf{m}$ denote the embedding of masked query node $v_u$, then the attention output is:
\begin{flalign}\label{eq:spmask}
\mathbf{h}_u
    \!=\!\sum_{i\in\mathcal{N}_u}\!\!
	\left( 
		\frac{\exp\left(
			e_{u,i}(\mathbf{m}, \mathbf{k}_i)
			\right)}
    {\sum_{i'}\exp\left(
	e_{u,i'}(\mathbf{m}, \mathbf{k}_i')
		\right)}
    \right)\mathbf{v}_i,
\end{flalign}where $\mathbf{k}_i$ is the embedding of the neighbor $i\in\mathcal{N}$. We note that different queries that are masked will share embedding $\mathbf{m}$, which indicates that the importance of different keys (indicated by the learned attention scores) are shared across these queries. This encourages the attention to have a global ranking of influential nodes, which makes sense in many link prediction applications such as recommender systems, as it exploits the items with high click-through rates. However, in some applications (e.g., traffic forecasting in Table~\ref{tab:motiv}) localized ranking of influential nodes might be desired.
	
To address the above issue, we design a new correlation-adjusted masking method, called {\mssl}. Specifically, let $\mathcal{M}_u$ denote the set of masked keys which will be removed and not used during training, and $\mathbf{q}(\mathbf{y}_u)$ represents the label-aware embedding of masked query (e.g., $\mathbf{q}(\mathbf{y}_u)=\mathbf{W}\mathbf{y}_u$). We then add sinusoidal temporal encodings to all keys and queries; without this, masked nodes will have no information about their location on dynamic graph:
\begin{flalign}\label{eq:te}
\mathrm{TE}(t, i) = \left\{
    \begin{array}{ll}
        \sin(t/10000^{j/d}), 
        \quad\quad\,\mathrm{if}~i~ \mathrm{is~even},\\
        \cos(t/10000^{j-1/d}), 
        \quad\mathrm{if}~i~ \mathrm{is~odd}.
\end{array}\right.
\end{flalign}where for key $v_i\in\mathcal{N}_u$, $t$ is time point of previous event on the pair ($v_u,v_i$), whereas for query $v_u$ it is the future time in request. We note that it is challenging to perform positional encoding technique (e.g., \cite{vaswani2017attention,li2020distance}) on dynamic graphs due to the evolving graph structures that require to recompute Laplacian eigenmap multiple times. To that end, we compute the attention output $\mathbf{h}_u$ for masked query by:
\begin{flalign}\label{eq:camask}
\mathbf{h}_u
    \!=\!\!\!\sum_{i\in\mathcal{N}_u-\mathcal{M}_u}\!\!
	\left( 
	\frac{\exp\Big(
		e_{u,i}(\mathbf{q}(\mathbf{y}_u), \mathbf{k}_i)
		\Big)}
	{\sum_{i'}\exp\Big(
		e_{u,i'}(\mathbf{q}(\mathbf{y}_u), \mathbf{k}_i')
		\Big)}
\right)\mathbf{v}_i,
\end{flalign}where recall that $\mathcal{M}_u$ denotes the set of masked keys. As shown in Fig.~\ref{fig:cmpCaM}, we summarize the difference between our {\mssl} method and typical masking~\cite{tailor2021degree,hou2022graphmae,thakoor2022largescale} on graph that randomly replaces node features with special tokens:
\begin{enumerate}[label=\textbf{\arabic*)},wide=0em,parsep=0em,itemindent=2em]

\item \reviewertwo{Typical masking methods replace masked key nodes with special tokens whose embedding is shared and represented by $\mathbf{m}$, whereas we remove these masked key nodes.} Even though both these two methods can create diversified training examples in multi-epoch training, our method operates only on a subset of graph nodes, resulting in a large reduction in computation and memory usage.
		
\item We add temporal embeddings to all keys and queries, which helps the model to focus on recently formed edges (recall that in our setting, $t$ in Eq.~\eqref{eq:te} signifies time point of previous event for key nodes) and certain past timestamps.
		
\item We adopt label-aware embedding $\mathbf{q}(\mathbf{y}_u)$ to represent the masked query node rather than the shared embedding $\mathbf{m}$. Equation~\eqref{eq:camask} shows that the key whose embedding $\mathbf{k}_i$ is closer to $\mathbf{q}(\mathbf{y}_u)$ will contribute more to the output $\mathbf{h}_u$. This essentially encourages the attention to focus on the keys that are highly correlated with the query label. We note that typical masking is a special case of {\mssl} when $\mathbf{y}_u$ consists only of ones but no zeros (also known as positive-unlabeled learning~\cite{du2014analysis}). This setting is very common in recommender systems where only ``likes'' (i.e., ones) are observed.
\end{enumerate}

\subsubsection{Task I: Learning for Dynamic Link Prediction}\label{sec:lpredc}
\textbf{Task Formulation.} The problem of link prediction is motivated in domains such as recommender systems. As defined in Definition~\ref{def:dyLink}, our goal in this case is to predict which item a user will purchase \textit{at a given future timestamp}, based on the observation of a past time window.
	
\textbf{Event.}
This work is focused on the edge-addition events which correspond to new user-item feedbacks in the context of recommender systems. This definition of event can address the core of the recommendation problem.
	
\textbf{Training Loss.}
Fig.~\ref{fig:cam}(b) presents the details about model encoding and training for link prediction. Given one user's history $\mathcal{N}_u$, at each time $t_i$ we have the embedding $\mathbf{h}_{i;L}^\mathrm{(t_i)}$ for node $v_i$ that is used to produce an output distribution:
\begin{flalign}
    \hat{\mathbf{y}}_{u}^{(t_i)} = \text{softmax}(\mathbf{h}_{i;L}^\mathrm{(t_i)}\mathbf{W}^O+\mathbf{b}^O)
\end{flalign}where $\mathbf{W}^O\!\in\!\mathbb{R}^{d\times{N}},\mathbf{b}^{O}\!\in\!\mathbb{R}^N$ are the model parameters and $N,d$ are separately the number of nodes and the embedding size. To optimize the overall parameters $\Theta$, we minimize the following objective function:
\begin{flalign}
\ell(\Theta) = 
    \mathbf{E}_u \frac{1}{|\mathcal{M}_u|} 
    \sum_{i,i'\in{\mathcal{M}_u}}
-\hat{\mathbf{y}}_{u,i}^{(t_i)}
    \log(\mathbf{y}_{u,i'}^{(t_i)})
    - \gamma 
\mathbf{E}_{u}\,R(\Theta; u), \nonumber
\end{flalign}where $\mathbf{y}_{u,i'}^{(t_i)}=1$ means that user $u$ bought item $i'$ at time $t_i$, such that $\mathbf{y}_{u,i'}^{(t_i)}=1$ only when $i'=i$ and $\mathbf{y}_{u,i'}^{(t_i)}=0$ otherwise; $R(\Theta; u)$ represents the {\mreg} term and $\mathcal{M}_u$ denotes the set of the masked nodes.
We note that the model could trivially predict the target node in multi-layered architectures, and hence the model is trained only on the masked nodes.
	
\textbf{Inference.}
In training, the model predicts each masked node $v_i$ with the embedding $\mathbf{q}(\mathbf{y}^{(t_i)}_{u,i})$, where $\mathbf{y}^{(t_i)}_{u,i}=1$ for all $v_i\in\mathcal{N}_u$. Hence, the embedding $\mathbf{q}(1)$ is shared across all masked nodes $v_i\in\mathcal{M}_u$ and does not disclose the identity of underlying node $v_i$. To predict the future behaviors at testing stage, we append $\mathbf{q}(1)$ to the end of the sequence $\mathcal{N}_u$, then exploit its output distribution to make predictions.

\subsubsection{Task II: Learning for  Dynamic Node Classification}\label{sec:nclassf}
\textbf{Task Formulation.} Node classification is one of the most widely adopted tasks for evaluating graph neural networks, and as defined in Definition~\ref{def:dyNode}, the algorithm is designed to label every node with a categorical class \textit{at a future time point}, given the observation of the history time window.

\textbf{Event.} Likewise, we focus on edge-addition events, for example on the Elliptic dataset, a new edge can be viewed as a flow of bitcoins from one transaction (node) to the other.
 
\textbf{Training Loss.}
Fig.~\ref{fig:cam}(c) presents details on model encoding and training for node classification. Given $\mathcal{N}_u$, we compute $\mathbf{h}^{(t_u)}_{u;L}$ for query $v_u$ by stacking $L$ layers to predict $\mathbf{y}_u^{(t_u)}$ at time $t_u$ (e.g., the time of next event on query $v_u$):
\begin{flalign}
\hat{\mathbf{y}}_{u}^{(t_u)} 
    = \operatorname{softmax}(\mathbf{h}^{(t_u)}_{u;L}\mathbf{W}^O) 
    + \mathbf{b}^O, 
\end{flalign}where $\mathbf{W}^O\!\in\!\mathbb{R}^{d\times{C}},\mathbf{b}^{O}\!\in\!\mathbb{R}^C$ are the model parameters and $C$ is the number of classes while $d$ is the embeddings size. 
To the end, we optimize the overall model parameters $\Theta$ by minimizing the following cross-entropy loss:
\begin{flalign}
\ell(\Theta) \!=\! 
    \mathbf{E}_{u\in\mathbb{V}}
    \mathrm{CrossEntropy}(\mathbf{y}_u^{(t_u)}, \hat{\mathbf{y}}_{u}^{(t_u)})
	- \gamma 
    \mathbf{E}_{u}\,R(\Theta; u), \nonumber
\end{flalign}where note that the loss is computed on all nodes $\mathbb{V}$, not limited to the masked ones. We empirically found that this stabilizes the training and increases the performance.

\textbf{Inference.}
During training, the embedding $\mathbf{q}(\mathbf{y}^{(t_u)}_u)$ for masked queries is derived from the label $\mathbf{y}^{(t_u)}_u$ which is not available in testing phase. Therefore, we disable the masking and use original node features $\mathbf{X}$, i.e., $\mathbf{H}_0=\mathbf{W}\mathbf{X}$ as model input. Warn that training the model solely on masked nodes will introduce a mismatch between the training and testing, which hurts the performance in general.

\begin{figure*}[t!]
\centerline{\includegraphics[width=.92\textwidth]{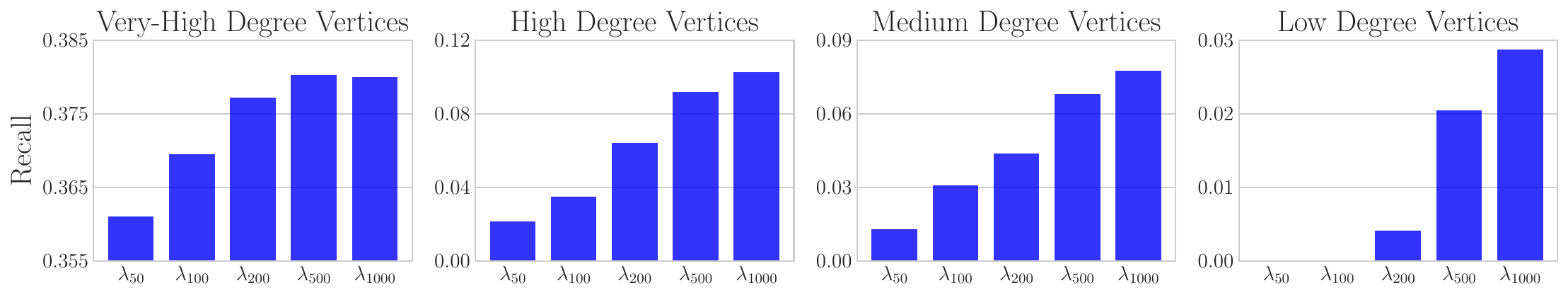}}
\caption{\label{fig:eigen_recall}Recall results of very-high degree vertices (left), high degree vertices (left middle), medium degree vertices (right middle) and low degree vertices (right) for link prediction on Netflix, where $\lambda_{50}$ corresponds to GSIMC~\cite{chen2023graph} by using representations with a frequency not greater than $\lambda_{50}$ to make predictions. The results show that \textit{low(high)-frequency representations reflect the user preferences on the popular(cold) items}.}
\end{figure*}
\subsubsection{Task III: Learning for Node Traffic Forecasting}\label{sec:tforecast}
\textbf{Task Formulation.} Traffic forecasting is the core component of intelligent transportation systems. As defined in Definition~\ref{def:traff}, the goal is to predict the future traffic speed readings on road networks given historical traffic data. 
	
	
\textbf{Event.}
In traffic networks, one road that is connected to congested roads would likely be congested. To account for this, when a road is congested, we regard it as the events of edge addition happening on its connected roads. To be more specific, we define the traffic congestion on a road when its traffic reading is significantly lower than hourly mean value at the $95\%$ confidence level.

\textbf{Training Loss.}
Fig.~\ref{fig:cam}(d) presents the details about model encoding and training for node traffic forecasting. Likewise, we predict the reading $\mathbf{y}^{(t)}_u$ at time $t$ (e.g., 15 min later) for road $v_u$ based on the time-conditioned representation $\mathbf{h}^{(t)}_{u;L}$:
\begin{flalign}
\hat{\mathbf{y}}_{u}^{(t)} =
    \left\langle 
	\mathbf{h}^{(t)}_{u;L}, \mathbf{w}^O
    \right\rangle + {b}^O, 
\end{flalign}where $\mathbf{w}^O\!\in\!\mathbb{R}^{d},{b}^{O}\!\in\!\mathbb{R}$ are the model parameters and $d$ is the embeddings size. To optimize the model parameters $\Theta$, we minimize the following least-squared loss:
\begin{flalign}
\ell(\Theta) = 
    \mathbf{E}_{u\in\mathbb{V}}
    \Big(
	\mathbf{y}_u^{(t)} - \hat{\mathbf{y}}_{u}^{(t)}
    \Big)^2
- \gamma 
    \mathbf{E}_{u}\,R(\Theta; u). \nonumber
\end{flalign}
	
\textbf{Inference.}
Analogous to the treatment for node classification, we switch off the masking and use the original node features as input in the testing stage.

\section{Interpreting: Global Analysis with Scalable and Stable Graph Spectral Analysis}\label{sec:interpret}
This section introduces a spectral perturbation technique to interpret the model output in the graph Fourier domain. To deal with very large graphs, a scalable and orthogonalized graph Laplacian decomposition algorithm is developed.

\subsection{Graph Fourier Transform}
\begin{definition}[\textbf{Graph Signal}]
Given any graph $\mathcal{G}$, the values residing on a set of nodes is referred as a graph signal. In matrix notation, graph signal can be represented by a vector $\mathbf{h}\!\in\!\mathbb{R}^N$ where $N$ is the number of nodes in graph $\mathcal{G}$.
\end{definition}
	
Let $\mathbf{H}^{(t)}$ ($\mathbf{Y}^{(t)}$) denote the embeddings (predictions) for each node at time $t$ where each $N$-size column of $\mathbf{H}^{(t)}$ ($\mathbf{Y}^{(t)}$) can be viewed as graph signal. In what follows, we consider signal's representations in the graph Fourier domain.
	
\begin{definition}[\textbf{Graph Fourier Transform}]
Let $\{\mathbf{u}_l\}$ and $\{\lambda_l\}$ denote the eigenvectors and eigenvalues of graph Laplacian $ \mathbf{L}=\mathbf{I}-\mathbf{D}^{-1/2}\mathbf{A}\mathbf{D}^{-1/2}$ where $\mathbf{D}$ is a diagonal degree matrix and $\mathbf{A}$ represents the affinity matrix, then for signal $\mathbf{f}\!\in\!\mathbb{R}^N$, we define the graph Fourier transform and its inverse by:
\begin{flalign}\label{eq:gft}
\widetilde{\mathbf{f}}(\lambda_l)=\sum_{i=0}^{N-1}  
    \mathbf{f}(i)\mathbf{u}_l(i)
	\quad\text{and}\quad
    \mathbf{f}(i) = \sum_{l=0}^{N-1}
\hat{\mathbf{f}}(\lambda_l)\mathbf{u}_l(i).
\end{flalign}where in Fourier analysis on graphs~\cite{ortega2018graph}, the eigenvalues $\{\lambda_l\}$ carry a notion of frequency: for $\lambda_l$ close to zero (i.e., low frequencies), the associated eigenvector varies litter between connected vertices. 
\end{definition}
	
For better understanding, Fig.~\ref{fig:eigen_recall} presents case studies in recommender systems on the Netflix prize data~\cite{bennett2007netflix}. Specifically, we divide the item vertices into four classes: very-high degree ($>\!5000$), high degree ($>\!2000$), medium degree ($>\!100$) and low degree vertices. Then, we report the recall results of spectral graph model GSIMC~\cite{chen2023graph} on four classes by only passing signals with a frequency not greater than $\lambda_{50},\dots,\lambda_{1000}$ to make top-$100$ recommendations. 
One can see that: 
1) the low-frequency signals with eigenvalues less than $\lambda_{100}$ contribute nothing to low degree vertices; 
2) the high-frequency signals with eigenvalues greater than $\lambda_{500}$ do not help increase the performance on very-high degree vertices. This finding reveals that low (high)-frequency signals reflect user preferences on popular (cold) items.

\subsection{Model-Level Interpreting in Graph Fourier Domain}\label{sec:gsanaly}
Most prior works on interpretable graph learning mainly often deal with the instance-level explanations in the graph vertex domain~\cite{yuan2022explainability}, e.g., in terms of the importance of node attributes indicated by the learned attention coefficients~\cite{fan2021gcn} and (sub)graph topologies based on the Shapley value~\cite{yuan2021explainability}. 
	
It has been extensively discussed in recent survey~\cite{yuan2022explainability} that the methods for model-level explanations are less studied, and existing works~\cite{yuan2020xgnn,lin2021generative} require extra training of reinforcement learning models.
	
By contrast, we aim at providing a global view of trained graph models in the graph Fourier domain. Compared with the explanations~\cite{yuan2020xgnn,lin2021generative} in vertex domain, it has a distinctive advantage since the frequency characterizes the smoothness of a graph signal (e.g., model predictions) with respect to the underlying graph structure. However, it is challenging to achieve this:
1) GFT itself is not capable of precisely quantifying the the predictive power of frequency content that a graph model learns from the data,
and 2) it is difficult to perform GFT on large graphs due to the $\mathcal{O}(N^3)$ cost of orthogonalized graph Laplacian decomposition.
	
\subsubsection{Spectral Perturbation on Graphs}
To enable the full explanatory power of GFT, we devise appropriate perturbations in the graph Fourier domain and measure their resulting change in prediction accuracy.
Let $\mathcal{S}$ denote the frequency band (e.g., greater than $\lambda_{500}$), column $\hat{\mathbf{y}}$ denotes the predictions (e.g., predicted user behaviors on all item nodes $\mathbb{V}$), then we define the intra-perturbations:
\begin{flalign}\label{eq:pertb}
\textit{perturb.}~\hat{\mathbf{y}}
    = \hat{\mathbf{y}} - \sum_{l\in\mathcal{S}} \mathbf{u}_l\mathbf{u}_l^\top\hat{\mathbf{y}}
\end{flalign}where $\{\mathbf{u}_l\}$ denotes the Laplacian eigenvectors. Essentially, Eq.~\eqref{eq:pertb} rejects the frequency content of signal $\hat{\mathbf{y}}$ in band $\mathcal{S}$.
	
\begin{proposition}
Given $\textit{perturb.}~\hat{\mathbf{y}}$ which is perturbed by Eq.~\eqref{eq:pertb}, its Fourier transform does not have support in $\mathcal{S}$, i.e.,
\begin{flalign}
\widetilde{\textit{perturb.}}~\hat{\mathbf{y}}(\lambda_l)=0
    \quad\text{for}\quad\lambda_l\in\mathcal{S}. \nonumber
\end{flalign}
\end{proposition}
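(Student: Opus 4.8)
The plan is to compute the graph Fourier transform of the perturbed signal at an arbitrary frequency index $m$ with $\lambda_m\in\mathcal{S}$, directly from the definition in Eq.~\eqref{eq:gft}, and to show that it vanishes by orthonormality of the Laplacian eigenvectors. First I would observe that the transform $\widetilde{\mathbf{f}}(\lambda_l)=\sum_{i}\mathbf{f}(i)\mathbf{u}_l(i)$ is exactly the inner product $\mathbf{u}_l^\top\mathbf{f}$, so evaluating the spectrum of the signal defined in Eq.~\eqref{eq:pertb} reduces to computing $\mathbf{u}_m^\top\bigl(\hat{\mathbf{y}}-\sum_{l\in\mathcal{S}}\mathbf{u}_l\mathbf{u}_l^\top\hat{\mathbf{y}}\bigr)$.

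Next, using linearity of the inner product I would split this into $\mathbf{u}_m^\top\hat{\mathbf{y}}-\sum_{l\in\mathcal{S}}(\mathbf{u}_m^\top\mathbf{u}_l)(\mathbf{u}_l^\top\hat{\mathbf{y}})$. Because the normalized Laplacian $\mathbf{L}=\mathbf{I}-\mathbf{D}^{-1/2}\mathbf{A}\mathbf{D}^{-1/2}$ is real symmetric, the spectral theorem guarantees an orthonormal eigenbasis, hence $\mathbf{u}_m^\top\mathbf{u}_l=\delta_{ml}$. Since $m\in\mathcal{S}$, the only surviving term of the sum is $l=m$, so the expression collapses to $\mathbf{u}_m^\top\hat{\mathbf{y}}-\mathbf{u}_m^\top\hat{\mathbf{y}}=0$, which is precisely the claim for every $\lambda_m\in\mathcal{S}$.

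The computation is short, so the only delicate point is the orthonormality that licenses $\mathbf{u}_m^\top\mathbf{u}_l=\delta_{ml}$. The step I would treat most carefully is justifying that an orthonormal eigenbasis exists even when eigenvalues repeat: within each eigenspace one applies Gram--Schmidt to produce orthonormal vectors, and the resulting basis remains indexed consistently with the band $\mathcal{S}$ used to define the perturbation. Once this is in place the cancellation is immediate, and running the same argument for each $m$ with $\lambda_m\in\mathcal{S}$ shows the perturbed signal carries no spectral support on $\mathcal{S}$.
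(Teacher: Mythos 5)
Your proposal is correct and follows essentially the same argument as the paper's own proof: expand the transform of the perturbed signal as $\mathbf{u}_l^\top\hat{\mathbf{y}}-\sum_{l'\in\mathcal{S}}(\mathbf{u}_l^\top\mathbf{u}_{l'})(\mathbf{u}_{l'}^\top\hat{\mathbf{y}})$ and invoke orthonormality of the Laplacian eigenvectors so that only the $l'=l$ term survives and cancels. The extra care you take in justifying the existence of an orthonormal eigenbasis is a reasonable addition but does not change the route.
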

\begin{proof}
Following Eq.~\eqref{eq:gft}, for $\lambda_l\in\mathcal{S}$ we have
\begin{flalign}
\widetilde{\textit{perturb.}}~\hat{\mathbf{y}}(\lambda_l) = 
	\mathbf{u}_l^\top\hat{\mathbf{y}} -
	\mathbf{u}_l^\top\sum_{l'\in\mathcal{S}} \mathbf{u}_{l'}\mathbf{u}_{l'}^\top\hat{\mathbf{y}}
	=\mathbf{u}_l^\top\hat{\mathbf{y}} - \mathbf{u}_l^\top\hat{\mathbf{y}}
	= 0, \nonumber
\end{flalign}where $\mathbf{u}_l^\top\mathbf{u}_{l'}=1$ if $l=l'$ and $\mathbf{u}_l^\top\mathbf{u}_{l'}=0$ otherwise.
\end{proof}
	
In practice, it is of importance to precisely quantify the predictive power of different frequency contents across two models. For instance, to exaimine if dynamic graph model $\hat{\mathbf{y}}_1$ (e.g., BERT4REC~\cite{sun2019bert4rec}) outperforms static graph model $\hat{\mathbf{y}}_2$ (e.g., GSIMC~\cite{chen2023graph}) due to better content at high frequencies $\mathcal{S}$ or not. To fulfill this, we propose the inter-perturbations:
\begin{flalign}\label{eq:interpertb}
\hat{\mathbf{y}}_1~\textit{perturb.}~\hat{\mathbf{y}}_2
    = \hat{\mathbf{y}}_2 
	 - \sum_{l\in\mathcal{S}} \mathbf{u}_l\mathbf{u}_l^\top\hat{\mathbf{y}}_2
    + \sum_{l\in\mathcal{S}} \mathbf{u}_l\mathbf{u}_l^\top\hat{\mathbf{y}}_1,
\end{flalign}where the frequency content of $\hat{\mathbf{y}}_2$ in band $\mathcal{S}$ is replaced by that of $\hat{\mathbf{y}}_1$.
	
\begin{proposition}
Given $\hat{\mathbf{y}}_1~\textit{perturb.}~\hat{\mathbf{y}}_2$ in Eq.~\eqref{eq:interpertb}, it satisfies 
\begin{flalign}
\hat{\mathbf{y}}_1~\widetilde{\textit{perturb.}}~\hat{\mathbf{y}}_2(\lambda_l) = \left\{
    \begin{array}{ll}
        \tilde{\mathbf{y}}_2(\lambda_l), 
        \quad\quad\,\mathrm{if}~\lambda_l\notin\mathcal{S},\\
        \tilde{\mathbf{y}}_1(\lambda_l), 
        \quad\quad\,\mathrm{if}~\lambda_l\in\mathcal{S}.
    \end{array}\right. \nonumber
\end{flalign}where $\widetilde{\mathbf{y}}_1(\lambda_l)$ and $\widetilde{\mathbf{y}}_2(\lambda_l)$ signify the information of $\hat{\mathbf{y}}_1$ and $\hat{\mathbf{y}}_2$ at frequency $\lambda_l$ in the graph Fourier domain, respectively.
\end{proposition}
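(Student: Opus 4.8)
The plan is to follow exactly the template of the preceding proposition: apply the graph Fourier transform of Eq.~\eqref{eq:gft} directly to the inter-perturbed signal in Eq.~\eqref{eq:interpertb} and let the orthonormality of the Laplacian eigenvectors $\{\mathbf{u}_l\}$ collapse the two projection sums. Recalling that the transform evaluated at frequency $\lambda_l$ is just the inner product $\widetilde{\mathbf{f}}(\lambda_l)=\sum_i \mathbf{f}(i)\mathbf{u}_l(i)=\mathbf{u}_l^\top\mathbf{f}$, I would first left-multiply the right-hand side of Eq.~\eqref{eq:interpertb} by $\mathbf{u}_l^\top$ and distribute it across the three terms, writing $\mathbf{u}_l^\top\hat{\mathbf{y}}_2-\mathbf{u}_l^\top\sum_{l'\in\mathcal{S}}\mathbf{u}_{l'}\mathbf{u}_{l'}^\top\hat{\mathbf{y}}_2+\mathbf{u}_l^\top\sum_{l'\in\mathcal{S}}\mathbf{u}_{l'}\mathbf{u}_{l'}^\top\hat{\mathbf{y}}_1$, keeping the free index $l$ notationally distinct from the summation index $l'$.

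The argument then splits on whether $\lambda_l\in\mathcal{S}$. In the case $\lambda_l\notin\mathcal{S}$, the index $l$ differs from every $l'\in\mathcal{S}$, so $\mathbf{u}_l^\top\mathbf{u}_{l'}=0$ throughout and both projection sums vanish, leaving only $\mathbf{u}_l^\top\hat{\mathbf{y}}_2=\widetilde{\mathbf{y}}_2(\lambda_l)$, which is the first branch. In the case $\lambda_l\in\mathcal{S}$, exactly the $l'=l$ summand of each sum survives because $\mathbf{u}_l^\top\mathbf{u}_{l'}=1$ iff $l'=l$; hence $\mathbf{u}_l^\top\sum_{l'\in\mathcal{S}}\mathbf{u}_{l'}\mathbf{u}_{l'}^\top\hat{\mathbf{y}}_2=\mathbf{u}_l^\top\hat{\mathbf{y}}_2$ and likewise for $\hat{\mathbf{y}}_1$. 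The first and second terms cancel, and I am left with $\mathbf{u}_l^\top\hat{\mathbf{y}}_1=\widetilde{\mathbf{y}}_1(\lambda_l)$, the second branch. Assembling the two cases reproduces the stated piecewise identity.

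I do not anticipate any genuine analytical obstacle, since the result is a direct consequence of the spectral projectors $\mathbf{u}_l\mathbf{u}_l^\top$ being orthogonal idempotents, precisely the property already exploited in the previous proposition. The only point needing care is the bookkeeping in the $\lambda_l\in\mathcal{S}$ case: I must verify that subtracting the band-$\mathcal{S}$ content of $\hat{\mathbf{y}}_2$ and then adding back that of $\hat{\mathbf{y}}_1$ yields a \emph{clean swap} of spectral content rather than leaving a residual, which is exactly what the cancellation of the first two terms guarantees. Everything outside the band $\mathcal{S}$ is left untouched, confirming that the inter-perturbation replaces only the $\mathcal{S}$-frequencies of $\hat{\mathbf{y}}_2$ by those of $\hat{\mathbf{y}}_1$.
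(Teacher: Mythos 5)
Your proposal is correct and follows essentially the same argument as the paper: apply the graph Fourier transform to Eq.~\eqref{eq:interpertb} and let orthonormality of the eigenvectors collapse the projection sums, yielding cancellation in the $\lambda_l\in\mathcal{S}$ case. The only difference is cosmetic --- you carry out the explicit orthogonality computation for the $\lambda_l\notin\mathcal{S}$ branch, whereas the paper dismisses that case as trivial since the perturbation only alters content in band $\mathcal{S}$.
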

\begin{proof}
As Eq.~\eqref{eq:interpertb} only changes the content in band $\mathcal{S}$, it is trivial to conclude $\hat{\mathbf{y}}_1~\widetilde{\textit{perturb.}}~\hat{\mathbf{y}}_2(\lambda_l)=\widetilde{\mathbf{y}}_2(\lambda_l)$ for $\lambda_l\notin\mathcal{S}$. 
Then for $\lambda_l\in\mathcal{S}$, applying Eq.~\eqref{eq:gft} leads to
\begin{flalign}
\hat{\mathbf{y}}_1~&\widetilde{\textit{perturb.}}~\hat{\mathbf{y}}_2(\lambda_l) \nonumber\\
	&= \mathbf{u}_l^\top\hat{\mathbf{y}}_2 
	    - \mathbf{u}_l^\top\sum_{l'\in\mathcal{S}} \mathbf{u}_{l'}\mathbf{u}_{l'}^\top\hat{\mathbf{y}}_2 
	+ \mathbf{u}_l^\top\sum_{l'\in\mathcal{S}} \mathbf{u}_{l'}\mathbf{u}_{l'}^\top\hat{\mathbf{y}}_1
	 \nonumber\\
	&= \mathbf{u}_l^\top\hat{\mathbf{y}}_2 
	   - \mathbf{u}_l^\top\hat{\mathbf{y}}_2 
	   + \mathbf{u}_l^\top\hat{\mathbf{y}}_1 \nonumber\\
	&= \mathbf{u}_l^\top\hat{\mathbf{y}}_1 = \widetilde{\mathbf{y}}_1, \nonumber
\end{flalign}where $\mathbf{u}_l^\top\mathbf{u}_{l'}=1$ if $l=l'$ and $\mathbf{u}_l^\top\mathbf{u}_{l'}=0$ otherwise.
\end{proof}
	
We highlight the qualitative results in Sec.~\ref{sec:lp_qual} that shows BERT4REC takes benefit of high-frequency signals ($>\lambda_{500}$) to achieve better accuracy than GSIMC on Netflix data.

\begin{table*}[tb!]
\caption{\label{tab:cmp_hr}Hit-Rate ({HR}) on the Koubei, Tmall and Netflix datasets for dynamic link prediction, where our model is compared against spatial graph models (i.e., GAT~\cite{velivckovic2017graph} and SAGE~\cite{hamilton2017inductive}), spectral graph models (i.e., GCN~\cite{kipf2016semi}, ChebyNet~\cite{defferrard2016convolutional}, ARMA~\cite{bianchi2021graph}, MRFCF~\cite{steck2019markov}, GSIMC and BGSIMC~\cite{chen2023graph}), sequential recommendation models (i.e., GRU4REC~\cite{hidasi2016session}, SASREC~\cite{kang2018self}, GREC~\cite{yuan2020future}, S2PNM~\cite{chen2021dyna} and BERT4REC~\cite{sun2019bert4rec}) and continuous-time graph models (i.e., DyREP~\cite{trivedi2019dyrep}, TGAT~\cite{xu2020inductive}, TiSASREC~\cite{li2020time}, TGREC~\cite{fan2021continuous}, TimelyREC~\cite{cho2021learning} and CTSMA~\cite{chen2021learning}). Bolded numbers denote the best, and the standard errors of the ranking metrics are less than 0.01 for all the results. $^*$Conference version of this work based on which we obtain the relative gain.}  
\begin{center}\resizebox{0.98\textwidth}{!}{
\begin{tabular}{l ccc c ccc c ccc} \toprule\toprule
    & 
    \multicolumn{3}{c}{\textbf{Koubei, Density=$\mathbf{0.08\%}$}}  && 
    \multicolumn{3}{c}{\textbf{Tmall, Density=$\mathbf{0.10\%}$}} &&
    \multicolumn{3}{c}{\textbf{Netflix, Density=$\mathbf{1.41\%}$}} \\\cmidrule{2-4}\cmidrule{6-8}\cmidrule{10-12}
    \textbf{Model}   &
    \textbf{HR@10} & \textbf{HR@50} & \textbf{HR@100} &&
    \textbf{HR@10} & \textbf{HR@50} & \textbf{HR@100} &&
    \textbf{HR@10} & \textbf{HR@50} & \textbf{HR@100} \\\midrule
    \textbf{GAT}~\cite{velivckovic2017graph} &  
    0.19715 & 0.26440 & 0.30125 && 
    0.20033 & 0.32710 & 0.39037 && 
    0.08712 & 0.19387 & 0.27228 \\
    \textbf{SAGE}~\cite{hamilton2017inductive}  & 
    0.20600 & 0.27225 & 0.30540 && 
    0.19393 & 0.32733 & 0.39367 && 
    0.08580 & 0.19187 & 0.26972 \\\midrule
    \textbf{GCN}~\cite{kipf2016semi}  & 
    0.20090 & 0.26230 & 0.30345 && 
    0.19213 & 0.32493 & 0.38927 && 
    0.08062 & 0.18080 & 0.26720 \\
    \textbf{ChebyNet}~\cite{defferrard2016convolutional} &  
    0.20515 & 0.28100 & 0.32385 && 
    0.18163 & 0.32017 & 0.39417 && 
    0.08735 & 0.19335 & 0.27470 \\
    \textbf{ARMA}~\cite{bianchi2021graph}  &
    0.20745 & 0.27750 & 0.31595 && 
    0.17833 & 0.31567 & 0.39140 && 
    0.08610 & 0.19128 & 0.27812 \\
    \textbf{MRFCF}~\cite{steck2019markov}  &
    0.17710 & 0.19300 & 0.19870 &&
    0.19123 & 0.28943 & 0.29260 && 
    0.08738 & 0.19488 & 0.29048 \\
    \textbf{GSIMC}~\cite{chen2023graph}  &
    0.23460 & 0.31995 & 0.35065 && 
    0.13677 & 0.31027 & 0.40760 && 
    0.09725 & 0.22733 & 0.32225  \\
    \textbf{BGSIMC}~\cite{chen2023graph}  &
    0.24390 & 0.32545 & 0.35345 && 
    0.16733 & 0.34313 & 0.43690 && 
    0.09988 & 0.23390 & 0.33063  \\\midrule  
					
    \textbf{GRU4REC}~\cite{hidasi2016session}  &
    0.26115 & 0.33495 & 0.37375 &&
    0.24747 & 0.37897 & 0.44980 && 
    0.23442 & 0.40445 & 0.49175 \\
    \textbf{SASREC}~\cite{kang2018self}  &
    0.24875 & 0.32805 & 0.36815 && 
    0.25597 & 0.39040 & 0.45803 && 
    0.22627 & 0.39222 & 0.48085 \\
    \textbf{GREC}~\cite{yuan2020future}  &
    0.23831 & 0.31533 & 0.35431 && 
    0.22470 & 0.35057 & 0.41507 && 
    0.24338 & 0.41690 & 0.50483 \\
    \textbf{S2PNM}~\cite{chen2021dyna}  &
    0.26330 & 0.33770 & 0.37730 && 
    0.25470 & 0.38820 & 0.45707 && 
    0.24762 & 0.42160 & 0.50858 \\
    \textbf{BERT4REC}~\cite{sun2019bert4rec}  &
    0.25663 & 0.33534 & 0.37483 && 
    0.26810 & 0.40640 & 0.47520 && 
    0.25337 & 0.43027 & 0.52115 \\\midrule
					
    \textbf{DyREP}~\cite{trivedi2019dyrep}  &
    0.26360 & 0.33490 & 0.37150 && 
    0.25937 & 0.38593 & 0.44903 && 
    0.22883 & 0.40997 & 0.50093 \\
    \textbf{TGAT}~\cite{xu2020inductive}  &
    0.25095 & 0.32530 & 0.36195 && 
    0.26070 & 0.39533 & 0.46503 && 
    0.22755 & 0.39623 & 0.48232 \\
    \textbf{TiSASREC}~\cite{li2020time}  &
    0.25295 & 0.33115 & 0.37110 && 
    \cellcolor{gray90}{0.26303} & 
    \cellcolor{gray90}{0.40463} & 
    \cellcolor{gray90}{0.47390} && 
    0.24355 & 0.41935 & 0.50767 \\
    \textbf{TGREC}~\cite{fan2021continuous}  &
    0.25175 & 0.31870 & 0.35655 && 
    0.24577 & 0.37960 & 0.44827 && 
    0.22733 & 0.38415 & 0.47170 \\
    \textbf{TimelyREC}~\cite{cho2021learning}  &
    0.25565 & 0.32835 & 0.37085 && 
    0.24280 & 0.36853 & 0.43240 && 
    0.23018 & 0.39772 & 0.48608 \\
    \reviewerone{\textbf{TiCoSeREC}}~\cite{dang2023uniform} &
    \reviewerone{0.24795} &
    \reviewerone{0.33485} &
    \reviewerone{0.37975} &&
    \reviewerone{0.25440} &
    \reviewerone{0.38430} &
    \reviewerone{0.45563} &&
    \reviewerone{0.22410} &
    \reviewerone{0.41090} &
    \reviewerone{0.50875} \\
    \reviewerone{\textbf{PTGCN}}~\cite{huang2023position} &
    \reviewerone{0.24075} &
    \reviewerone{0.32065} & 
    \reviewerone{0.36230} &&
    \reviewerone{0.23440} &
    \reviewerone{0.36430} &
    \reviewerone{0.43563} &&
    \reviewerone{0.19888} &
    \reviewerone{0.38927} &
    \reviewerone{0.47220} \\
    \reviewerone{\textbf{NeuFilter}}~\cite{xia2024neural} &
    \reviewerone{0.26915} & 
    \reviewerone{0.34340} & 
    \reviewerone{0.38210} &&
    \reviewerone{0.26810} & 
    \reviewerone{0.39520} & 
    \reviewerone{0.47550} &&
    \reviewerone{0.25565} &
    \reviewerone{0.42980} &
    \reviewerone{0.51522} \\
    \midrule
				    
    \rowcolor{gray90}
    \textbf{CTSMA}~\cite{chen2021learning} (conference ver.)  &
    0.27460 & 0.35250 & 0.39240 &&
    0.26760 & 0.40477 & 0.47310 &&
    0.25405 & 0.43723 & 0.52613 \\
					
    \reviewertwo{\textbf{\mframe} (ours)}  &
    \reviewertwo{\textbf{0.29740}} & 
    \reviewertwo{\textbf{0.40045}} & 
    \reviewertwo{\textbf{0.44820}} && 
    \reviewertwo{\textbf{0.27693}} &
    \reviewertwo{\textbf{0.42240}} & 
    \reviewertwo{\textbf{0.49420}} && 
    \reviewertwo{\textbf{0.29651}} & 
    \reviewertwo{\textbf{0.48301}} & 
    \reviewertwo{\textbf{0.57026}} \\
    \reviewertwo{\textbf{Rel. Gain} (over \cite{chen2021learning})} &
    \reviewertwo{8.3\%}  & 
    \reviewertwo{13.6\%} & 
    \reviewertwo{14.2\%} && 
    \reviewertwo{3.5\%}  & 
    \reviewertwo{4.4\%} &  
    \reviewertwo{4.5\%} && 
    \reviewertwo{16.7\%} & 
    \reviewertwo{10.5\%}  & 
    \reviewertwo{8.4\%} \\\bottomrule\bottomrule
    \end{tabular}
}\end{center} 
\end{table*}
\subsubsection{Scalable Graph Laplacian Decomposition for Graph Fourier Transforms with Orthogonality}\label{sec:interpret_decomp}
To address the scalability issue, we propose a scalable and orthogonalized graph Laplacian decomposition algorithm, which differs to~\cite{kumar2009sampling,halko2011finding} as the approximate eigenvectors are not orthogonal in these works. Note that orthogonality makes signal’s spectral representation computing fast and avoid the frequency components of signal’s spectrum interacting with each other in terms of frequency which thus could provide a more clean interpretability.
	
Specifically, we uniformly sample an $s$-column submatrix $\mathbf{C}$ from graph Laplacian matrix $\mathbf{L}$ and then define the normalized matrix $\mathbf{W}$ by following \cite{fowlkes2004spectral}:
\begin{flalign}
		\mathbf{C}= \left[\begin{array}{l}
			\mathbf{A} \\ \mathbf{B}^\top
		\end{array}\right]
		\quad\mathrm{and}\quad
		\mathbf{W}= \mathbf{A}^{-1/2}\mathbf{C}^{\top}\mathbf{CA}^{-1/2}.
\end{flalign}We adopt the randomized algorithm~\cite{halko2011finding} to approximate the eigenvectors of matrix $\mathbf{W}$ as an initial solution $(\mathbf{\Sigma}_W,\mathbf{U}_W)$, then by running the {\nystrom} method~\cite{fowlkes2004spectral} to extrapolate this solution to the full matrix $\mathbf{L}$.
		
\begin{algorithm}[t!]
\caption{Scalable and Orthogonalized Graph Laplacian Decomposition for Graph Fourier Transforms}
\label{alg:eigen}
\begin{algorithmic}[1]
    \REQUIRE  matrix $\mathbf{C}\in\mathcal{R}^{N\times{s}}$ derived from $s$ columns sampled from matrix $\mathbf{L}\!\in\!\mathcal{R}^{N\times{N}}$ without replacement, matrix $\mathbf{A}\!\in\!\mathcal{R}^{s\times{s}}$ composed of the intersection of these $s$ columns, matrix $\mathbf{W}\!\in\!\mathcal{R}^{s\times{s}}$, rank $r$, the oversampling parameter $p$ and the number of power iterations $q$.
    \ENSURE eigenvalues $\widetilde{\mathbf{\Sigma}}$ and eigenvectors $\mathbf{\widetilde{U}}$.
    \STATE Generate a random Gaussian matrix $\mathbf{\Omega} \in \mathbb{R}^{s\times{(r+p)}}$, then compute the sample matrix $\mathbf{A}^q\mathbf{\Omega}$.

   \STATE Perform QR-Decomposition on $\mathbf{A}^q\mathbf{\Omega}$ to obtain an orthonormal matrix $\mathbf{Q}$ that satisfies the equation $\mathbf{A}^q\mathbf{\Omega}=\mathbf{QQ}^\top\mathbf{A}^q\mathbf{\Omega}$, then solve $\mathbf{ZQ}^\top\mathbf{\Omega}=\mathbf{Q}^\top\mathbf{W\Omega}$.

   \STATE Compute the eigenvalue decomposition on the $(r+p)$-by-$(r+p)$ matrix $\mathbf{Z}$, i.e., $\mathbf{Z}=\mathbf{U_Z\Sigma_Z{U}_Z}^\top$, to obtain $\mathbf{U}_W=\mathbf{QU}_Z[:, :r]$ and $\mathbf{\Sigma}_W=\mathbf{\Sigma}_Z[:r, :r]$.

   \STATE Return $\widetilde{\mathbf{\Sigma}}\gets\mathbf{\Sigma}_W$, $\mathbf{\widetilde{U}}\gets\mathbf{CA}^{-1/2}\mathbf{U}_W\mathbf{\Sigma}_W^{-1/2}$.
\end{algorithmic}
\end{algorithm}

The pseudo-code is provided in Algorithm~\ref{alg:eigen}. To summarize, the computational complexity is $\mathcal{O}(Nsr+r^3)$ where $N,s,r$ is the number of nodes, sampled columns and eigenvectors, respectively. Recall that typically $N\!\gg\!{s}\!\ge\!{r}$, and hence our Algorithm~\ref{alg:eigen} has a lower complexity than typical {\nystrom} method, i.e., $\mathcal{O}(Nsr+s^3)$ in \cite{kumar2009sampling}.

\subsubsection{Error Bound of Algorithm~\ref{alg:eigen}}\label{sec:err_bound}
We further analyze the error bound of Algorithm~\ref{alg:eigen} with respect to the spectral norm. 
Theorem~\ref{th:Snorm} suggests that our Algorithm~\ref{alg:eigen} is as accurate as typical {\nystrom} method (i.e., $\parallel\mathbf{L}-\widetilde{\mathbf{L}}\parallel_2 + \frac{N}{\sqrt{s}}\mathbf{L}^\ast_{i,i}$ in Corollary 2~\cite{kumar2009sampling}) when the number of power iterations $q$ is large enough, while our algorithm fulfills the orthogonality constraint. 
The proof details are provided in \textcolor{blue}{\url{https://thinklab.sjtu.edu.cn/project/EasyDGL/suppl.pdf}} (see also in the supplemental material). 
	
\begin{theorem}[\textbf{Error Bound}]\label{th:Snorm}
For $\widetilde{\mathbf{L}}=\mathbf{\widetilde{U}\widetilde{\Sigma}\widetilde{U}}^\top$ in Algorithm~\ref{alg:eigen}, and $\zeta=1 + \sqrt{\frac{r}{p-1}} + \frac{e\sqrt{r+p}}{p}\sqrt{s-r}$,
\begin{flalign}
    \mathbf{E} 
	\parallel\mathbf{L}-\widetilde{\mathbf{L}} \parallel_2
    \le \zeta^{1/q}
	\parallel\mathbf{L}-\widetilde{\mathbf{L}}_r \parallel_2
	+ \left(1 + \zeta^{1/q}\right) \frac{N}{\sqrt{s}}\mathbf{L}^\ast_{i,i}, \nonumber
\end{flalign}where $\mathbf{L}^\ast_{i,i}=\max_i \mathbf{L}_{i,i}$ and $\mathbf{L}_r$ is the best rank-$r$ approximation.
\end{theorem}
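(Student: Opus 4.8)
The plan is to regard $\widetilde{\mathbf{L}}$ as a rank-$r$ Nystr\"om reconstruction whose core eigenspace is produced by randomized subspace iteration, and then to split the total error into a column-sampling part (controlled by Kumar et al.) and a randomized-truncation part (controlled by Halko et al.), with the factor $\zeta^{1/q}$ supplied entirely by the latter. I would first record the algebraic reduction. Setting $\mathbf{G}=\mathbf{C}\mathbf{A}^{-1/2}$, a direct computation gives $\mathbf{W}=\mathbf{G}^\top\mathbf{G}$, the full-rank reconstruction collapses to the standard Nystr\"om approximation $\widehat{\mathbf{L}}=\mathbf{C}\mathbf{A}^{-1}\mathbf{C}^\top=\mathbf{G}\mathbf{G}^\top$, and the algorithm's output is $\widetilde{\mathbf{L}}=\widetilde{\mathbf{U}}\widetilde{\mathbf{\Sigma}}\widetilde{\mathbf{U}}^\top=\mathbf{G}\mathbf{P}\mathbf{G}^\top$ with $\mathbf{P}=\mathbf{U}_W\mathbf{U}_W^\top$ the rank-$r$ eigenprojector of $\mathbf{W}$ extracted from the randomized range $\mathbf{Q}$. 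The crucial bridge is that $\mathbf{W}=\mathbf{G}^\top\mathbf{G}$ and $\widehat{\mathbf{L}}=\mathbf{G}\mathbf{G}^\top$ share their nonzero spectrum, so $\sigma_{r+1}(\mathbf{W})=\sigma_{r+1}(\widehat{\mathbf{L}})$; this lets the randomization, which acts only on the small $s\times s$ matrix, be certified on the large matrix $\mathbf{L}$.

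Next I would insert the full Nystr\"om matrix $\widehat{\mathbf{L}}$ as an intermediate and apply the triangle inequality $\|\mathbf{L}-\widetilde{\mathbf{L}}\|_2\le\|\mathbf{L}-\widehat{\mathbf{L}}\|_2+\|\widehat{\mathbf{L}}-\widetilde{\mathbf{L}}\|_2$. The first term is the plain Nystr\"om sampling error, for which the spectral-norm bound of Kumar et al. (Corollary 2) gives $\|\mathbf{L}-\widehat{\mathbf{L}}\|_2\le\frac{N}{\sqrt{s}}\mathbf{L}^\ast_{i,i}$; this single residual is what eventually supplies the ``$1$'' inside the coefficient $(1+\zeta^{1/q})$. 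For the second term I would use the identity $\|\widehat{\mathbf{L}}-\widetilde{\mathbf{L}}\|_2=\|\mathbf{G}(\mathbf{I}-\mathbf{P})\mathbf{G}^\top\|_2=\|(\mathbf{I}-\mathbf{P})\mathbf{W}(\mathbf{I}-\mathbf{P})\|_2$ and bound it by the randomized residual of Halko et al.: since the range of $\mathbf{A}^q\mathbf{\Omega}$ captures the dominant eigenspace of $\mathbf{W}$, the power-iteration bound yields $\mathbf{E}\,\|\widehat{\mathbf{L}}-\widetilde{\mathbf{L}}\|_2\le\zeta^{1/q}\,\sigma_{r+1}(\mathbf{W})$, with $\zeta$ exactly the stated oversampling constant.

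To close, I would convert $\sigma_{r+1}(\mathbf{W})$ back into a quantity on $\mathbf{L}$. Using $\sigma_{r+1}(\mathbf{W})=\sigma_{r+1}(\widehat{\mathbf{L}})$ and Weyl's inequality, $\sigma_{r+1}(\widehat{\mathbf{L}})\le\sigma_{r+1}(\mathbf{L})+\|\mathbf{L}-\widehat{\mathbf{L}}\|_2\le\|\mathbf{L}-\mathbf{L}_r\|_2+\frac{N}{\sqrt{s}}\mathbf{L}^\ast_{i,i}$, where $\sigma_{r+1}(\mathbf{L})=\|\mathbf{L}-\mathbf{L}_r\|_2$ by Eckart--Young in the spectral norm. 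Substituting into the triangle inequality and taking expectation over the Gaussian test matrix $\mathbf{\Omega}$ gives $\mathbf{E}\,\|\mathbf{L}-\widetilde{\mathbf{L}}\|_2\le\frac{N}{\sqrt{s}}\mathbf{L}^\ast_{i,i}+\zeta^{1/q}\big(\|\mathbf{L}-\mathbf{L}_r\|_2+\frac{N}{\sqrt{s}}\mathbf{L}^\ast_{i,i}\big)$, which regroups into exactly the claimed $\zeta^{1/q}\|\mathbf{L}-\mathbf{L}_r\|_2+(1+\zeta^{1/q})\frac{N}{\sqrt{s}}\mathbf{L}^\ast_{i,i}$.

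The step I expect to be the main obstacle is the Halko bound in the second term, because the algorithm runs the power iteration through $\mathbf{A}^q$ rather than $\mathbf{W}^q$. Making this rigorous requires exploiting the identity $\mathbf{W}=\mathbf{A}+\mathbf{A}^{-1/2}\mathbf{B}\mathbf{B}^\top\mathbf{A}^{-1/2}\succeq\mathbf{A}$ (so that $\mathbf{A}$ and $\mathbf{W}$ share a dominant subspace) together with a spectral-gap argument certifying that $\mathrm{range}(\mathbf{A}^q\mathbf{\Omega})$ aligns with the top-$r$ eigenspace of $\mathbf{W}$ well enough for the randomized constant to appear with exponent $1/q$; in addition the two-sided quantity $\|(\mathbf{I}-\mathbf{P})\mathbf{W}(\mathbf{I}-\mathbf{P})\|_2$ must be reduced to the one-sided residual $\|(\mathbf{I}-\mathbf{P})\mathbf{W}\|_2$ without sacrificing the orthonormality of $\widetilde{\mathbf{U}}$ that the clean spectral interpretation relies on. The Weyl step that exchanges $\sigma_{r+1}(\mathbf{W})$ for $\sigma_{r+1}(\mathbf{L})$ must likewise be tracked precisely, as it is what pins the sampling coefficient to $(1+\zeta^{1/q})$ rather than to a looser constant.
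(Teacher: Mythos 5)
Your outer architecture is sound and delivers exactly the stated constants: pivot on the full Nystr\"om reconstruction $\widehat{\mathbf{L}}=\mathbf{C}\mathbf{A}^{-1}\mathbf{C}^\top=\mathbf{G}\mathbf{G}^\top$ with $\mathbf{G}=\mathbf{C}\mathbf{A}^{-1/2}$, charge $\|\mathbf{L}-\widehat{\mathbf{L}}\|_2$ to the column-sampling bound of Kumar~\etal, charge $\mathbf{E}\,\|\widehat{\mathbf{L}}-\widetilde{\mathbf{L}}\|_2$ to a randomized-truncation bound of the form $\zeta^{1/q}\sigma_{r+1}(\mathbf{W})$, and convert $\sigma_{r+1}(\mathbf{W})=\sigma_{r+1}(\mathbf{G}\mathbf{G}^\top)$ back to $\|\mathbf{L}-\mathbf{L}_r\|_2+\frac{N}{\sqrt{s}}\mathbf{L}^\ast_{i,i}$ via Weyl and Eckart--Young. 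The supporting identities you state ($\widetilde{\mathbf{L}}=\mathbf{G}\mathbf{P}\mathbf{G}^\top$ with $\mathbf{P}=\mathbf{U}_W\mathbf{U}_W^\top$ an orthogonal projector, $\|\mathbf{G}(\mathbf{I}-\mathbf{P})\mathbf{G}^\top\|_2=\|(\mathbf{I}-\mathbf{P})\mathbf{W}(\mathbf{I}-\mathbf{P})\|_2\le\|(\mathbf{I}-\mathbf{P})\mathbf{W}\|_2$, and the shared nonzero spectrum of $\mathbf{G}^\top\mathbf{G}$ and $\mathbf{G}\mathbf{G}^\top$) are all correct, and the final regrouping matches the theorem.

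The genuine gap is the middle term, which you flag but do not close, and your proposed repair would not close it. The Halko--Martinsson--Tropp power-scheme bound with constant $\zeta$ and exponent $1/q$ applies to a range built from powers of the \emph{target} matrix, i.e.\ to $\mathrm{range}(\mathbf{W}^q\mathbf{\Omega})$; Algorithm~\ref{alg:eigen} builds $\mathbf{Q}$ from $\mathbf{A}^q\mathbf{\Omega}$. The relation $\mathbf{W}=\mathbf{A}+\mathbf{A}^{-1/2}\mathbf{B}\mathbf{B}^\top\mathbf{A}^{-1/2}\succeq\mathbf{A}$ is true but does not imply that the two matrices share a dominant $r$-dimensional eigenspace (a Loewner ordering constrains eigenvalues, not eigenvectors; one can make the top eigenspace of $\mathbf{W}$ nearly orthogonal to that of $\mathbf{A}$ while keeping $\mathbf{W}\succeq\mathbf{A}$), so no bound of the form $\mathbf{E}\,\|(\mathbf{I}-\mathbf{P})\mathbf{W}\|_2\le\zeta^{1/q}\sigma_{r+1}(\mathbf{W})$ follows from this route without an additional, quantitative alignment hypothesis that neither the theorem nor your sketch supplies. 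Two further unaccounted error sources sit in the same term: $\mathbf{Z}$ is defined by the single-pass equation $\mathbf{Z}\mathbf{Q}^\top\mathbf{\Omega}=\mathbf{Q}^\top\mathbf{W}\mathbf{\Omega}$ rather than as $\mathbf{Q}^\top\mathbf{W}\mathbf{Q}$, which carries its own residual in the Halko analysis, and the output keeps only the top $r$ of the $r+p$ directions in $\mathrm{range}(\mathbf{Q})$, a second truncation that the fixed-rank bound must absorb. Until $\mathbf{E}\,\|\widehat{\mathbf{L}}-\widetilde{\mathbf{L}}\|_2\le\zeta^{1/q}\sigma_{r+1}(\mathbf{W})$ is actually established for the range $\mathbf{A}^q\mathbf{\Omega}$ and this definition of $\mathbf{Z}$, the proof is a correct reduction of the theorem to an unproven lemma rather than a proof.
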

	
\section{Experiments and Discussion}\label{sec:expr}
The experiments evaluate the overall performance of our pipeline {\mname} as well as the effects of its components, including {\mreg} and {\mssl}. We further perform qualitative analysis to provide a global view of static graph models and dynamic graph models. Experiments are conducted on Linux workstations with Nvidia RTX8000 (48GB) GPU and Intel Xeon W-3175X CPU@ 3.10GHz with 128GB RAM. The source code is implemented by Deep Graph Library (DGL, \textcolor{blue}{https://www.dgl.ai/}) hence our approach name easyDGL also would like to respect this library. Note that the detailed settings for our proposed model and the baselines could be found in the supplemental material or our open-source link.

\subsection{Evaluation of Overall Performance} \label{sec:expr-topn}
\subsubsection{Dynamic Link Prediction}
\textbf{Dataset.}
We adopt three large-scale real-world datasets: 
(1) Koubei\footnote{\textcolor{blue}{https://tianchi.aliyun.com/dataset/dataDetail?dataId=53}} ($223,044$ nodes and $1,828,250$ edges);  
(2) Tmall\footnote{\textcolor{blue}{https://tianchi.aliyun.com/dataset/dataDetail?dataId=35680}} ($342,373$ nodes and $7,632,826$ edges);
(3) Netflix\footnote{\textcolor{blue}{https://kaggle.com/netflix-inc/netflix-prize-data}} ($497,959$ nodes and $100,444,166$ edges). 
For each dataset, a dynamic edge $(v_u,v_i, t_k)$ indicates if user (i.e., node) $v_u$ purchased item (i.e., node) $v_i$ at time $t_k$.
Note that these datasets are larger than Wikidata\footnote{\textcolor{blue}{https://github.com/nle-ml/mmkb/TemporalKGs/wikidata}} ($11,134$ nodes and $150,079$ edges), Reddit\footnote{\textcolor{blue}{http://snap.stanford.edu/data/soc-RedditHyperlinks.html}} ($55,863$ nodes and $858,490$ edges).

\begin{table*}[tb!]
\caption{\label{tab:cmp_ndcg}NDCG ({N}) on Koubei, Tmall and Netflix for dynamic link prediction. The standard errors of the ranking metrics are all less than 0.005.}
\begin{center}\resizebox{0.98\textwidth}{!}{
    \begin{tabular}{l ccc c ccc c ccc} \toprule\toprule
    & 
    \multicolumn{3}{c}{\textbf{Koubei, Density=$\mathbf{0.08\%}$}}  && 
    \multicolumn{3}{c}{\textbf{Tmall, Density=$\mathbf{0.10\%}$}} &&
    \multicolumn{3}{c}{\textbf{Netflix, Density=$\mathbf{1.41\%}$}} \\\cmidrule{2-4}\cmidrule{6-8}\cmidrule{10-12}
    \textbf{Model}   &
    \textbf{N@10} & \textbf{N@50} & \textbf{N@100} &&
    \textbf{N@10} & \textbf{N@50} & \textbf{N@100} &&
    \textbf{N@10} & \textbf{N@50} & \textbf{N@100} \\\midrule

	\textbf{GAT}~\cite{velivckovic2017graph} & 
	0.15447 & 0.16938 & 0.17534 && 
	0.10564 & 0.13378 & 0.14393 && 
	0.04958 & 0.07250 & 0.08518 \\
	\textbf{SAGE}~\cite{hamilton2017inductive}  & 
	0.15787 & 0.17156 & 0.17701 && 
	0.10393 & 0.13352 & 0.14417 && 
	0.04904 & 0.07155 & 0.08419 \\\midrule
	\textbf{GCN}~\cite{kipf2016semi}  & 
	0.15537 & 0.16848 & 0.17548 && 
	0.10287 & 0.13208 & 0.14260 && 
	0.04883 & 0.06965 & 0.08456  \\
	\textbf{ChebyNet}~\cite{defferrard2016convolutional} & 
	0.15784 & 0.17406 & 0.18055 && 
	0.09916 & 0.12955 & 0.14175 && 
	0.04996 & 0.07268 & 0.08582  \\
	\textbf{ARMA}~\cite{bianchi2021graph}  & 
	0.15830 & 0.17320 & 0.17954 && 
	0.09731 & 0.12628 & 0.13829 && 
	0.04940 & 0.07192 & 0.08526  \\
	\textbf{MRFCF}~\cite{steck2019markov}  &
	0.10037 & 0.10410 & 0.10502  &&
	0.08867 & 0.11223 & 0.11275 && 
	0.05235 & 0.08047 & 0.09584  \\
	\textbf{GSIMC}~\cite{chen2023graph}  &
	0.17057 & 0.18970 & 0.19468 && 
	0.07357 & 0.11115 & 0.12661 && 
	0.05504 & 0.08181 & 0.09759  \\
	\textbf{BGSIMC}~\cite{chen2023graph}  &
	0.17909 & 0.19680 & 0.20134 && 
	0.09222 & 0.13082 & 0.14551 && 
	0.05593 & 0.08400 & 0.09982  \\\midrule

    \textbf{GRU4REC}~\cite{hidasi2016session}  &
    0.20769 & 0.22390 & 0.23017 &&
    0.16758 & 0.19641 & 0.20770 && 
    0.14908 & 0.18648 & 0.20064 \\
    \textbf{SASREC}~\cite{kang2018self}  &
    0.19293 & 0.21173 & 0.21612 && 
    0.17234 & 0.20207 & 0.21292 && 
    0.14570 & 0.18209 & 0.19644 \\
    \textbf{GREC}~\cite{yuan2020future}  &
    0.19115 & 0.20769 & 0.21445 && 
    0.14969 & 0.17750 & 0.18775 && 
    0.15674 & 0.19455 & 0.20894 \\
    \textbf{S2PNM}~\cite{chen2021dyna}  &
    0.20538 & 0.22171 & 0.22812 && 
    0.17250 & 0.20210 & 0.21335 && 
    0.15902 & 0.19750 & 0.21141 \\
    \textbf{BERT4REC}~\cite{sun2019bert4rec}  &
    0.19944 & 0.21685 & 0.22325 && 
    0.18042 & 0.21096 & 0.22193 && 
    0.16274 & 0.20166 & 0.21638 \\\midrule
					
    \textbf{DyREP}~\cite{trivedi2019dyrep}  &
    0.21086 & 0.22684 & 0.23240 && 
    0.17290 & 0.20071 & 0.21092 && 
    0.14593 & 0.18567 & 0.20043 \\
    \textbf{TGAT}~\cite{xu2020inductive}  &
    0.19803 & 0.21347 & 0.21933 && 
    0.17497 & 0.20419 & 0.21533 && 
    0.14541 & 0.18219 & 0.19611 \\
    \textbf{TiSASREC}~\cite{li2020time}  &
    0.19731 & 0.21518 & 0.22153 && 
    0.17502 & 0.20609 & 0.21754 && 
    0.15455 & 0.19272 & 0.20687 \\
    \textbf{TGREC}~\cite{fan2021continuous}  &
    0.19829 & 0.21226 & 0.21807 && 
    0.16378 & 0.19334 & 0.20457  && 
    0.14877 & 0.18285 & 0.19711 \\
    \textbf{TimelyREC}~\cite{cho2021learning}  &
    0.20530 & 0.22036 & 0.22725 && 
    0.16656 & 0.19434 & 0.20472 && 
    0.14615 & 0.18301 & 0.19734 \\

    \reviewerone{\textbf{TiCoSeREC}~\cite{dang2023uniform}} &
    \reviewerone{0.18985} & 
    \reviewerone{0.20623} &
    \reviewerone{0.21298} &&
    \reviewerone{0.16215} &
    \reviewerone{0.19059} &
    \reviewerone{0.20213} &&
    \reviewerone{0.14351} &
    \reviewerone{0.18445} &
    \reviewerone{0.20033} \\
    \reviewerone{\textbf{PTGCN}~\cite{huang2023position}} &
    \reviewerone{0.18899} &
    \reviewerone{0.20744} & 
    \reviewerone{0.21440} && 
    \reviewerone{0.14427} &
    \reviewerone{0.17609} &
    \reviewerone{0.18642} &&
    \reviewerone{0.14405} &
    \reviewerone{0.18237} &
    \reviewerone{0.19660}
    \\
    \reviewerone{\textbf{NeuFilter}~\cite{xia2024neural}} &
    \reviewerone{0.21296} & 
    \reviewerone{0.22921} & 
    \reviewerone{0.23542} &&
    \reviewerone{0.18259} & 
    \reviewerone{0.21397} & 
    \reviewerone{0.22543} &&
    \reviewerone{0.16417} &
    \reviewerone{0.20154} &
    \reviewerone{0.21638} \\\midrule
					
    \rowcolor{gray90}
    \textbf{CTSMA}~\cite{chen2021learning} (conference ver.)  & 
    0.21059 & 0.22767 & 0.23396 && 
    0.18020 & 0.21039 & 0.22164 && 
    0.16137 & 0.20172 & 0.21613 \\
					
    \reviewertwo{\textbf{\mframe} (ours)} & 
    \reviewertwo{\textbf{0.23092}} & 
    \reviewertwo{\textbf{0.25350}} & 
    \reviewertwo{\textbf{0.26123}} && 
    \reviewertwo{\textbf{0.21723}} & 
    \reviewertwo{\textbf{0.23617}} & 
    \reviewertwo{\textbf{0.24052}} && 
    \reviewertwo{\textbf{0.19059}} & 
    \reviewertwo{\textbf{0.23179}} & 
    \reviewertwo{\textbf{0.24594}} \\
    \reviewertwo{\textbf{Rel. Gain} (over \cite{chen2021learning})} &
    \reviewertwo{9.7\%}  & 
    \reviewertwo{11.3\%}  & 
    \reviewertwo{11.7\%} && 
    \reviewertwo{20.5\%} & 
    \reviewertwo{12.3\%} & 
    \reviewertwo{8.5\%} && 
    \reviewertwo{18.1\%} & 
    \reviewertwo{14.9\%} & 
    \reviewertwo{13.8\%}
    \\\bottomrule\bottomrule
\end{tabular}
}\end{center}
\end{table*}	
	
\textbf{Protocol.}
We follow the inductive experimental protocol \cite{liang2018variational} together with the dynamic protocol~\cite{trivedi2019dyrep} to evaluate the learned node representations, where the goal is to predict which is the most likely node $v$ that a given node $u$ would connect to at a specified future time $t$. Note that test node $u$ is not seen in training and the time $t$ can be arbitrary in different queries. To do so, we split user nodes into training, validation and test set with ratio $8\!:\!1\!:\!1$,
where all the data from the training users are used to train the model. At test stage, we sort all links of validation/test users in chronological order, and hold out the last one for testing. 
	
The results are reported in terms of the hit rate (HR) and the normalized discounted cumulative gain (NDCG) on the test set for the model which achieves the best results on the validation set. The implementation of HR and NDCG follows \cite{liang2018variational}, and the reported results are averaged over five different random data splits.

\textbf{Baseline.}
We compare state-of-the-art baselines: (1) static graph approaches. We follow the design in \textbf{IDCF}~\cite{wu2021towards} to generate a user's representation by aggregating the information in her/his purchased items, then we report the performance by using different backbone GNNs, i.e., GraphSAGE~\cite{hamilton2017inductive}, GAT\cite{velivckovic2017graph}, GCN~\cite{kipf2016semi}, ChebyNet~\cite{defferrard2016convolutional} and ARMA~\cite{bianchi2021graph}. Also, we compare with shallow spectral graph models including \textbf{MRFCF}~\cite{bianchi2021graph}, \textbf{GSIMC} and \textbf{BGSIMC}~\cite{chen2023graph}; (2) sequential recommendation approaches. \textbf{GRU4REC}~\cite{hidasi2016session} utilizes recurrent neural networks to model the sequential information, \textbf{SASREC}~\cite{kang2018self} and \textbf{GREC}~\cite{yuan2020future} adopt attention networks and causal convolutional neural networks to learn the long-term patterns. \textbf{S2PNM}~\cite{chen2021dyna} proposes a hybrid memory networks to establish a balance between the short-term and long-term patterns. \textbf{BERT4REC}~\cite{sun2019bert4rec} extends the Cloze task to SASREC for improvements; (3) continuous-time approaches. \textbf{TGAT}\cite{xu2020inductive}, \textbf{TiSASREC}~\cite{li2020time}, \textbf{TGREC}~\cite{fan2021continuous},
\reviewerone{\textbf{NeuFilter}\cite{xia2024neural}, \textbf{PTGCN}~\cite{huang2023position} and \textbf{TiCoSeREC}~\cite{dang2023uniform}} encode the temporal information independently into embeddings. \textbf{DyREP}~\cite{trivedi2019dyrep}, \textbf{TimelyREC}~\cite{cho2021learning} and \textbf{CTSMA}~\cite{chen2021learning} model the entangled spatiotemporal information on the dynamic graph.


\textbf{Results.}
Table~\ref{tab:cmp_hr} and Table~\ref{tab:cmp_ndcg} show the HR and NDCG of our {\mname} and the baselines on the Koubei, Tmall and Netflix datasets. 
It is shown that sequential models achieve better results than static graph models, and continuous-time graph models (i.e., DyREP, TimelyREC, CSTMA, TiSASREC, TGREC, TiSASREC, TGAT, \reviewerone{TiCoSeREC, PTGCN, TiCoSeREC} and {\mname}) are among the best on all three datasets. \reviewertwo{In particular, {\mname} achieves $8.4-16.7\%$ relative hit rate gains and $13.8-18.1\%$ relative NDCG gains over the best baseline on challenging Netflix}.
	
In addition, we emphasize {\mname}'s consistent performance boosting over our preliminary conference version CTSMA~\cite{chen2021learning}. The key difference is that CTSMA is a unidirectional autoregressive model. By contrast, {\mname} is a bidirectional masked autoencoding model that incorporates contexts from both left and right sides. Perhaps more importantly, {\mname} enjoys the benefit of improved robustness and generalizability, due to the masking strategy that introduces randomness into the model and also generates diversified samples for training.
	
\begin{table*}[t!]
\caption{\label{tab:nclassify}Evaluation on the Elliptic dataset (bitcoin transactions) for dynamic node classification over time, where {\mname} performs best for all columns at different time steps between static graph models (top half) and dynamic graph models (bottom half). 
\reviewerone{Note that most of dynamic graph models, such as {ROLAND}~\cite{you2022roland}, DEFT~\cite{bastos2023learnable} and SpikeNet~\cite{li2023scaling}, require multiple (i.e. 5) historical graph snapshots (see the `input' column), whereas {\mname} only accesses the update-to-date snapshot for prediction. {\mname} outperforms the best baseline TGN in accuracy with 4X speedup for training. All the methods perform poorly starting at time step 43, because this time is when the bitcoin dark market shutdown occurred which changes the external environment of the graph yet such information cannot be readily encoded by the evaluated models neither in training nor testing. In this extreme case {\mname} performs better than baselines, especially with respect to the Micro-F1 score which is for the minority (illicit) class.}}
\vspace{-10pt}
\begin{center}\resizebox{\textwidth}{!}{
\begin{tabular}{l c ccc c ccc c ccc} \toprule\toprule
    &  &
    \multicolumn{3}{c}{\textbf{Timestamp=[40, 42]}} && 
    \multicolumn{3}{c}{\textbf{Timestamp=[43, 46]}} &&
    \multicolumn{3}{c}{\textbf{Timestamp=[47, 49]}} \\\cmidrule{3-5}\cmidrule{7-9}\cmidrule{11-13}
    \textbf{Model}    & \textbf{Input} &
    \textbf{Macro-F1} & \textbf{Micro-F1} & \textbf{Accuracy} &&
    \textbf{Macro-F1} & \textbf{Micro-F1} & \textbf{Accuracy} &&
    \textbf{Macro-F1} & \textbf{Micro-F1} & \textbf{Accuracy} \\\midrule
    \textbf{APPNP}~\cite{gasteiger2018combining}   & 1 &
    0.666$\pm$0.009 & 0.542$\pm$0.020 & 0.776$\pm$0.010 &&
    0.476$\pm$0.028 & 0.014$\pm$0.020 & 0.782$\pm$0.008 && 
    0.462$\pm$0.029 & 0.000$\pm$0.000 & 0.792$\pm$0.007 \\
    \textbf{GAT}~\cite{velivckovic2017graph} & 1 &
    0.660$\pm$0.021 & 0.517$\pm$0.085 & 0.754$\pm$0.016 && 
    0.465$\pm$0.010 & 0.018$\pm$0.018 & 0.731$\pm$0.017 && 
    0.452$\pm$0.024 & 0.013$\pm$0.028 & 0.735$\pm$0.043 \\
    \textbf{SAGE}~\cite{hamilton2017inductive} & 1 &
    0.691$\pm$0.016 & 0.514$\pm$0.067 & 0.820$\pm$0.015 && 
    0.507$\pm$0.009 & 0.011$\pm$0.024 & 0.824$\pm$0.013 && 
    0.493$\pm$0.035 & 0.024$\pm$0.047 & 0.831$\pm$0.030 \\
					
    \textbf{ARMA}~\cite{bianchi2021graph}  & 1 &
    0.707$\pm$0.031 & \cellcolor{gray90}{0.556$\pm$0.050} & 0.825$\pm$0.007 && 
    0.512$\pm$0.027 & 0.041$\pm$0.024 & 0.826$\pm$0.009 && 
    0.498$\pm$0.021 & 0.016$\pm$0.036 & 0.834$\pm$0.012 \\
    \textbf{ChebyNet}~\cite{defferrard2016convolutional} & 1 &
    0.682$\pm$0.014 & 0.517$\pm$0.031 & 0.800$\pm$0.024 && 
    0.499$\pm$0.030 & 0.000$\pm$0.000 & 0.790$\pm$0.040 && 
    0.472$\pm$0.037 & 0.000$\pm$0.000 & 0.795$\pm$0.050 \\
    \textbf{GCN}~\cite{kipf2016semi}  &  1 &
    0.540$\pm$0.047 & 0.231$\pm$0.097 & 0.780$\pm$0.012 && 
    0.459$\pm$0.014 & 0.021$\pm$0.025 & 0.799$\pm$0.005 && 
    0.520$\pm$0.015 & 0.018$\pm$0.036 & 0.884$\pm$0.005  \\\midrule

    \textbf{TGAT}~\cite{xu2020inductive}  & 1 &
    0.693$\pm$0.027 & 0.529$\pm$0.035 & 0.803$\pm$0.009  && 
    0.501$\pm$0.010 & \cellcolor{gray90}{0.046$\pm$0.036} & 0.805$\pm$0.006  && 
    0.495$\pm$0.025 & 0.047$\pm$0.028 & 0.813$\pm$0.009 \\  
    \textbf{DySAT}~\cite{sankar2020dysat}  & 5 &
    0.611$\pm$0.020 & 0.477$\pm$0.061 & 0.739$\pm$0.009 && 
    0.469$\pm$0.011 & 0.021$\pm$0.018 & 0.744$\pm$0.016 && 
    0.470$\pm$0.020 & 0.000$\pm$0.000 & 0.774$\pm$0.034 \\
    \textbf{EvolveGCN}~\cite{pareja2020evolvegcn}  & 5 &
    0.591$\pm$0.012 & 0.382$\pm$0.019 & 0.735$\pm$0.002 && 
    0.458$\pm$0.005 & 0.000$\pm$0.000 & 0.734$\pm$0.008 && 
    0.443$\pm$0.035 & 0.037$\pm$0.022 & 0.727$\pm$0.024 \\
    \textbf{JODIE}~\cite{kumar2019predicting} & 5 &
    0.696$\pm$0.021 & 0.486$\pm$0.064 & 0.814$\pm$0.017 && 
    0.525$\pm$0.011 & 0.025$\pm$0.030 & 0.836$\pm$0.009 && 
    0.544$\pm$0.008 & 0.000$\pm$0.000 & 0.873$\pm$0.011 \\
    \textbf{TGN}~\cite{rossi2020temporal} & 5 &
    \cellcolor{gray90}{0.719$\pm$0.016} & 
    0.542$\pm$0.023 & 
    \cellcolor{gray90}{0.834$\pm$0.009} && 
    \cellcolor{gray90}{0.529$\pm$0.010} & 
    0.024$\pm$0.028 & 
    0.827$\pm$0.021 && 
    0.572$\pm$0.013 & 
    \cellcolor{gray90}{0.071$\pm$0.011} & 
    0.885$\pm$0.042 \\

    \reviewerone{\textbf{ROLAND}~\cite{you2022roland}} & 
    \reviewerone{5} &
    \reviewerone{0.637$\pm$0.034} & 
    \reviewerone{0.501$\pm$0.045} & 
    \reviewerone{0.792$\pm$0.023} && 
    \reviewerone{0.453$\pm$0.023} & 
    \reviewerone{0.000$\pm$0.000} & 
    \reviewerone{0.803$\pm$0.013} && 
    \reviewerone{0.525$\pm$0.014} & 
    \reviewerone{0.000$\pm$0.000} & 
    \reviewerone{0.884$\pm$0.011} \\
    
    \reviewerone{\textbf{DEFT}~\cite{bastos2023learnable}} & 
    \reviewerone{5} &
    \reviewerone{0.684$\pm$0.011} & 
    \reviewerone{0.516$\pm$0.034} & 
    \cellcolor{gray90}{0.833$\pm$0.004} && 
    \reviewerone{0.489$\pm$0.014} & 
    \reviewerone{0.013$\pm$0.021}  & 
    \reviewerone{0.834$\pm$0.011} && 
    \reviewerone{0.519$\pm$0.014} & 
    \reviewerone{0.004$\pm$0.007} & 
    \reviewerone{0.880$\pm$0.021} \\
 
    \reviewerone{\textbf{SpikeNet}~\cite{li2023scaling}} & 
    \reviewerone{5} &
    \reviewerone{0.703$\pm$0.010} & 
    \reviewerone{0.520$\pm$0.021} & 
    \cellcolor{gray90}{0.833$\pm$0.007} && 
    \reviewerone{0.528$\pm$0.008} & 
    \reviewerone{0.005$\pm$0.007} & 
    \cellcolor{gray90}{0.856$\pm$0.005} && 
    \cellcolor{gray90}{0.590$\pm$0.010}	& 
    \reviewerone{0.036$\pm$0.033} & 
    \cellcolor{gray90}{0.902$\pm$0.002} \\
    \midrule
					
    \reviewertwo{\textbf{\mframe}}  & 1 &
    \reviewertwo{\textbf{0.748$\pm$0.016}} & 
    \reviewertwo{\textbf{0.570$\pm$0.013}} & 
    \reviewertwo{\textbf{0.847$\pm$0.009}} && 
    \reviewertwo{\textbf{0.547$\pm$0.002}} & 
    \reviewertwo{\textbf{0.081$\pm$0.023}} & 
    \reviewertwo{\textbf{0.853$\pm$0.004}} && 
    \reviewertwo{\textbf{0.598$\pm$0.008}} & 
    \reviewertwo{\textbf{0.134$\pm$0.009}} & 
    \reviewertwo{\textbf{0.905$\pm$0.008}}  \\\bottomrule\bottomrule
\end{tabular}
}\end{center}
\end{table*}
\begin{table*}[t!]
\caption{\label{tab:traffic}
Evaluation on the META-LA dataset for 15 minutes (horizon 3), 30 minutes (horizon 6) and 45 minutes (horizon 9) ahead traffic forecasting, where the baselines include static graph models (top half) and dynamic graph models (bottom half). 
\reviewerone{Note that DCRNN\cite{li2018diffusion} , DySAT\cite{sankar2020dysat}, GaAN\cite{zhang2018gaan}, STGCN\cite{yu2018spatio}, DSTAGNN\cite{lan2022dstagnn}, STID\cite{shao2022spatial}, Trafformer\cite{jin2023trafformer} and TrendGCN\cite{jiang2023enhancing} use 60 minutes as historical time window, namely the most recent 12 graph snapshots (see the `input' column) are used to forecast traffic conditions in the next 15, 30 and 45 minutes, whereas {\mname} makes predictions only based on current graph snapshot. Specifically, {\mname} performs competitively with DSTAGNN with 6X speedup for training.}}
\begin{center}\resizebox{\textwidth}{!}{
\begin{tabular}{l c ccc c ccc c ccc} \toprule\toprule
    &  &
    \multicolumn{3}{c}{\textbf{Horizon=3 (15 minutes)}} && 
    \multicolumn{3}{c}{\textbf{Horizon=6 (30 minutes)}} &&
    \multicolumn{3}{c}{\textbf{Horizon=9 (45 minutes)}} \\\cmidrule{3-5}\cmidrule{7-9}\cmidrule{11-13}
    \textbf{Model}   &  \textbf{Input}  &
    \textbf{MAE} & \textbf{RMSE} & \textbf{MAPE} &&
    \textbf{MAE} & \textbf{RMSE} & \textbf{MAPE} &&
    \textbf{MAE} & \textbf{RMSE} & \textbf{MAPE} \\\midrule
    \textbf{APPNP}~\cite{gasteiger2018combining}  & 1 &
    3.987$\pm$0.010 & 6.708$\pm$0.015 & 0.117$\pm$0.000 &&
    4.496$\pm$0.011 & 7.769$\pm$0.017 & 0.137$\pm$0.000 &&
    4.902$\pm$0.008 & 8.579$\pm$0.021 & 0.156$\pm$0.001 \\
    \textbf{GAT}~\cite{velivckovic2017graph} & 1 & 
    4.992$\pm$0.237 & 8.587$\pm$0.470 & 0.155$\pm$0.010 &&
    5.228$\pm$0.184 & 9.049$\pm$0.348 & 0.164$\pm$0.008 &&
    5.456$\pm$0.163 & 9.473$\pm$0.278 & 0.174$\pm$0.006 \\
    \textbf{SAGE}~\cite{hamilton2017inductive}  &  1 &
    3.318$\pm$0.004 & 6.097$\pm$0.007 & 0.088$\pm$0.000 &&
    3.866$\pm$0.007 & 7.291$\pm$0.009 & 0.110$\pm$0.001 &&
    4.259$\pm$0.007 & 8.134$\pm$0.030 & 0.127$\pm$0.001 \\
    \textbf{ARMA}~\cite{bianchi2021graph}  & 1 &
    3.298$\pm$0.008 & 6.058$\pm$0.017 & 0.087$\pm$0.001 &&
    3.824$\pm$0.006 & 7.221$\pm$0.011 & 0.108$\pm$0.000 &&
    4.191$\pm$0.007 & 8.017$\pm$0.018 & 0.124$\pm$0.000 \\
					
    \textbf{ChebyNet}~\cite{defferrard2016convolutional} & 1 &  
    3.457$\pm$0.016 & 6.352$\pm$0.022 & 0.091$\pm$0.002 &&
    4.025$\pm$0.002 & 7.571$\pm$0.028 & 0.112$\pm$0.000 &&
    4.466$\pm$0.023 & 8.390$\pm$0.045 & 0.129$\pm$0.000 \\
    
    \textbf{GCN}~\cite{kipf2016semi}  & 1 & 
    5.748$\pm$0.013 & 9.780$\pm$0.036 & 0.188$\pm$0.001 &&
    5.813$\pm$0.011 & 9.988$\pm$0.031 & 0.192$\pm$0.000 &&
    5.901$\pm$0.009 & 10.21$\pm$0.026 & 0.198$\pm$0.001 \\\midrule
					
    \textbf{DCRNN}~\cite{li2018diffusion}  & 12 &
    3.002$\pm$0.018 & 5.699$\pm$0.019 & 0.079$\pm$0.000 && 
    3.429$\pm$0.027 & 6.782$\pm$0.018 & 0.095$\pm$0.001 && 
    3.735$\pm$0.045 & 7.493$\pm$0.039 & 0.107$\pm$0.001 \\
    \textbf{DySAT}~\cite{sankar2020dysat}  & 12 &
    3.115$\pm$0.012 & 6.101$\pm$0.039 & 0.085$\pm$0.000 && 
    3.710$\pm$0.015 & 7.391$\pm$0.041 & 0.108$\pm$0.001 && 
    4.184$\pm$0.018 & 8.317$\pm$0.031 & 0.128$\pm$0.001 \\
    \textbf{GaAN}~\cite{zhang2018gaan}  & 12 &
    3.029$\pm$0.013 & 5.799$\pm$0.043 & 0.083$\pm$0.001 && 
    3.580$\pm$0.017 & 7.068$\pm$0.063 & 0.104$\pm$0.002 && 
    3.994$\pm$0.022 & 7.960$\pm$0.084 & 0.122$\pm$0.002 \\
    \textbf{STGCN}~\cite{yu2018spatio}  & 12 &
    3.195$\pm$0.027 & 6.240$\pm$0.050 & 0.089$\pm$0.001 && 
    3.392$\pm$0.025 & 6.806$\pm$0.049 & 0.097$\pm$0.001 && 
    3.575$\pm$0.027 & 7.251$\pm$0.036 & 0.104$\pm$0.001 \\
    \textbf{DSTAGNN}~\cite{lan2022dstagnn}  & 12 &
    2.960$\pm$0.035 & 5.611$\pm$0.052 & 0.079$\pm$0.001 && 
    3.321$\pm$0.030 & 6.683$\pm$0.056 & 0.093$\pm$0.001 && 
    3.595$\pm$0.029 & 7.249$\pm$0.058 & 0.102$\pm$0.000 \\
    
    \reviewerone{\textbf{STID}~\cite{shao2022spatial}} & 
    \reviewerone{12} &
    \reviewerone{2.783$\pm$0.004} & 
    \reviewerone{5.553$\pm$0.010} & 
    \reviewerone{0.074$\pm$0.001} &&
    \reviewerone{3.167$\pm$0.004} & 
    \reviewerone{6.636$\pm$0.007} & 
    \reviewerone{0.090$\pm$0.001} &&
    \reviewerone{3.382$\pm$0.005} & 
    \reviewerone{7.223$\pm$0.024} & 
    \reviewerone{0.099$\pm$0.001} \\
    \reviewerone{\textbf{Trafformer}~\cite{jin2023trafformer}} & 
    \reviewerone{12} &
    \reviewerone{2.835$\pm$0.044} & 
    \reviewerone{5.598$\pm$0.099}  & 
    \cellcolor{gray90}{0.070$\pm$0.001}  && 
    \reviewerone{3.183$\pm$0.041}  & 
    \reviewerone{6.530$\pm$0.091}  & 
    \cellcolor{gray90}{0.083$\pm$0.001}  && 
    \reviewerone{3.430$\pm$0.042}  & 
    \reviewerone{7.137$\pm$0.071}  & 
    \cellcolor{gray90}{0.091$\pm$0.001}  \\
    \reviewerone{\textbf{TrendGCN}~\cite{jiang2023enhancing}} & 
    \reviewerone{12} &
    \cellcolor{gray90}{2.777$\pm$0.017}  & 
    \cellcolor{gray90}{5.352$\pm$0.040}  & 
    \reviewerone{0.074$\pm$0.001}  && 
    \cellcolor{gray90}{3.150$\pm$0.024}  & 
    \cellcolor{gray90}{6.422$\pm$0.058}  & 
    \reviewerone{0.088$\pm$0.001}  && 
    \cellcolor{gray90}{3.393$\pm$0.028}  & 
    \cellcolor{gray90}{7.042$\pm$0.066}  & 
    \reviewerone{0.098$\pm$0.002}  \\

    \midrule
    \reviewertwo{\textbf{\mframe}}  & 1 &
    \reviewertwo{\textbf{2.714$\pm$0.011}} & 
    \reviewertwo{\textbf{5.218$\pm$0.007}} & 
    \reviewertwo{\textbf{0.067$\pm$0.001}} && 
    \reviewertwo{\textbf{3.104$\pm$0.014}} & 
    \reviewertwo{\textbf{6.197$\pm$0.033}} & 
    \reviewertwo{\textbf{0.080$\pm$0.001}} && 
    \reviewertwo{\textbf{3.373$\pm$0.028}} & 
    \reviewertwo{\textbf{6.791$\pm$0.051}} & 
    \reviewertwo{\textbf{0.089$\pm$0.001}}    \\\bottomrule\bottomrule
\end{tabular}
}\end{center}
 \vspace{-10pt}
\end{table*}
\subsubsection{Dynamic Node Classification}
\textbf{Dataset.}
There are few datasets for node classification in the dynamic setting. To the best of our knowledge, the Elliptic\footnote{\textcolor{blue}{https://www.kaggle.com/ellipticco/elliptic-data-set}} dataset is the largest one with $203,769$ nodes and $234,355$ edges derived from the bitcoin transaction records whose timestamp is actually discrete value from 0 to 49 which are evenly spaced in two weeks. We argue that the Reddit and Wikipedia used in \cite{xu2020inductive,jin2022neural} is more suitable for edge classification, because the label for a user (node) on Reddit indicates if the user is banned from posting under a subreddit (node), and likewise the label on Wikipedia is about whether the user (node) is banned from editing a Wikipedia page (node). In contrast, a node in Elliptic is a transaction and edges can be viewed as a flow of bitcoins between one transaction and the other. The label of each node indicates if the transaction is ``licit", ``illicit" or ``unknown".

\textbf{Protocol.}
The evaluation protocol follows~\cite{pareja2020evolvegcn}, where we use the information up to (discrete) time step $t$ and predict the label of a node which appears at the next (discrete) step $t\!+\!1$. Specifically, we split the data into training, validation and test set along the time dimension, where we use the first $38$ time steps to train the model, the next two time steps as the validation set and the rest used for testing. We note that the dark market shutdown occurred at step $43$, and thus we further group the test data in time steps $[40, 49]$ into subsets $[40,42]$, $[43,46]$ and $[47,49]$ to better study how the model behaves under external substantial changes.
	
We report the results in terms of Macro-F1, Micro-F1 and accuracy on the test set for the model that achieves the best on the validation data set. We note that nearly $80\%$ of nodes are labeled as ``unknown", such that a model which labels every node as unknown can achieve the accuracy up to $0.8$. Because illicit nodes are of more interest, we report the Micro-F1 for the minority (illicit) class. The results are the mean from five random trials.

\textbf{Baseline.}
We compare with state-of-the-art methods for dynamic node classification: (1) static graph approaches, i.e., APPNP~\cite{gasteiger2018combining}, GAT~\cite{velivckovic2017graph}, SAGE~\cite{hamilton2017inductive}, ARMA~\cite{bianchi2021graph}, GCN~\cite{kipf2016semi} and ChebyNet~\cite{defferrard2016convolutional}; 
(2) discrete-time approaches. \textbf{DySAT}\cite{sankar2020dysat} uses attention networks to model both temporal and structural information. Differently, \textbf{EvolveGCN}\cite{pareja2020evolvegcn} models the evolution of parameters in GCN over time. 
\reviewerone{\textbf{ROLAND}\cite{you2022roland} views node embeddings at different layers as hierarchical node states and recurrently update them over time. \textbf{DEFT}~\cite{bastos2023learnable} considers the history of evolving spectra in the form of learnable wavelets. \textbf{SpikeNET}~\cite{li2023scaling} models the dynamics of temporal graphs with spiking neural networks, a low-power alternative to RNNs;} 
(3) continuous-time approaches. \textbf{TGAT}~\cite{xu2020inductive} and \textbf{JODIE}~\cite{kumar2019predicting} adopt linear and sinusoidal functions of the elapsed time to encode temporal information, respectively. \textbf{TGN}~\cite{rossi2020temporal} combined the merits of temporal modules in JODIE, TGAT and DyREP to achieve the performance improvement. 
We omit the results of CTDNE~\cite{nguyen2018continuous} and DyREP~\cite{trivedi2019dyrep}, since it is known (see \cite{rossi2020temporal,jin2022neural}) that their accuracies are lower than those of TGN.
	

\textbf{Results.}
Table \ref{tab:nclassify} compares {\mname} with the baselines by Macro-F1, Micro-F1 and Accuracy on the Elliptic dataset. We see that ARMA performs the best among static graph models and further achieves comparable results to dynamic models: DySAT, JODIE and EvolveGCN. This is due to that ARMA provides a larger variety of frequency responses so that the network can better model sharp changes. Besides, TGN beats all other baselines, while {\mname} obtains better results than TGN with 4X speedup for training.
	
In particular, all models perform poorly after the time step $43$, when there was an external sudden change --- the shutdown of the dark market, which results in the universal performance degradation. 
\reviewertwo{{\mname} achieves $97\%$ and $88\%$ relative Micro-F1 gains over the best baseline during time step $[43,46]$ and $[47,49]$, respectively. This shows the merits of {\mname} in effective adaptation under sharp changes.}

\begin{figure*}[tb!]
\centerline{\includegraphics[width=.98\textwidth]{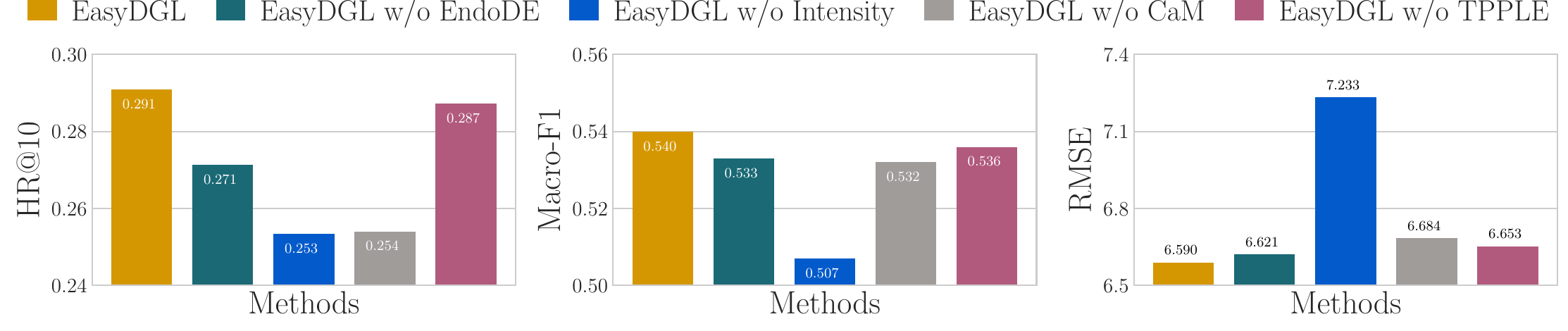}}
\caption{\label{fig:abl}Ablation study on Netflix for \textbf{dynamic link prediction (left)}, on Elliptic for \textbf{dynamic node classification (middle)} during step $[43,46]$, on META-LA for 30 minutes ahead \textbf{traffic forecasting (right)}. The higher HR@10 and Macro-F1 mean are better, while the lower RMSE is better.}
\end{figure*}
\begin{figure*}[tb!]
\centerline{\includegraphics[width=.98\textwidth]{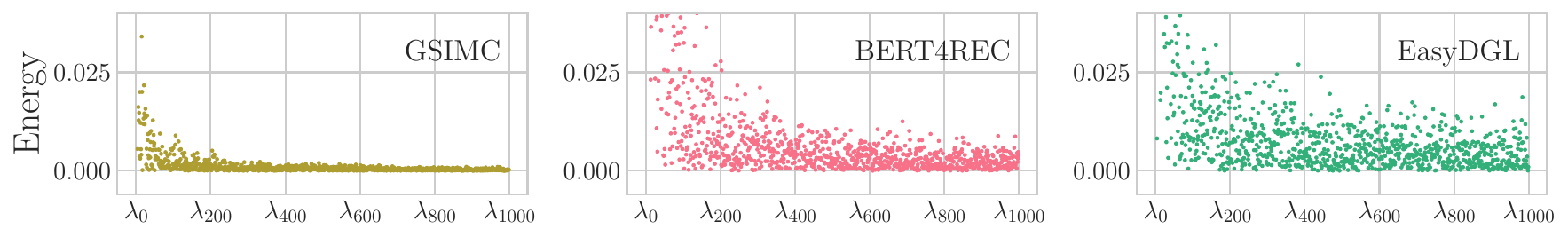}}
\caption{\label{fig:vf_analysis}Spectral interpreting of GSIMC~\cite{chen2023graph}, BERT4REC~\cite{sun2019bert4rec} and {\mname} for dynamic link prediction on Netflix. The energy of GSIMC is focused on low frequencies since high-frequency signals are penalized during minimization, whereas BERT4REC and {\mname} put more energies to high frequencies which helps increase the recall rates of low-degree item nodes. This explains the performance gap between static and dynamic models.}
\end{figure*}

\subsubsection{Node Traffic Forecasting}
\textbf{Dataset.}
We conduct experiments on real-world METR-LA dataset \cite{li2018diffusion} from the Los Angels Metropolitan Transportation Authority, which has averaged traffic speed measured by 207 sensors every $5$ minutes on the highways. It is shown in~\cite{yu2018spatio,li2018diffusion} that traffic flow forecasting on METR-LA ($207$ sensors, $34,272$ samples, Los Angels) is more challenging than that on PEMS-BAY ($325$ sensors, $52,116$ samples, Bay Area) dataset due to complicated traffic conditions.

\textbf{Protocol.}
We follow the protocol used in~\cite{li2018diffusion} to study the performance for traffic forecasting task, where we aggregate traffic speed readings at $5$-minute level and apply z-score normalization. We utilize first $70\%$ of data for model training, $20\%$ for validation while the remaining $10\%$ data for testing. Each of compared models is allowed to leverage the traffic conditions up to now for $15/30/45$-minutes ahead forecasting. We also adopt the mean absolute error (MAE), rooted mean squared error (RMSE) as well as mean absolute percentage error (MAPE) to evaluate the model accuracies. Detailed formulations of these metrics can be found in \cite{li2018diffusion}.

\textbf{Baseline.}
Existing traffic forecasting methods are mainly focused on discrete-time graphs, as traffic data is collected in a constant rate (e.g., every 5 minutes). These approaches maintain a sliding window of most recent 60-minutes data (i.e., $12$ snapshots) to make $15/30/45$-minutes ahead forecasting. \textbf{DCRNN}~\cite{li2018diffusion} leverages GCN to encode spatial information, while \textbf{GaAN}~\cite{zhang2018gaan} uses GAT to focus on most relevant neighbors in spatial domain. These methods exploit RNNs to capture the evolution of each node. To increase the scalability, \textbf{DySAT}~\cite{sankar2020dysat} utilize the attention to incorporate both structural and temporal dynamics while \textbf{STGCN}~\cite{yu2018spatio} resorts to simple CNNs. \textbf{DSTAGNN}~\cite{lan2022dstagnn} integrate attention network and gated CNN for accuracy improvement. 
\reviewerone{\textbf{STID}~\cite{shao2022spatial} identifies critical factors in spatial and temporal dimensions and designs a concise recursive model for improved efficiency. \textbf{Trafformer}~\cite{jin2023trafformer} learns the spatiotemporal affinity matrix at runtime and then performs regular attention mechanism to make predictions. \textbf{TrendGCN}~\cite{jiang2023enhancing} uses GAN structures to extend the flexibility of GCNs.}

Following \cite{li2018diffusion}, the embedding size is set to $32$ and the sliding window size for discrete-time approaches, such as DSTAGNN, DSTAGNN, STID and Trafformer, is $12$. Note that {\mname} and static graph models leverage only the most recent snapshot to make multi-step traffic forecasting, or namely the window size is one. Thus, the comparison is in favor of discrete-time approaches.

\textbf{Results.}
Table \ref{tab:traffic} presents the results on the METR-LA dataset for $15/30/45$ minutes ahead traffic forecasting. Dynamic approaches outperform static ones by a notable margin, and {\mname} achieves the best results among baselines. \reviewertwo{To be specific, {\mname} lowers down the RMSE results by $0.134$, $0.225$ and $0.251$ when Horizon=$3$, $6$, $9$ respectively. We note that the protocol favors discrete-time approaches: they accept $12$ graph snapshots as input, while {\mname} accesses to only $1$ snapshot. Also, {\mname} takes 1.1 hour training time, 6X faster than recently published DSTAGNN.}

\subsection{Ablation Study}
We evaluate the effectiveness of our proposed components: 
\textbf{{\mname} w/o EndoDE} drops endogenous dynamics encoding, making the intensity independent of localized changes.
\textbf{{\mname} w/o Intensity} switches off intensity-based modulation, thereby reducing our attention to regular attention.
\textbf{{\mname} w/o {\mssl}} disables our masking mechanism, while 
\textbf{{\mname} w/o {\mreg}} removes the TPP posterior maximization term by setting $\gamma=0$ in Eq.~\eqref{eqn:ctr_loss}. 
	
Fig.~\ref{fig:abl} reports HR@10 on Netflix for dynamic link prediction, RMSE on Elliptic for dynamic node classification, and Macro-F1 on META-LA for traffic forecasting. We omit the results in other metrics (NDCG, MAE MAPE) and on other data as their trends are similar. 
 {\mname} w/o EndoDE performs worse than {\mname} for all three tasks which suggests the necessity of modelling entangled spatiotemporal information; 
{\mname} w/o Intensity exhibits a significant drop in the performance, validating the efficacy of our TPP-based modulation design;
{\mname} w/o {\mssl} shows worse performance, which provides an evidence to the necessity of designing mask-based learning for dynamic graph models;
the performance of {\mname} w/o {\mreg} also degrades that suggests the effectiveness of our {\mreg} regularization.

\begin{table*}[tb!]
\caption{\label{tbl:perturb}Hit-Rate (HR) and NDCG (N) on the Netflix dataset for dynamic link prediction. \textbf{(top)} intra-perturbation is applied to remove the spectral contents of BERT4REC~\cite{sun2019bert4rec} with frequencies greater than $\lambda_{500}$. The performance degradation of the resulting \textit{perturb.} BERT4REC model suggests the necessity of high-frequency information; 
\textbf{(middle)} inter-perturbation is applied to replace the high-frequency ($>\lambda_{500}$) contents of  GSIMC~\cite{chen2023graph} by that of BERT4REC. One can see that the BERT4REC \textit{perturb.} GSIMC model performs comparable to BERT4REC. This implies that dynamic graph model BERT4REC outperforms static graph model SGMC due to the ability to learn high-frequency information; 
\textbf{(bottom)} inter-perturbation is applied to replace both low-frequency ($\le\lambda_{50}$) and high-frequency ($\lambda_{900}-\lambda_{1000}$) contents of TGAT with that of {\mname}. The performance improvement demonstrates the efficacy of the recommendation strategies and our superiority of modelling complex temporal dynamics over TGAT.}
\centering
\resizebox{.68\textwidth}{!}{
\begin{tabular}{c cccccc}\toprule\toprule
    \textbf{Netflix}
    & \textbf{HR@10} & \textbf{HR@50} & \textbf{HR@100}
    & \textbf{N@10} & \textbf{N@50} & \textbf{N@100} \\\midrule
    \textbf{BERT4REC} &
    0.25337 & 0.43027 & 0.52115 &
    0.16274 & 0.20166 & 0.21638\\
    \textbf{\textit{perturb.} BERT4REC} &
    0.14454 & 0.21172 & 0.24198 &
    0.09477 & 0.10994 & 0.11484 \\\midrule
	
    \textbf{GSIMC} &
    0.09725 & 0.22733 & 0.32225 & 
    0.05504 & 0.08181 & 0.09759 \\
    \textbf{BERT4REC \textit{perturb.} GSIMC} &
    0.25287 & 0.42514 & 0.50937 &
    0.16243 & 0.20033 & 0.21398 \\\midrule
	
    \textbf{TGAT} & 
    0.22755 & 0.39623 & 0.48232 &
    0.14541 & 0.18219 & 0.19611 \\

    \reviewertwo{\textbf{{\mframe} \textit{perturb.} TGAT}} & 
    \reviewertwo{0.25310} & \reviewertwo{0.43285} & \reviewertwo{0.51202} & 
    \reviewertwo{0.16003} & \reviewertwo{0.19914} & \reviewertwo{0.21197}
    \\\bottomrule\bottomrule
\end{tabular}}
\end{table*}
\begin{figure}[tb!]
    \centerline{\includegraphics[width=.50\textwidth]{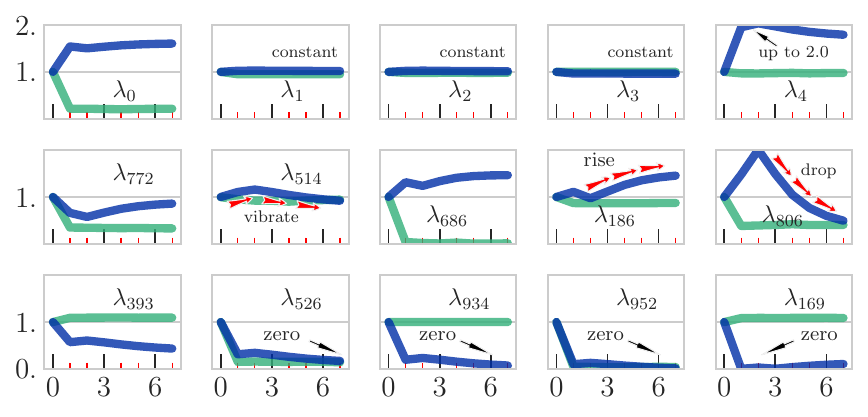}}
    \caption{\label{fig:vf_trends} The energy scale factor of 15 frequencies in 15 plots by spectral interpreting of TGAT\cite{xu2020inductive} (green) and {\mname} (blue) for the dynamic link prediction models trained on Netflix, which studies how TGAT and {\mname} behave in the graph Fourier domain over time (x-axis, in 7 days). It shows that {\mname} amplifies low-frequency signals (e.g., $\lambda_0,\lambda_{4}$) as time elapsed, signal at frequency $\lambda_{806}$ increases until the time interval reaches $2$ days long then drops dramatically, and high-frequency signals (e.g., $\lambda_{934},\lambda_{952}$) are attenuated as time goes on.}
    \vskip -0.1in
\end{figure}
\subsection{Qualitative Study}\label{sec:lp_qual}
We conduct experiments on Netflix for qualitative study.
	
\textbf{R1:} dynamic graph approach has a distinct advantage on the ability to manipulate high-frequency signals. In Fig.~\ref{fig:vf_trends}, we analyze static GSIMC model, dynamic BERT4REC and {\mname} models in the graph Fourier domain, where we apply $\ell_2$ normalization to each user's spectral signals in test set and visualize the averaged values. It shows that GSIMC only passes low frequencies (a.k.a., low-pass filtering), while dynamic models are able to learn high-frequency signals. 
	
In Table~\ref{tbl:perturb}, we apply the intra-perturbation to BERT4REC in order to remove spectral content at frequencies greater than $\lambda_{500}$. We see that \textit{perturb}. BERT4REC exhibits a significant drop in performance, which demonstrates the necessity of high-frequency information. Meanwhile, we use BERT4REC to perturb the predictions of GSIMC so as to measure the importance of high frequencies for GSIMC. The fact that BERT4REC \textit{perturb.} GSIMC performs much better than GSIMC and comparable to BERT4REC, indicating that BERT4REC beats GSIMC due to the ability of learning high-frequency information on dynamic graph.
	
\textbf{R2:} {\mname} tends to amplify low-frequency signal and attenuate high-frequency signal as time elapses. We retreat spectral signal for each user as a function of elapsed time (in days), calculate the average of signals over users then draw the evolution of averaged signals over time ($7$-days long) in Fig.~\ref{fig:vf_trends}. It shows for {\mname}, the energy in low frequencies increases in the dimension of elapsed time, while the closely related TGAT rejects low frequency $\lambda_0$ and remains $[\lambda_1,\lambda_{4}]$ unchanged. In the middle, we observe {\mname} is capable of modeling complex evolving dynamics for example, signal at $\lambda_{514}$ vibrates around initial value at the beginning; signal at $\lambda_{806}$ increases until time interval reaches $2$ days long, then drop dramatically. Fig.~\ref{fig:vf_trends} (bottom) also shows that {\mname} attenuates high frequencies $\lambda_{934},\lambda_{952}$. These results suggest that during a near future, {\mname} will recommend niche items to meet customer’s unique and transient needs in an aggressive manner. As time passes, {\mname} is inclined to take a conservative strategy which recommends the items with high click-through rate and high certainty.
	
\textbf{R3:} {\mname} benefits from the above recommendation strategies to achieve better prediction accuracy than TGAT. In Table~\ref{tbl:perturb}, we leverage {\mname} to perturb the predictions of TGAT at both low-frequency ($\le\lambda_{50}$) and high-frequency ($\lambda_{900}-\lambda_{1000}$) bands. We can see that the resulting {\mname} \textit{perturb.} TGAT model achieves the significant improvement, validating the effectiveness of the learned recommendation strategies and our superiority of modelling the complicated temporal dynamics over TGAT.

\section{Conclusion}\label{sec:conclude}
We have presented a pipeline for dynamic graph representation learning, where the TPP-modulated attention networks encode the continuous-time graph with its history, the principled family of loss functions accounts for both prediction accuracy and history fitting, and the quantitative analysis based on scalable spectral perturbations provides a global understanding of the learned models on dynamic graphs. Extensive empirical results have shown the universal applicability and superior performance of our pipeline, which is implemented by the DGL library. 
	

\section*{Acknowledgement}
This research work was partly supported by National Natural Science Foundation of China (62222607) and Shanghai Committee of Science and Technology (22511105100). The authors are thankful to Minjie Wang and David Wipf with AWS Shanghai AI Lab, for their valuable suggestions on the use of DGL toolkit for the implementation of our models.

\bibliographystyle{abbrv}
\bibliography{ref}

\appendices

\section{Proof of Theorem 1}
We provide the proof details of our main result, presented in Theorem~\ref{th:Snorm}. The main procedure follows the seminar work\cite{halko2011finding} and we note that the challenge lies in Lemma.~\ref{lemma:proj_upper} which we believe is new and different from results in\cite{li2010making,kumar2012sampling}.

\subsection{Preliminaries and Existing Results}
For the sake of completeness, we present the definitions and existing results used in the proof.

\begin{definition}
(\textbf{Orthogonal Projector}, \cite{drineas2005nystrom}). 
A matrix $\mathbf{P}$ is called an orthogonal projector if $\mathbf{P}=\mathbf{P}^\top=\mathbf{P}^2$.
\end{definition}

\begin{lemma}\label{lemma:stewart}
As shown in \cite{stewart1990matrix}, the matrices $\mathbf{G}\in\mathbb{R}^{n\times{n}}$ and $\mathbf{F}\in\mathbb{R}^{n\times{n}}$ satisfy, 
\begin{eqnarray}
\max_{1\le{i}\le{n}} 
    \left| \sigma_i(\mathbf{G}) - \sigma_i(\mathbf{F}) \right|
\le& \parallel \mathbf{G} - \mathbf{F} \parallel_2 \\
\sum_{i=1}^n 
    \Big(
        \sigma_i(\mathbf{G}) - \sigma_i(\mathbf{F})
        \Big)^2
\le& \parallel \mathbf{G} - \mathbf{F} \parallel^2_F. 
\end{eqnarray}
\end{lemma}

\begin{theorem}\label{th:mli}
(\textbf{Proposition 1, \cite{halko2011finding}}).
Given a real $l$-by-$l$ matrix $\mathbf{A}$ with eigenvalues $\sigma_{1}\geq\ldots\geq\sigma_{l}$, choose a target rank $k$ and an oversampling parameter $p \geq 2,$ where $k+p\leq{l}$. Draw an $l$-by-$(k+p)$ standard Gaussian matrix $\Omega$, then construct the sample matrix $\mathbf{A}^q\mathbf{\Omega}$ where $q\ge{1}$. The orthonormal basis $\mathbf{Q}$ of matrix $\mathbf{A}^q\mathbf{\Omega}$ (i.e., $\mathbf{A}^q\mathbf{\Omega}=\mathbf{QQ}^\top\mathbf{A}^q\mathbf{\Omega}$) satisfies
\begin{flalign}
\mathbb{E}\parallel(\mathbf{I}-\mathbf{QQ}^\top)\mathbf{A}\parallel_{2} \leq \zeta^{1/q}\sigma_{k+1}(\mathbf{A}),
\end{flalign}where $\zeta=1+\sqrt{\frac{k}{p-1}} + \frac{e\sqrt{k+p}}{p}\sqrt{l-k}$.
\end{theorem}

\begin{theorem}\label{th:halko}
(\textbf{Theorem 10.5, \cite{halko2011finding}}).
Suppose that $\mathbf{A}$ is a real $l$-by-$l$ matrix with eigenvalues $\sigma_{1}\geq\ldots\geq\sigma_{l}$. Choose a target rank $k$ and an oversampling parameter $p \geq 2,$ where $k+p\leq{l}$. Draw an $l\times(k+p)$ standard Gaussian matrix $\Omega$, and construct the sample matrix $\mathbf{A\Omega}$. Then,
\begin{flalign}
\mathbb{E}\parallel(\mathbf{I}-\mathbf{QQ}^\top)\mathbf{A}\parallel_{F}
\leq\left(1+\frac{k}{p-1}\right)^{1/2}\left(\sum_{i>k} \sigma_{i}^{2}\right)^{1/2},
\end{flalign}where $\mathbf{Q}$ is the orthonormal basis of the range of matrix $\mathbf{A\Omega}$ such that $\mathbf{A\Omega}=\mathbf{QQ}^\top\mathbf{A\Omega}$.
\end{theorem}

\begin{theorem}\label{th:kumar}
(\textbf{Corollary 2, \cite{kumar2012sampling}}).
Suppose that $\mathbf{X}$ is a real $d$-by-$n$ matrix. Choose a set $\mathcal{S}$ of size $l$ at random without replacement from $\{1, 2, \dots, n\}$, and let $\mathbf{H}$ equals the columns of $\mathbf{X}$ corresponding to indices in $\mathcal{S}$. Let $\mathbf{HH}^\top$ be an approximation to $\mathbf{XX}^\top$, then
\begin{flalign}
\mathbb{E}
\parallel \mathbf{XX}^\top - \kappa\mathbf{HH}^\top \parallel_F
\leq \frac{n}{\sqrt{l}}  
        \max_i \parallel\mathbf{X}_{*,i}\parallel^2,
\end{flalign}where $\kappa = \frac{n}{l}$ is a non-zero scaling parameter, and $\parallel\mathbf{X}_{*,i}\!\parallel$ is the Euclidean norm of the $i^\mathrm{th}$ column of matrix $\mathbf{X}$.
\end{theorem}

\subsection{Proof of Theorem~\ref{th:Snorm}}
\begin{proof}
Since the graph Laplacian matrix $\mathbf{L}$ is a symmetric positive semidefinite matrix, we can write it as:
\begin{flalign}
    \mathbf{L}=\mathbf{X}^\top\mathbf{X}, \nonumber
\end{flalign}where $\mathbf{X}\in\mathbb{R}^{d\times{n}}$ and $d$ is the rank of matrix $\mathbf{L}$.

Let $\mathbf{S}=\{0, 1\}^{n\times{s}}$ be a column sampling matrix where $\mathbf{S}_{i,j}$ equals to $1$ if the $i^\mathrm{th}$ column of $\mathbf{L}$ is chosen in the $j^\mathrm{th}$ random trial and equals to $0$ otherwise. Then, $\mathbf{C}=\mathbf{X}^\top\mathbf{H}$ and $\mathbf{A}=\mathbf{H}^\top\mathbf{H}$ where $\mathbf{H}=\mathbf{XS}$.

We take $\mathbf{R}=\mathbf{H}\mathbf{A}^{-1/2}\mathbf{Q}$, then 
\begin{flalign}
\widetilde{\mathbf{L}}
&=\mathbf{CA}^{-1/2}\widetilde{\mathbf{V}}\widetilde{\mathbf{V}}^\top\mathbf{A}^{-1/2}\mathbf{X}^\top \nonumber\\
&= \mathbf{X}^\top\mathbf{H}\mathbf{A}^{-1/2}\mathbf{Q}{\mathbf{Q}}^\top\mathbf{A}^{-1/2}\mathbf{H}^\top\mathbf{X}\nonumber\\
&= \mathbf{X}^\top\mathbf{P_RX} = \mathbf{X}^\top\mathbf{U_RU_R}^\top\mathbf{X}, \nonumber
\end{flalign}where $\mathbf{P_R}\!=\!\mathbf{H}\mathbf{A}^{-1/2}\mathbf{Q}{\mathbf{Q}}^\top\mathbf{A}^{-1/2}\mathbf{H}^\top$ is an orthogonal projector and $\mathbf{U_R}$ is the orthonormal basis of matrix $\mathbf{R}$.

To bound the approximate error, we have
\begin{flalign}
&\parallel \mathbf{L} - \widetilde{\mathbf{L}} \parallel_2\nonumber\\
= &\parallel 
    \mathbf{X}^\top\mathbf{X} -
    \mathbf{X}^\top\mathbf{U_RU_R}^\top\mathbf{X} 
\parallel_2\nonumber\\
\stackrel{(a)}{=} &\parallel 
    \mathbf{X}^\top\mathbf{X} -
    (\mathbf{P_RX})^\top\mathbf{P_R}\mathbf{X} 
\parallel_2\nonumber\\
\stackrel{(b)}{=} &\parallel 
    \mathbf{X} -
    \mathbf{U_RU_R}^\top\mathbf{X} 
\parallel^2_2 \nonumber\\
=&\parallel 
    \mathbf{X} -
    \mathbf{X}\mathbf{U_RU_R}^\top
\parallel^2_2\nonumber\\
=& \parallel 
    \mathbf{X}\mathbf{X}^\top -
    \mathbf{X}\mathbf{U_RU_R}^\top\mathbf{X}^\top
\parallel_2,\nonumber
\end{flalign}where (a) holds due to the orthogonal project $\mathbf{P_R}$ satisfying $\mathbf{P_R}=\mathbf{P_R}^\top=\mathbf{P_R}^2$; 
(b) holds due to $\parallel\mathbf{AB}\parallel_2=\parallel\mathbf{BA}\parallel_2$ for any $\mathbf{A}\in\mathbb{R}^{m\times{n}}$ and $\mathbf{B}\in\mathbb{R}^{n\times{m}}$; 

Using Lemma~\ref{lemma:proj_upper} in Sec.~\ref{sec:lemma2},
\begin{flalign}
&\parallel 
    \mathbf{X}\mathbf{X}^\top -
    \mathbf{X}\mathbf{U_RU_R}^\top\mathbf{X}^\top
\parallel^2_2\nonumber\\
\le& \parallel 
    \mathbf{X}\mathbf{X}^\top -
    \kappa\mathbf{R}(\mathbf{HQ})^\top\mathbf{HQ}\mathbf{R}^\top
\parallel_2\nonumber\\
\le& \parallel
    \mathbf{X}\mathbf{X}^\top - \kappa\mathbf{H}^\top\mathbf{H}
\parallel_2 \nonumber\\
&\quad\quad\quad+ \kappa\parallel 
    \mathbf{H}\mathbf{H}^\top -
    \mathbf{R}(\mathbf{HQ})^\top\mathbf{HQ}\mathbf{R}^\top
\parallel_2, \label{eqn:spc_err}
\end{flalign}where the last step holds due to the triangle inequality.

Since $\mathbf{A}^{-1/2}\mathbf{H}^\top\mathbf{H}\mathbf{A}^{-1/2}=\mathbf{I}_s$, it gives
\begin{flalign}
&\parallel\mathbf{HH}^\top - 
    \mathbf{R}(\mathbf{HQ})^\top\mathbf{HQ}\mathbf{R}^\top
    \parallel_2 \nonumber\\
=& \parallel \mathbf{HH}^\top - 
    \mathbf{HA}^{-1/2}\mathbf{Q}(\mathbf{HQ})^\top\mathbf{HQQ}^\top\mathbf{A}^{-1/2}\mathbf{H}^\top
    \parallel_2 \nonumber\\
=& \parallel \mathbf{HA^{-1/2}}
        (\mathbf{A^{1/2}H}^\top - 
    \mathbf{QQ}^\top\mathbf{H}^\top\mathbf{H}\mathbf{A}^{-1/2}\mathbf{H}^\top)
    \parallel_2 \nonumber\\
=& \parallel \mathbf{H}^\top\mathbf{H} -
    \mathbf{QQ}^\top\mathbf{H}^\top\mathbf{H}
    \parallel_2 
= \parallel (\mathbf{I} - \mathbf{QQ}^\top)\mathbf{A} \parallel_2.\nonumber
\end{flalign}

Using Theorem~\ref{th:mli}, we can bound the expected error,
\begin{flalign}
&\kappa\mathbb{E} 
\parallel \mathbf{HH}^\top\!\! - 
    \mathbf{R}(\mathbf{HQ})^\top\mathbf{HQ}\mathbf{R}^\top\!\!
    \parallel_2 \nonumber\\
=& \kappa\mathbb{E}\parallel (\mathbf{I} - \mathbf{QQ}^\top)\mathbf{A} \parallel_2 \nonumber\\ 
\le& \zeta^{1/q}\sigma_{r+1}(\kappa\mathbf{A}) 
= \zeta^{1/q}\sigma_{r+1}(\kappa\mathbf{HH}^\top) \nonumber\\
\le& \zeta^{1/q}\sigma_{r+1}(\mathbf{XX}^\top) + \zeta^{1/q}\parallel
    \mathbf{XX}^\top - \kappa\mathbf{HH}^\top\!\!
\parallel_2, \label{enq:spc_A_err}
\end{flalign}where the last inequality holds because of $\sigma_{r+1}(\kappa\mathbf{HH}^\top) - \sigma_{r+1}(\mathbf{XX}^\top) \le \max_i | \sigma_{i}(\mathbf{XX}^\top) - \sigma_{i}(\kappa\mathbf{HH}^\top) |$ and Lemma~\ref{lemma:stewart}.

Combining Eq. (\ref{eqn:spc_err}) and (\ref{enq:spc_A_err}), we conclude our result
\begin{flalign}
&\mathbb{E} \parallel \mathbf{L} - \widetilde{\mathbf{L}} \parallel_2 \nonumber\\
&\le \zeta^{1/q}\sigma_{r+1}(\mathbf{XX}^\top) 
    + (1 + \zeta^{1/q})\parallel
    \mathbf{XX}^\top - \kappa\mathbf{HH}^\top\!\!
\parallel_2 \nonumber\\
&\le \zeta^{1/q}\parallel 
    \mathbf{L} - \mathbf{L}_r \parallel_2
+ (1 + \zeta^{1/q}) 
    \frac{n}{\sqrt{s}}\mathbf{L}^\ast_{i,i}, \nonumber
\end{flalign}where the last step is due to $\parallel\mathbf{XX}^\top - \kappa\mathbf{HH}^\top\!\!\parallel_2 \le \parallel\mathbf{XX}^\top - \kappa\mathbf{HH}^\top\!\!\parallel_F$ and Theorem~\ref{th:kumar}.
\end{proof}

\subsection{Proof of Lemma~\ref{lemma:proj_upper}}
\label{sec:lemma2}
\begin{lemma}\label{lemma:proj_upper}
Given $\mathbf{X}\in\mathbb{R}^{d\times{n}}$, let $\mathbf{U_R}$ be the orthonormal basis of the range of matrix $\mathbf{R}\in\mathbb{R}^{d\times{s}}$. Then for any $\mathbf{HQ}\in\mathbb{R}^{s\times{s}}$,
\begin{flalign}
\parallel \mathbf{XX}^\top\!\!-& \mathbf{XU_R}(\mathbf{XU_R})^\top\!\!\parallel_2 \nonumber\\
&\le \parallel 
    \mathbf{XX}^\top\!\!- \kappa\mathbf{R}(\mathbf{HQ})^\top\mathbf{HQ}\mathbf{R}^\top \!\!\parallel_2. \nonumber
\end{flalign}where $\kappa = \frac{n}{s}$ is a non-zero scaling parameter.
\end{lemma}
\begin{proof}
Let $\mathbf{P_R}=\mathbf{U_RU_R^\top}$. On using the property of orthogonal projector (i.e., $\mathbf{P_R}=\mathbf{P_R^\top}=\mathbf{P_R}^2$), we have
\begin{flalign}
&\parallel 
    \mathbf{XX}^\top\!\! -
    \mathbf{XU_R}(\mathbf{XU_R})^\top \!\! 
    \parallel_2 \nonumber\\
=& \parallel \mathbf{XX}^\top - \mathbf{XP_R}(\mathbf{XP_R})^\top \parallel_2 \label{eqn:x_norm}\\
=& \parallel \mathbf{X} - \mathbf{P_RX} \parallel^2_2  
= \max_{\parallel\mathbf{v}\parallel=1}
    \parallel
        \mathbf{v}^\top( \mathbf{X} - \mathbf{P_RX} )
    \parallel^2. \nonumber
\end{flalign}

We then decompose the vector $\mathbf{v}$ as $\mathbf{v}=\alpha\mathbf{y} + \beta\mathbf{z}$, where $\mathbf{y}\in\mathrm{ran}(\mathbf{R}),\mathbf{z}\in\mathrm{ran}^{\perp}(\mathbf{R})$ and $\alpha^2 + \beta^2 = 1$. It is clear to see that $\mathbf{y}^\top\mathbf{P_R}=\mathbf{y}^\top$, and $\mathbf{z}^\top\mathbf{P_R}=0$. Thereby,
\begin{flalign}
&\parallel \mathbf{X} 
- \mathbf{P_RX} \parallel_2 \nonumber\\
\le& \max_{\mathbf{y}\in\mathrm{ran}(\mathbf{R}),\parallel\mathbf{y}\parallel=1}
    \parallel
        \mathbf{y}^\top(\mathbf{X} - \mathbf{P_RX})
    \parallel \nonumber\\
&\quad\quad\quad + \max_{\mathbf{y}\in\mathrm{ran}^{\perp}(\mathbf{R}),\parallel\mathbf{z}\parallel=1}
    \parallel
        \mathbf{z}^\top(\mathbf{X} - \mathbf{P_RX}) 
    \parallel \nonumber\\
\le& \max_{\mathbf{z}\in\mathrm{ran}^{\perp}(\mathbf{R}),\parallel\mathbf{z}\parallel=1}
    \parallel
        \mathbf{z}^\top\mathbf{X} 
    \parallel. \label{eqn:x_reform}
\end{flalign}

For $\mathbf{z}\in\mathrm{ran}^{\perp}(\mathbf{R})$, $\mathbf{z}^\top\mathbf{R}(\mathbf{HQ})^\top\mathbf{HQ}\mathbf{R}^\top\mathbf{z}=0$. Then, 
\begin{flalign}
\parallel \mathbf{z}^\top\mathbf{X} \parallel^2
&= \mathbf{z}^\top\mathbf{XX}^\top\mathbf{z} \nonumber\\
&= \mathbf{z}^\top(\mathbf{XX}^\top 
    - \kappa\mathbf{R}(\mathbf{HQ})^\top\mathbf{HQ}\mathbf{R}^\top)\mathbf{z} \nonumber\\
&\le \max_{\parallel\mathbf{z}\parallel=1}
    \mathbf{z}^\top(\mathbf{XX}^\top 
    - \kappa\mathbf{R}(\mathbf{HQ})^\top\mathbf{HQ}\mathbf{R}^\top)\mathbf{z} \nonumber\\
&= \parallel 
        \mathbf{XX}^\top 
        - \kappa\mathbf{R}(\mathbf{HQ})^\top\mathbf{HQ}\mathbf{R}^\top\!\!
    \parallel_2 \label{eqn:x_upper}.
\end{flalign}Combining Eq. (\ref{eqn:x_norm}-\ref{eqn:x_upper}) concludes the lemma.
\end{proof}

\section{Implementation Details}\label{sec:impdetails}
In this section, we present the details of our implementation in order for reproducibility. All experiments are conducted on the machines with Xeon 3175X CPU, 128G memory and RTX8000 GPU with 48 GB memory. The configurations and packages are listed below:
\begin{itemize}
	\item Ubuntu 16.04
	\item CUDA 10.2
	\item Python 3.7 
	\item Tensorflow 1.15.3
	\item Pytorch 1.10
	\item DGL 0.8.2
	\item NumPy 1.19.0 with MKL Intel
\end{itemize}

\subsection{EasyDGL Architectures for Three Tasks on Graph}
\subsubsection{Dynamic Link Prediction:}
\begin{itemize}
    \item Use maximum sequence length to $30$ with the masked probability $0.2$.
	\item Two-layer Attention-Intensity-Attention with two heads.
	\item Use \textit{ReLU} as the activation.
	\item Use inner product between user embedding and item embedding as ranking score. 
\end{itemize}

\subsubsection{Dynamic Node Classification}
\begin{itemize}
    \item Randomly mask graph nodes with probability $0.2$.
	\item Two-layer \textit{GATConv} and one-layer Attention-Intensity-Attention block with two heads.
	\item Use \textit{ReLU} as the activation.
	\item Use one-layer \textit{Linear} for multi-class prediction. 
\end{itemize}

\subsubsection{Traffic Forecasting}
\begin{itemize}
    \item Randomly mask graph nodes with probability $0.2$.
	\item Two-layer \textit{SAGEConv} and one-layer Attention-Intensity-Attention with eight heads.
	\item Use \textit{ReLU} as the activation.
	\item Use one-layer \textit{Linear} for prediction. 
\end{itemize}

\subsection{Baseline Architectures for Dynamic Link Prediction}
\textbf{As mentioned, we follow IDCF\cite{wu2021towards} to build typical GNN architectures. Here we introduce the details for them.}

{\bf GAT.} We use the \textit{GATConv} layer available in DGL for implementation. The detailed architecture description is as below:
\begin{itemize}
	\item A sequence of one-layer  \textit{GATConv} with four heads. 
	\item Add self-loop and use batch normalization for graph convolution in each layer.
	\item Use \textit{tanh} as the activation.
	\item Use inner product between user embedding and item embedding as ranking score. 
\end{itemize}

{\bf GraphSAGE.} We use the \textit{SAGEConv} layer available in DGL for implementation. The detailed architecture description is as below:
\begin{itemize}
	\item A sequence of two-layer \textit{SAGEConv}.
	\item Add self-loop and use batch normalization for graph convolution in each layer.
	\item Use \textit{ReLU} as the activation.
	\item Use inner product between user embedding and item embedding as ranking score. 
\end{itemize}

{\bf GCN.} We use the \textit{SGConv} layer available in DGL for implementation. The detailed architecture description is as below:
\begin{itemize}
	\item One-layer \textit{SGConv} with two hops.
	\item Add self-loop and use batch normalization for graph convolution in each layer.
	\item Use \textit{ReLU} as the activation.
	\item Use inner product between user embedding and item embedding as ranking score. 
\end{itemize}

{\bf ChebyNet.} We use the \textit{ChebConv} layer available in DGL for implementation. The detailed architecture description is as below:
\begin{itemize}
	\item One-layer \textit{ChebConv} with two hops.
	\item Add self-loop and use batch normalization for graph convolution in each layer.
	\item Use \textit{ReLU} as the activation.
	\item Use inner product between user embedding and item embedding as ranking score. 
\end{itemize}

{\bf ARMA.} We use the \textit{ARMAConv} layer available in DGL for implementation. The detailed architecture description is as below:
\begin{itemize}
	\item One-layer \textit{ARMAConv} with two hops.
	\item Add self-loop and use batch normalization for graph convolution in each layer.
	\item Use \textit{tanh} as the activation.
	\item Use inner product between user embedding and item embedding as ranking score. 
\end{itemize}

{\bf We also summarize the implementation details of the compared sequential and temporal baselines as follows.}

{\bf GRU4REC.}\footnote{\textcolor{blue}{https://github.com/hidasib/GRU4Rec}} We use the software provided by the authors for experiments. The detailed architecture description is as below:
\begin{itemize}
	\item A sequence of two GRU cells.
	\item Use maximum sequence length to $30$.
	\item Use inner product between user embedding and item embedding as ranking score. 
\end{itemize}

{\bf SASREC.}\footnote{\textcolor{blue}{https://github.com/kang205/SASRec}} We use the software provided by the authors for experiments. The detailed architecture description is as below:
\begin{itemize}
	\item A sequence of two-block Transformer with four heads on Koubei, eight heads on Tmall and eight head on Netflix.
	\item Use maximum sequence length to $30$.
	\item Use inner product between user embedding and item embedding as ranking score. 
\end{itemize}

{\bf GREC.}\footnote{\textcolor{blue}{https://github.com/fajieyuan/WWW2020-grec}}
We use the software provided by the authors for experiments. The detailed architecture description is as below:
\begin{itemize}
	\item A sequence of six-layer dilated CNN with degree $1,2,2,4,4,8$.
	\item Use maximum sequence length to $30$ with the masked probability $0.2$.
	\item Use inner product between user embedding and item embedding as ranking score. 
\end{itemize}

{\bf S2PNM.}\footnote{\textcolor{blue}{https://github.com/cchao0116/S2PNM-TKDE2022}}
We use the software provided by the authors for experiments. The detailed architecture description is as below:
\begin{itemize}
	\item A sequence of one-block GRU-Transformer.
	\item Use maximum sequence length to $30$.
	\item Use inner product between user embedding and item embedding as ranking score. 
\end{itemize}

{\bf BERT4REC.}\footnote{\textcolor{blue}{https://github.com/FeiSun/BERT4Rec}}
We use the software provided by the authors for experiments. The detailed architecture description is as below:
\begin{itemize}
	\item A sequence of three-block Transformer with eight heads.
	\item Use maximum sequence length to $30$ with the masked probability $0.2$.
	\item Use inner product between user embedding and item embedding as ranking score. 
\end{itemize}

{\bf DyREP.}\footnote{\textcolor{blue}{https://github.com/uoguelph-mlrg/LDG}}
We use the software provided by the third party for experiments. The detailed architecture description is as below:
\begin{itemize}
	\item A sequence of one Attention-RNN Layer.
	\item Use maximum sequence length to $30$.
	\item Use linear layer of user embedding and item embedding with softplus activation as ranking score. 
\end{itemize}

{\bf TGAT.}\footnote{\textcolor{blue}{https://github.com/StatsDLMathsRecomSys/Inductive-representation-learning-on-temporal-graphs}}
We use the software provided by the authors for experiments. The detailed architecture description is as below:
\begin{itemize}
	\item A sequence of three-block Transformer with one head, time sinusoidal embeddings.
	\item Use maximum sequence length to $30$.
	\item Use inner product between user embedding and item embedding as ranking score. 
\end{itemize}

{\bf TiSASREC.}\footnote{\textcolor{blue}{https://github.com/JiachengLi1995/TiSASRec}}
We use the software provided by the authors for experiments. The detailed architecture description is as below:
\begin{itemize}
	\item A sequence of two-block Transformer with eight heads with time embedding.
	\item Use maximum sequence length to $30$.
	\item Use inner product between user embedding and item embedding as ranking score. 
\end{itemize}

{\bf TGREC.}\footnote{\textcolor{blue}{https://github.com/DyGRec/TGSRec}}
We use the software provided by the authors for experiments. The detailed architecture description is as below:
\begin{itemize}
	\item A sequence of three-block Transformer with time sinusoidal embeddings.
	\item Use maximum sequence length to $30$.
	\item Use inner product between user embedding and item embedding as ranking score. 
\end{itemize}

{\bf TimelyREC.}\footnote{\textcolor{blue}{https://github.com/Junsu-Cho/TimelyRec}}
We use the software provided by the authors for experiments. The detailed architecture description is as below:
\begin{itemize}
	\item A sequence of one-block Attention-Attention.
	\item Use maximum sequence length to $30$.
	\item Use inner product between user embedding and item embedding as ranking score. 
\end{itemize}

{\bf CTSMA.}\footnote{\textcolor{blue}{https://github.com/cchao0116/CTSMA-ICML21}}
We use the software provided by the authors for experiments. The detailed architecture description is as below:
\begin{itemize}
	\item A sequence of two-block Transformer with four heads.
	\item Use maximum sequence length to $30$.
	\item Use inner product between user embedding and item embedding as ranking score. 
\end{itemize}

\reviewerone{
{\bf TiCoSeREC.}\footnote{\textcolor{blue}{https://github.com/KingGugu/TiCoSeRec}}
We use the software provided by the authors for experiments. The detailed architecture description is as below:
\begin{itemize}
	\item A sequence of two-block Transformer with two heads.
	\item Use maximum sequence length to $30$.
	\item Use inner product between user embedding and item embedding as ranking score. 
\end{itemize}}

\reviewerone{
{\bf PTGCN.}\footnote{\textcolor{blue}{https://github.com/drhuangliwei/PTGCN}}
We use the software provided by the authors for experiments. The detailed architecture description is as below:
\begin{itemize}
	\item A sequence of two-block Transformer with four heads.
	\item Use maximum sequence length to $30$.
	\item Use inner product between user embedding and item embedding as ranking score. 
\end{itemize}}

\reviewerone{
{\bf NeuFilter.}\footnote{\textcolor{blue}{https://github.com/Yaveng/NeuFilter}}
We use the software provided by the authors for experiments. The detailed architecture description is as below:
\begin{itemize}
	\item Two GRU cells separately for item and user sequence.
	\item Use maximum sequence length to $30$.
	\item Concatenate user embeddings and item embeddings as the input of an multi-layer network which outputs ranking score. 
\end{itemize}}

\subsection{Baseline Architectures for Dynamic Node Classification}
The configurations for static graph models are identical to the dynamic link prediction task except the decoder module that is replaced by one-layer \textit{Linear}. 

\textbf{In the following, we present the architectures for new dynamic graph models.}

{\bf DySAT.}\footnote{\textcolor{blue}{https://github.com/aravindsankar28/DySAT}}
We use the software provided by the authors for experiments. The detailed architecture description is as below:
\begin{itemize}
    \item Use past five graph snapshots as input.
	\item A sequence of three-layer \textit{GATConv} and one-layer TemporalAttention with two heads.
	\item Use \textit{ReLU} as the activation.
	\item Use one-layer \textit{Linear} for multi-class prediction.
\end{itemize}

{\bf EvolveGCN.}\footnote{\textcolor{blue}{https://github.com/IBM/EvolveGCN}}
We use the software provided by the authors for experiments. The detailed architecture description is as below:
\begin{itemize}
    \item Use past five graph snapshots as input.
	\item One-layer gated GRUCell to update the parameters of two-layer \textit{GCNConv}.
	\item Use \textit{ReLU} as the activation.
	\item Use one-layer \textit{Linear} for multi-class prediction.
\end{itemize}

{\bf JODIE.}\footnote{\textcolor{blue}{https://github.com/claws-lab/jodie}}
We use the software provided by the authors for experiments. The detailed architecture description is as below:
\begin{itemize}
    \item Use past five graph snapshots as input.
	\item One-layer gated GRUCell to update the hidden states read out from two-layer TemporalTransformer.
	\item Use \textit{ReLU} as the activation.
	\item Use one-layer \textit{Linear} for multi-class prediction.
\end{itemize}

{\bf TGN.}\footnote{\textcolor{blue}{https://github.com/twitter-research/tgn}}
We use the software provided by the authors for experiments. The detailed architecture description is as below:
\begin{itemize}
    \item Use past five graph snapshots as input with sinusoidal time embedding. 
	\item Three-layer TemporalTransformer with five heads.
	\item One-layer time-decayed Recurrent unit to sequentially update node embeddings over time.
	\item Use \textit{ReLU} as the activation.
	\item Use one-layer \textit{Linear} for multi-class prediction.
\end{itemize}

\reviewerone{
{\bf RoLAND.}\footnote{\textcolor{blue}{https://github.com/snap-stanford/roland}}
We use the software provided by the authors for experiments. The detailed architecture description is as below:
\begin{itemize}
	\item Two-layer node embedding updation modules. In each layer, we first adopt the vanilla \textit{message-passing} to obtain preliminary node embeddings. Then, these preliminary node embeddings and the node embeddings output from the previous layer are both input into a one-layer \textit{GRU} to obtain the final node embedding for this layer.
	\item Use \textit{PReLU} as the activation.
	\item Use a \textit{MLP} implemented by two-layer \textit{Linear} for multi-class prediction.
\end{itemize}
}

\reviewerone{
{\bf DEFT.}\footnote{\textcolor{blue}{https://github.com/ansonb/DEFT}}
We use the software provided by the authors for experiments. The detailed architecture description is as below:
\begin{itemize}
    \item Use past five graph snapshots as input. 
	\item Two-layer \textit{ChebConv} with extra spectral filtering.
	\item Use \textit{RReLU} as the activation.
	\item Use a \textit{MLP} implemented by two-layer \textit{Linear} for multi-class prediction.
\end{itemize}
}

\reviewerone{
{\bf SpikeNet.}\footnote{\textcolor{blue}{https://github.com/EdisonLeeeee/SpikeNet}} 
We use the software provided by the authors for experiments. The detailed architecture description is as below:
\begin{itemize}
    \item Use all past graph snapshots as input.
    \item Wwo-layer \textit{GraphSage} with temporal neighborhood sampler on graphs.
    \item Use \textit{LIF} (Leaky integrate-and-ﬁre) as the activation.
    \item Use one-layer \textit{Linear} for multi-class prediction. 
\end{itemize}
}

\subsection{Baseline Architectures for Traffic Forecasting}
The configurations for static graph models are identical to the dynamic link prediction task except the decoder module that is replaced by one-layer \textit{Linear}.

\textbf{In the following, we present the architectures for new dynamic graph models.}

{\bf DCRNN.}
We use the software provided in DGL library. The detailed architecture description is as below:
\begin{itemize}
    \item Use past twelve graph snapshots as input.
    \item One-layer \textit{ChebConv} with two hops.
    \item One-layer \textit{GraphRNN} to sequential update the node embeddings over time.
    \item Use \textit{ReLU} as the activation.
    \item Use one-layer \textit{Linear} for prediction.
\end{itemize}

{\bf GaAN.}
We use the software provided in DGL library. The detailed architecture description is as below:
\begin{itemize}
    \item Use past twelve graph snapshots as input.
    \item One-layer gated graph attention that considers the edge weights.
    \item One-layer \textit{GraphRNN} to sequential update the node embeddings over time.
    \item Use \textit{ReLU} as the activation.
    \item Use one-layer \textit{Linear} for prediction.
\end{itemize}

{\bf STGCN.}
We use the software provided in DGL library. The detailed architecture description is as below:
\begin{itemize}
    \item Use past twelve graph snapshots as input.
	\item A sequence of TNTSTNTST, where T, N, S represents temporal convolutional neural network, \textit{LayerNorm} and spatial graph convolutional neural network, respectively.
	\item Use \textit{ReLU} as the activation.
	\item Use one-layer \textit{Linear} for prediction.
\end{itemize}

{\bf DSTAGNN.}\footnote{\textcolor{blue}{https://github.com/SYLan2019/DSTAGNN}}
We use the software provided by the authors for experiments. The detailed architecture description is as below:
\begin{itemize}
    \item Use past twelve graph snapshots as input.
	\item Four-layer DSTAGNN with four heads each of which uses \textit{ChebConv} with three hops.
	\item Use \textit{ReLU} as the activation.
	\item Use \textit{Conv2d} and \textit{Linear} for prediction.
\end{itemize}

\reviewerone{
{\bf Trafformer.}\footnote{\textcolor{blue}{https://github.com/jindi-tju/Trafformer}}
We use the software provided by the authors for experiments. The detailed architecture description is as below:
\begin{itemize}
    \item Use past twelve graph snapshots as input.
	\item Two-layer Transformer-style encoder with two heads.
        \item One-layer Transformer-style decoder with two heads.
	\item Use \textit{GeLU} as the activation.
	\item Use one-layer \textit{Linear} for prediction.
\end{itemize}
}

\reviewerone{
{\bf TrendGCN.}\footnote{\textcolor{blue}{https://github.com/juyongjiang/TrendGCN}}
We use the software provided by the authors for experiments. The detailed architecture description is as below:
\begin{itemize}
    \item Use past twelve graph snapshots as input.
        \item Two-layer \textit{Linear} discriminator with \textit{LeakyReLU} which focuses on the trend of individual time series.
        \item Two-layer \textit{Linear} discriminator with \textit{LeakyReLU} which emphasizes the correlation of multivariate time series.
        \item Two-layer \textit{ChebConv} and two-layer \textit{GRU}.
	\item Use one-layer \textit{Linear} for prediction.
 
\end{itemize}
}

\reviewerone{
{\bf STID}\footnote{\textcolor{blue}{https://github.com/zezhishao/STID/}}
We use the software provided by the authors for experiments. The detailed architecture description is as below:
\begin{itemize}
    \item Use past twelve graph snapshots as input.
        \item proposes aa framework with a embedding layer (by \textit{Conv2d}), multiple MLP layers, and a regression layer.
        \item Use \textit{ReLU} as the activation.
        \item Use \textit{Linear} for prediction.
\end{itemize}
}

\subsection{Choice of Hyper-parameters}
Regarding the choice of optimizer, we use Adam \cite{kingma2015adam} if not specified and the number of epochs is $200$. We search by grid the embedding size ranging in $\{64,128,\dots,512\}$, learning rate $\{1e\!-\!5,1e\!-\!4,\dots,1e\!-\!1\}$, dropout rate $\{0.1,0.2,\dots,0.7\}$, batch size $\{64,128,\dots,512\}$ and $\ell_2$ regularizer $\{1e\!-\!5,1e\!-\!4,\dots,1e\!-\!1\}$. We also study the influence of the neighborhood hops ranging from $1$ to $3$, the number of graph filtering blocks from $1$ up to $4$ and the number of heads in $\{1,2,\dots,8\}$.

We warn that we set the embedding size to $32$ for fair comparisons when evaluating the performance on the Ellicit and META-LA datasets. This is because TGN, DCRNN and DSTAGNN take days to complete the training if embedding size is greater than 64.

With regard to EasyDGL, we search by grid the masking rate in $\{10\%,20\%,\dots,50\%\}$ where, in majority of cases, $20\%$ produces the best results. We also search the best parameter for the TPPLE term in $\{1e\!-\!7,1e\!-\!6,\dots,1e\!-\!3\}$.

\section{Dataset Processing}
We present more dataset details in this section.

\subsection{Time Scaling}
We use one hour, one day and one week to scale the time data on the Netflix, Tmall and Koubei datasets, respectively. The choice of time unit is determined by the averaged time between two consecutive events for each user. We warn that again the time is discrete on the Elliptic data and the traffic speed readings on META-LA are record every five minutes. For both of these two datasets, we apply no modifications to the time data.

\subsection{Random Seed}
We use five random seeds to yield different data splits, i.e., $12345$, $54321$, $56789$, $98765$ and $7401$.

\subsection{Normalization on the META-LA data}
We calculate the mean and the standard deviation of the training readings on each road (node). When making predictions, we use these quantities to scale down the input readings and scale up the output readings. We found by experiments that this treatment can significantly reduce the RMSE and MAPE errors.

\end{document}